\let\oldnl\nl
\newcommand{\nonl}{\renewcommand{\nl}{\let\nl\oldnl}}
\newtheorem{theorem}{Theorem}[section]
\newtheorem{lemma}[theorem]{Lemma}
\newtheorem{definition}{Definition}[section]
\begin{document}

\title{\bf Differentially Private One Permutation Hashing and\\
Bin-wise Consistent Weighted Sampling}

\author{\\\vspace{0.3in}\\\vspace{0.1in}
  \textbf{Xiaoyun Li, \ \  Ping Li} \\
  LinkedIn Ads\\
  700 Bellevue Way NE, Bellevue, WA 98004, USA\\
  \texttt{ \{xiaoyli, pinli\}@linkedin.com} 
}

\date{\vspace{0.3in}}
\maketitle

\begin{abstract}\vspace{0.2in}
\noindent Minwise hashing (MinHash) is a standard algorithm widely used in the industry, for large-scale search and learning applications with the binary (0/1) Jaccard similarity. One common  use of MinHash is for processing massive n-gram text representations so that practitioners do not have to materialize the original data (which would be prohibitive). Another popular use of MinHash is for building hash tables to enable sub-linear time approximate near neighbor (ANN) search.  MinHash has also been used as a tool for building large-scale machine learning systems. The standard implementation of MinHash requires applying $K$ random permutations. In comparison, the method of one permutation hashing (OPH, developed in 2012), is an  efficient alternative of MinHash which splits the data vectors  into $K$ bins and generates hash values within each bin. OPH is substantially more efficient and also more convenient to use. In many practical scenarios, practitioners can simply store one perfect random permutation as opposed to storing $K$ permutations or resorting to approximate random permutations such as universal hashing. 

\vspace{0.2in}
\noindent In this paper, we combine the differential privacy (DP) with OPH (as well as MinHash), to propose the DP-OPH framework with three variants: DP-OPH-fix, DP-OPH-re and DP-OPH-rand, depending on which  densification strategy is adopted to deal with empty bins in OPH. A detailed roadmap to the algorithm design is presented along with the privacy analysis. An analytical comparison of our proposed DP-OPH methods with the DP minwise hashing (DP-MH)  is provided to justify the advantage of DP-OPH. Experiments on similarity search confirm the merits of  DP-OPH, and guide the choice of the proper variant in different practical scenarios. Our technique is also extended to bin-wise consistent weighted sampling (BCWS) to develop a new DP algorithm called DP-BCWS for non-binary data. Experiments on classification tasks (logistic regression and neural nets) demonstrate that DP-BCWS is able to achieve excellent utility at around $\epsilon = 5\sim 10$, where $\epsilon$ is the standard  parameter in the language of $(\epsilon, \delta)$-DP. 

\end{abstract}

\newpage

\section{Introduction}

Let $\bm u, \bm v\in\{0,1\}^D$ be two $D$-dimensional binary vectors. In this paper, we first focus on  hashing algorithms for the binary Jaccard similarity (a.k.a. the ``resemblance'') defined as 
\begin{equation} \label{def:jaccard}
    J(\bm u,\bm v)=\frac{\sum_{i=1}^D \mathbbm 1\{u_i=v_i=1\}}{\sum_{i=1}^D \mathbbm 1\{u_i+v_i\geq 1\}}.
\end{equation}
This is a widely used similarity metric in machine learning and web applications. For example, the vectors $\bm u$ and $\bm v$ can be viewed as two sets of items represented by the locations of non-zero entries. In industrial applications with massive data size, directly calculating the pairwise Jaccard similarity among the data points becomes too costly as the sample size grows. To accelerate large-scale search and learning, the celebrated ``minwise hashing'' (MinHash) algorithm~\citep{broder1997resemblance,broder1997syntactic,broder1998min,li2005using,li2010b} is a standard hashing technique for approximating the Jaccard similarity in massive binary datasets. It has seen numerous applications such as near neighbor search, duplicate detection, malware detection, clustering, large-scale learning, social networks, and computer vision~\citep{charikar2002similarity,fetterly2003large,das2007google,buehrer2008scalable,bendersky2009finding,chierichetti2009compressing,pandey2009nearest,Proc:Lee_ECCV10,deng2012efficient,chum2012fast,he2013k,tamersoy2014guilt,shrivastava2014defense,zhu2017interactive,nargesian2018table,wang2019memory,lemiesz2021algebra,feng2021allign,jia2021bidirectionally}. 

The outputs of MinHash are integers. For large-scale applications, to store and use the hash values more conveniently and efficiently, \cite{li2010b} proposed $b$-bit MinHash that only stores the lowest $b$ bits of the hashed integers, which is memory-efficient and convenient for similarity search and machine learning. In recent years, $b$-bit MinHash has become a standard implementation of MinHash~\citep{li2011hashing,li2015using,shah2018on,yu2020hyperminhash}. In addition, we would like to mention ``circulant MinHash'' (C-MinHash)~\citep{li2022c} which, by using only one permutation circularly, improved the hashing efficiency and, perhaps surprisingly, also improved the estimation accuracy. However, the mechanism behind C-MinHash is different from (or even orthogonal to) what is commonly known as the ``one permutation hashing'' (OPH).

\subsection{One Permutation Hashing (OPH)}

To use MinHash in practice, we need to generate $K$ hash values to achieve good utility. This requires applying $K$ random permutations (or hash functions as approximations) per data point, yielding an $O(Kf)$ complexity where $f$ is the average number of non-empty entries of the data vectors. The method of one permutation hashing (OPH)~\citep{li2012one} provides a  way to significantly reduce the complexity to $O(f)$. The idea of OPH is: to generate $K$ hashes, we split the data vector into $K$ non-overlapping bins, and conduct MinHash within each bin. However, since the data are sparse, some bins might be empty. To deal with empty bins, various densification schemes~\citep{shrivastava2014densifying,shrivastava2014improved, shrivastava2017optimal,li2019re,jia2021bidirectionally} have been proposed that fill the empty bins (e.g., with values from  non-empty bins). It is shown that OPH with densification also provides unbiased Jaccard similarity estimation, and interestingly the estimation variance of certain densification algorithms might be even smaller than that of the standard MinHash. Owing to its much improved efficiency, OPH has been widely used as an alternative in practice for replacing MinHash~\citep{dahlgaard2017practical,zhao2020distributed,tseng2021parallel,jiang2022massive}. We also note that, OPH can be further improved by adopting the idea of ``circulation'', to become the ``C-OPH'' algorithm~\citep{li2021c-oph}.

\subsection{Non-binary Data and Bin-wise Consistent Weighted Sampling (BCWS)}

MinHash and OPH  handle binary data (unweighted sets). In the literature, the weighted Jaccard similarity is a popular extension to the real-valued data (weighted sets), defined as
\begin{align*}
    J_w(\bm u, \bm v)=\frac{\sum_{i=1}^D\min\{u_i,v_i\}}{\sum_{i=1}^D\max\{u_i,v_i\}},
\end{align*}
where $\bm u,\bm v\in\mathbb R_+^D$ be two non-negative data vectors. Since weighted data usually contains richer information than binary data, the weighted Jaccard  has also been studied and used in many areas including theory, databases, machine learning, and information retrieval~\citep{kleinberg1999approximation,charikar2002similarity,fetterly2003large,gollapudi2009axiomatic,bollegala11a2011web,delgado2014data,schubert2014signitrend,wang2014learning,fu2015design,pewny2015cross,manzoor2016fast,raff2017alternative,tymoshenko2018cross,bag2019efficient,pouget2019variance,yang2019nodesketch,zhu2019josie,fuchs2020intent,lei2020locality,li2022p,zheng2023building}. Particularly, the weighted Jaccard similarity has been shown to be a powerful non-linear kernel in large-scale classification and regression~\citep{li2017linearized}. It may outperform the popular RBF kernel on many tasks/datasets. 

Correspondingly, large-scale hashing algorithms have also been developed for the efficient estimation/approximation of the weighted Jaccard similarity. A series of works~\citep{kleinberg1999approximation, charikar2002similarity,shrivastava2016simple,ertl2018bagminhash,li2021rejection} proposed and refined the hashing algorithm using the rejection sampling technique which is efficient on dense data vectors. \citet{gollapudi2006exploiting,manasse2010consistent,ioffe2010improved} developed Consistent Weighted Sampling (CWS, Algorithm~\ref{alg:CWS}), which has $O(Kf)$ complexity same as that of MinHash and runs efficiently on relatively sparse data. \citet{li2021consistent} proposed Extremal Sampling (ES) built upon the extremal stochastic process. \citet{li2019re} extended the ``binning + densification'' idea of OPH to CWS, and proposed Bin-wise Consistent Weighted Sampling (BCWS) with $O(f)$ complexity, i.e., roughly $K$-fold speedup compared to standard CWS.

\subsection{Hashing/Sketching and Differential Privacy}

At a higher level, MinHash, OPH and BCWS all belong to the broad family of hashing/sketching methods which generate sketches for data samples that are designed for various applications. Examples of  sketching methods include the random projection (RP) based methods for estimating the $l_2$ distance, the inner product/cosine and their non-linear transformations~\citep{goemans1995improved, charikar2002similarity,datar2004locality, vempala2005random,rahimi2007random,li2014coding,li2017linearized,li2022signrff}, the count-sketch (CS) for frequency estimation~\citep{charikar2004finding}, and the Flajolet-Martin (FM) sketch~\citep{flajoletM1985probabilistic} and HyperLogLog sketch~\citep{flajolet2007hyperloglog} for cardinality estimation, etc. Since the data sketches produce ``summaries'' of the data which contain the original data information, protecting the privacy of the data sketches becomes an important topic that has gained growing research interest in recent years. 

The differential privacy (DP)~\citep{dwork2006calibrating} has become a standard privacy definition with rigorous mathematical formulation, which is widely applied to clustering, regression and classification, principle component analysis, matrix completion, optimization and deep learning~\citep{blum2005practical,chaudhuri2008privacy,feldman2009private,gupta2010differentially,chaudhuri2011differentially,kasiviswanathan2012analyzing,zhang2012functional,abadi2016deep,agarwal2018cpsgd,ge2018minimax,wei2020federated,dong2022gaussian,fan2022private}, etc. Moreover, prior efforts have also been conducted to combine differential privacy with the aforementioned hashing algorithms, including RP~\citep{blocki2012johnson,kenthapadi2013privacy,stausholm2021improved,li2023differential}, count-sketch~\citep{zhao2022differentially}, and FM sketch~\citep{smith2020flajolet,dickens2022order}. Recently, \citet{aumuller2020differentially} proposed DP  for MinHash. Although their design does not satisfy DP,  we will make an important (and technically minor) fix for their method. The revised method is named DP-MH and will be compared with DP-OPH, our proposed DP algorithms for OPH.

Among those DP algorithms for sketching/hashing, some works (e.g., \citet{blocki2012johnson,smith2020flajolet,dickens2022order}) assumed ``internal randomness'', i.e., the randomness of the hash functions are kept private, and showed that some hashing methods themselves already possess strong DP property, when there is a sufficient number of elements in the data vector (e.g., for the FM sketch) or the data matrix has sufficiently large eigenvalues. However, this setting is more restrictive in practice since it requires that the hash keys or projection matrices cannot be accessed by any adversary. In another setup (e.g.,~\citet{kenthapadi2013privacy,aumuller2020differentially,stausholm2021improved,zhao2022differentially,li2023differential}), both the randomness of the hash functions and the algorithm outputs are treated as public information, and perturbation mechanisms are developed to make the corresponding hashing algorithms differentially private.

\subsection{Our Contributions}

In this paper, we mainly focus on the differential privacy of one permutation hashing (OPH), for hashing the binary Jaccard similarity. We consider the  practical and general setup where the randomness (e.g., the permutation) of the algorithm is ``external'' and public. We develop three variants under the DP-OPH framework, DP-OPH-fix, DP-OPH-re, and DP-OPH-rand, corresponding to fixed densification, re-randomized densification, and random-bit densification for OPH, respectively. We provide detailed algorithm design and privacy analysis for each variant, and compare them with the DP MinHash (DP-MH) method. In our retrieval experiments, we show that the proposed DP-OPH method substantially improves DP-MH, and re-randomized densification (DP-OPH-re) is superior to fixed densification (DP-OPH-fix) in terms of differential privacy. The DP-OPH-rand variant performs the best when $\epsilon$ is small, while DP-OPH-re is the strongest competitor in the larger $\epsilon$ region. Here, $\epsilon$ is the standard parameter in the $(\epsilon,\delta)$-DP language. Moreover, we also empirically evaluate an extension of our method to the weighted Jaccard similarity called DP-BCWS for non-binary data. Experiments on classification demonstrate the effectiveness of the proposed methods in machine learning tasks under privacy constraints.



\section{Background on MinHash, $b$-bit Coding, and Differential Privacy}

\begin{algorithm}[h]
	\textbf{Input:} Binary vector $\bm u\in\{0,1\}^D$; number of hash values $K$

	\textbf{Output:} $K$ MinHash values $h_1(\bm u),...,h_K(\bm u)$
	
\begin{algorithmic}[1]	

	\State Generate $K$ independent permutations $\pi_1,...,\pi_K$: $[D]\rightarrow[D]$ with seeds $1,...,K$ respectively
	
	\For{ $k=1$ to $K$}

	\State $h_k(\bm u) \leftarrow \min_{i:u_i\neq 0} \pi_k(i)$

	\EndFor
\end{algorithmic}
\caption{Minwise hashing (MinHash)}
\label{alg:MinHash}
\end{algorithm}

\noindent\textbf{Minwise hashing (MinHash).} The algorithm is summarized in Algorithm~\ref{alg:MinHash}. We first generate $K$ independent permutations  $\pi_1,...,\pi_K:[D]\mapsto[D]$, where the seeds ensure that all data vectors use the same set of permutations. Here, $[D]$ denotes $\{1,...,D\}$. For each permutation, the hash value is simply the first non-zero location in the permuted vector, i.e., $h_k(\bm u)=\min_{i: v_i\neq 0} \pi_k(i)$, $\forall k=1,...,K$. Analogously, for another data vector $\bm v\in\{0,1\}^D$, we also obtain $K$ hash values, $h_k(\bm v)$.

\newpage

\noindent The MinHash estimator of $J(\bm u, \bm v)$ is the average over the hash collisions:
\begin{align}\label{MH-estimator}
    \hat J_{MH}(\bm u,\bm v)=\frac{1}{K}\sum_{k=1}^K \mathbbm 1\{h_k(\bm u)=h_k(\bm v)\},
\end{align}
where $\mathbbm 1\{\cdot\}$ is the indicator function. By some standard probability calculations, we can show that
\begin{align*}
    \mathbb E[\hat J_{MH}]=J,\hspace{0.4in} Var[\hat J_{MH}]=\frac{J(1-J)}{K}.
\end{align*}

\vspace{0.2in}
\noindent\textbf{$b$-bit coding of the hash value.}  \cite{li2010b} proposed ``$b$-bit minwise hashing'' as a convenient coding strategy for the integer hash value $h(\bm u)$ generated by MinHash (or by OPH which will be introduced later). Basically, we only keep the lowest $b$ bits of each hash value. In our analysis, for convenience, we assume that ``taking the lowest $b$-bits'' can be achieved by some ``rehashing'' trick to map the integer values onto $\{0,...,2^b-1\}$ uniformly. There are at least three benefits of this coding strategy: (i) storing only $b$ bits saves the storage cost compared with storing the full 32 or 64 bit values; (ii) the lowest few bits are more convenient for the purpose of indexing, e.g., in approximate nearest neighbor search~\citep{indyk1998approximate,shrivastava2012fast,shrivastava2014defense}; (iii) we can transform the lowest few bits into a positional representation, allowing us to approximate the Jaccard similarity by the inner product of the hashed data, which is required by training large-scale linear models~\citep{li2011hashing}. In this work, we will adopt this $b$-bit coding strategy in our private algorithm design.

\vspace{0.2in}
\noindent\textbf{Differential privacy (DP).} Differential privacy (DP) provides a rigorous mathematical definition on the privacy of a randomized algorithm. Its formal definition is as follows.

\begin{definition}[Differential privacy~\citep{dwork2006calibrating}] \label{def:DP}
For a randomized algorithm $\mathcal M:\mathcal U\mapsto Range(\mathcal M)$ and $\epsilon,\delta\geq 0$, if for any two neighboring data $u$ and $u'$, it holds that
\begin{equation} \label{eq:DP-def}
    Pr[\mathcal M(u)\in Z] \leq e^\epsilon Pr[\mathcal M(u')\in Z]+\delta
\end{equation}
for $\forall Z\subset Range(\mathcal M)$,
then algorithm $\mathcal M$ is said to satisfy $(\epsilon,\delta)$-differentially privacy. If $\delta=0$, $\mathcal M$ is called $\epsilon$-differentially private.
\end{definition}

Intuitively, DP requires that the distributions of the outputs before and after a small change in the data are similar such that an adversary cannot detect the change based on the outputs with high confidence. Smaller $\epsilon$ and $\delta$ implies stronger privacy. The parameter $\delta$ is usually interpreted as the ``failure probability'' allowed for the $\epsilon$-DP guarantee to be violated. In our work, we follow the standard definition in the DP literature of ``neighboring'' for binary data vectors, e.g.,~\citet{dwork2006calibrating,kenthapadi2013privacy,xu2013differentially,dwork2014algorithmic,smith2020flajolet,stausholm2021improved,dickens2022order,zhao2022differentially,li2023differential}, that two data vectors are adjacent if they differ in one element. The formal statement is as below.

\vspace{0.1in}

\begin{definition}[Neighboring data] \label{def:neighbor}
$\bm u,\bm u'\in\{0,1\}^D$ are neighboring data vectors if they only differ in one element, i.e., $u_i\neq u_i'$ for one $i\in[D]$ and $u_j=u_j'$ for $j\neq i$.
\end{definition}

\newpage

\section{Hashing for Jaccard Similarity with Differential Privacy (DP)}

As discussed earlier, one permutation hashing (OPH)~\citep{li2012one} is a popular and highly efficient hashing algorithm for the Jaccard similarity. In this section, we present our main algorithms called DP-OPH based on privatizing the $b$-bit hash values from OPH. In addition, we compare it with a differentially private MinHash alternative named DP-MH.

\subsection{One Permutation Hashing (OPH)}
\label{sec:DP-OPH}

\begin{algorithm}[h]
	\textbf{Input:} Binary vector $\bm u\in\{0,1\}^D$; number of hash values $K$

	\textbf{Output:} OPH hash values $h_1(\bm u),...,h_K(\bm u)$
	
\begin{algorithmic}[1]	
	\State Let $d=D/K$. Use a permutation $\pi:[D]\mapsto [D]$ with fixed seed to randomly split $[D]$ into $K$ equal-size bins $\mathcal B_1,...,\mathcal B_K$, with $\mathcal B_k=\{j\in [D]:(k-1)d+1\leq \pi(j)\leq kd\}$
	
	\For{ $k=1$ to $K$}
	
	\If{Bin $\mathcal B_k$ is non-empty}
	
	\State $h_k(\bm u)\leftarrow \min_{j\in \mathcal B_k, u_j\neq 0} \pi(j)$
	
	\Else
	
	\State $h_k(\bm u)\leftarrow E$
	
	\EndIf

	\EndFor
\end{algorithmic}
\caption{One Permutation Hashing (OPH)}
\label{alg:OPH}
\end{algorithm}

As outlined in Algorithm~\ref{alg:OPH}, the procedure of OPH is simple: we first use a permutation $\pi$ (same for all data vectors) to randomly split the feature dimensions $[D]$ into $K$ bins $\mathcal B_1,...,\mathcal B_K$ with equal length $d=D/K$ (assuming integer division holds). Then, for each bin $\mathcal B_k$, we set the minimal permuted index of ``1'' as the $k$-th OPH hash value. If $\mathcal B_k$ is empty (i.e., it does not contain any ``1''), we record an ``$E$'' representing an empty bin. \cite{li2012one} showed that we can construct statistically unbiased Jaccard estimators by ignoring the empty bins (i.e., subtracting $K$ by the number of ``jointly'' empty bins). However, since empty bins are different for every distinct data vector, the vanilla OPH hash values do not form a metric space (i.e., do not satisfy the triangle inequality). This limits the application of this naive estimator.

\begin{algorithm}[tb]
	\textbf{Input:} OPH hash values $h_1(\bm u),...,h_K(\bm u)$ each in $[D]\cup \{E\}$; bins $\mathcal B_1,...,\mathcal B_K$; $d=D/K$

	\textbf{Output:} Densified OPH hash values $h_1(\bm u),...,h_K(\bm u)$
	
\begin{algorithmic}[1]	
    \State Let $NonEmptyBin=\{k\in [K]:h_k(\bm u)\neq E\}$

	\For{$k=1$ to $K$}
	
	\If{$h_k(\bm u)=E$}
	
	\State Uniformly randomly select $k'\in NonEmptyBin$
	
	{\color{blue}\State  $h_k(\bm u)\leftarrow h_{k'}(\bm u)$  \Comment{OPH-fix: fixed densification}}
	
	\State \textbf{Or}
	
	{\color{red} \State  $MapToIndex=SortedIndex \left(\pi(\mathcal B_k) \right)+(k'-1)d$
	
	\State $\pi^{(k)}: \pi(\mathcal B_{k'})\mapsto MapToIndex$  \Comment{within-bin partial permutation}
	
	\State $h_k(\bm u)\leftarrow \min_{j\in \mathcal B_{k'}, u_j\neq 0} \pi^{(k)}\left( \pi(j) \right) $ \Comment{OPH-re: re-randomized densification}}

	\EndIf
	
	\EndFor
	
\end{algorithmic}
\caption{Densification for OPH, two options: {\color{blue} fixed} and {\color{red} re-randomized}  }
\label{alg:densification}
\end{algorithm}

\vspace{0.2in}
\noindent\textbf{Densification for OPH.} To tackle the issue caused by empty bins, a series of works have been conducted to densify the OPH~\citep{shrivastava2014densifying,shrivastava2014improved,shrivastava2016simple,li2019re,jia2021bidirectionally}. The general idea is to ``borrow'' the data/hash from non-empty bins, with some careful design. In Algorithm~\ref{alg:densification}, we present two recent representatives of OPH densification methods: ``{\color{blue}fixed densification}''~\citep{shrivastava2017optimal} and ``{\color{red} re-randomized densification}''~\citep{li2019re}. We call these two variants OPH-fix and OPH-re, respectively. Given an OPH hash vector from Algorithm~\ref{alg:OPH} (possibly containing ``$E$''s), we denote the set of non-empty bins $NonEmptyBin=\{k\in [K]:h_k(\bm u)\neq E\}$. The densification procedure scans over $k=1,...,K$. For each $k$ with $h_k(\bm u)=E$, we do the following. 

\begin{enumerate}
    \item Uniformly randomly pick a bin $k'\in NonEmptyBin$ that is non-empty.
    
    \item
    \begin{enumerate}
        \item {\color{blue} OPH-fix:} we directly copy the $k'$-th hash value: $h_k(\bm u)\leftarrow h_{k'}(\bm u)$.
        
        \item {\color{red} OPH-re:} we apply an additional minwise hashing to bin $\mathcal B_{k'}$ using the ``partial permutation'' of $\mathcal B_k$ to obtain the hash for $h_k(\bm u)$. 
    \end{enumerate}
    
\end{enumerate}

\newpage

More precisely, for re-randomized densification, in Algorithm~\ref{alg:densification}, $MapToIndex$ defines the ``partial permutation'' of bin $\mathcal B_k$, where the function $SortedIndex$ returns the original index of a sorted array. For example, let $D=16$, $K=4$, and $d=D/K=4$ and suppose the indices in each bin are in ascending order, and $\mathcal B_2=[1,5,13,15]$ is empty. Suppose $\pi(13)=5, \pi(5)=6, \pi(1)=7, \pi(15)=8$. In this case, $\pi(\mathcal B_2)=[7,6,5,8]$, so $SortedIndex(\pi(\mathcal B_2))=[3,2,1,4]$. Assume $k'=3$ is picked and $\pi(\mathcal B_3)=[9,12,10,11]$. At line 7 we have $MapToIndex=[11,10,9,12]$ and at line 8, $\pi^{(2)}$ is a mapping $[9,12,10,11]\mapsto [11,10,9,12]$, which effectively defines another within-bin permutation of $\pi(\mathcal B_3)$ using the partial ordering of $\pi(\mathcal B_2)$. Finally, we set $h_k(\bm u)$ as the minimal index of ``1'' among the additionally permuted elements in bin $\mathcal B_{k'}$.

\vspace{0.1in}

We remark that in the above step 1, for any empty bin $k$, the ``sequence'' for non-empty bin lookup should be the same for any data vector. In practice, this can be achieved by simply seeding a random permutation of $[K]$ for each $k$. For instance, for $k=1$ (when the first bin is empty), we always search in the order $[3,1,2,4]$ until one non-empty bin is found, for all the data vectors.

It is shown that for both variants, the Jaccard estimator of the same form as (\ref{MH-estimator}) is unbiased. In \cite{li2019re}, the authors proved that re-randomized densification always achieves a smaller Jaccard estimation variance than that of fixed densification, and the improvement is especially significant when the data is sparse. Similar to $b$-bit minwise hashing, we can also keep the lowest $b$ bits of the OPH hash values to use them conveniently in search and learning applications. 

\subsection{Differential Private One Permutation Hashing (DP-OPH)}

\begin{algorithm}[h]
	\textbf{Input:} Densified OPH hash values $h_1(\bm u),...,h_K(\bm u)$; number of bits $b$; $\epsilon>0$, $0<\delta<1$
	
	\hspace{0.38in} $f$: lower bound on the number of non-zeros in each data vector

	\textbf{Output:} $b$-bit DP-OPH values $\tilde h(\bm u)=[\tilde h_1(\bm u),...,\tilde h_K(\bm u)]$
	
\begin{algorithmic}[1]	

    \State Take the lowest $b$ bits of all hash values \Comment{After which $h_k(\bm u)\in \{0,...,2^b-1\}$}
    
    \State Set {\color{blue} $N=F_{fix}^{-1}(1-\delta;D,K,f)$ (for DP-OPH-fix)} or {\color{red} $N=F_{re}^{-1}(1-\delta;D,K,f)$ (for DP-OPH-re)}, and $\epsilon'=\epsilon/N$

	\For{ $k=1$ to $K$}
	
	\State $\tilde h_k(\bm u)=
    \begin{cases}
    h_k(\bm u), & \text{with probability}\ \frac{e^{\epsilon'}}{e^{\epsilon'}+2^b-1}\\
    i, & \text{with probability}\ \frac{1}{e^{\epsilon'}+2^b-1},\ \text{for}\ i\in \{0,...,2^b-1\},\ i\neq h_k(\bm u)
    \end{cases}$

	\EndFor
\end{algorithmic}
\caption{Differentially Private Densified One Permutation Hashing (DP-OPH-fix, DP-OPH-re)}
\label{alg:DP-OPH-densification}
\end{algorithm}

\noindent\textbf{DP-OPH with densification.} To privatize densified OPH, in Algorithm~\ref{alg:DP-OPH-densification}, we first take the lowest $b$ bits of the hash values. Since the output space is finite with cardinality $2^b$, we apply the standard randomized response technique~\citep{warner1965randomized,dwork2014algorithmic,wang2017locally} to flip the bits to achieve DP. After running Algorithm~\ref{alg:densification}, suppose a densified OPH hash value $h_k(\bm u)=j$ with some $j\in 0,...,2^b-1$. With some $\epsilon'>0$ that will be specified later, we output $\tilde h_k(\bm u)=j$ with probability $\frac{e^{\epsilon'}}{e^{\epsilon'}+2^b-1}$, and $\tilde h_k(\bm u)=i$ for $i\neq j$ with probability $\frac{1}{e^{\epsilon'}+2^b-1}$. It is easy to verify that, for a neighboring data $\bm u'$, when $h_{k}(\bm u')=j$, for $\forall i\in 0,...,2^b-1$, we have $\frac{P(\tilde h_k(\bm u)=i)}{P(\tilde h_k(\bm u')=i)}=1$; when $h_{k}(\bm u')\neq j$, we have $e^{-\epsilon'}\leq \frac{P(\tilde h_k(\bm u)=i)}{P(\tilde h_k(\bm u')=i)}\leq e^{\epsilon'}$. Therefore, for a single hash value, this bit flipping method satisfies $\epsilon'$-DP.

\vspace{0.1in}

It remains to determine $\epsilon'$. Naively, since the perturbations (flipping) of the hash values are independent, by the composition property of DP~\citep{dwork2006our}, we know that simply setting $\epsilon'=\epsilon/K$ for all $K$ MinHash values would achieve overall $\epsilon$-DP (for the hashed vector). However, since $K$ is usually around hundreds, a very large $\epsilon$ value is required for this strategy to be useful. To this end, we can trade a small $\delta$ in the DP definition for a significantly reduced $\epsilon$. The key is to note that, not all the $K$ hashed bits will change after we switch from $\bm u$ to its neighbor $\bm u'$. Assume each data vector contains at least $f$ non-zeros, which is realistic since many data in practice have both high dimensionality $D$ as well as many non-zero elements. Intuitively, when the data is not too sparse, $\bm u$ and $\bm u'$ tend to be similar (since they only differ in one element). Thus, the number of different hash values from Algorithm~\ref{alg:densification}, $X=\sum_{k=1}^K \mathbbm 1\{h_k(\bm u)\neq h_k(\bm u')\}$, can be upper bounded by some $N$ with high probability $1-\delta$. In the proof, this allows us to set $\epsilon'=\epsilon/N$ in the flipping probabilities and count $\delta$ as the failure probability in $(\epsilon,\delta)$-DP. In Lemma~\ref{lemma:distribution-of-X}, we derive the precise probability distribution of $X$. Based on this result, in Algorithm~\ref{alg:DP-OPH-densification}, we set $N=F_{fix}^{-1}(1-\delta;D,f,K)$ for DP-OPH-fix, $N=F_{re}^{-1}(1-\delta;D,f,K)$ for DP-OPH-re, where $F_{fix}(x)=P(X\leq x)$ is the cumulative mass function (CMF) of $X$ with OPH-fix ((\ref{eqn:distribution-of-X}) + (\ref{eqn:Prob-fix})), and $F_{re}$ is the cumulative mass function of $X$ with OPH-re ((\ref{eqn:distribution-of-X}) + (\ref{eqn:Prob-re})). To prove Lemma~\ref{lemma:distribution-of-X}, we need the following two lemmas.

\begin{lemma}[\cite{li2012one}]  \label{lemma:number of empty}
Let $f=|\{i:u_i=1\}|$, and $I_{emp,k}$ be the indicator function that the $k$-th bin is empty, and $N_{emp}=\sum_{k=1}^K I_{emp,k}$. Suppose $mod(D,K)=0$. We have
\begin{align*}
	P\left( N_{emp} = j\right)&= \sum_{\ell=0}^{K-j}(-1)^\ell {K\choose j}{K-j\choose \ell}
	{D(1-(j+\ell)/K)\choose f}\bigg/{D\choose f}. 
	\end{align*}
\end{lemma}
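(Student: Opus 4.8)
The only randomness relevant to this lemma is the permutation $\pi$ of Algorithm~\ref{alg:OPH}, which amounts to sending the $f$ support coordinates $\{i:u_i=1\}$ to a uniformly random $f$-subset of $[D]$; bin $\mathcal{B}_k$ is empty exactly when this random subset is disjoint from $\mathcal{B}_k$. Since $mod(D,K)=0$, every bin has size $d=D/K$, so for any fixed collection $S$ of $m$ bins the probability that all of them are empty is the probability that the random $f$-subset falls inside the complementary $D-md$ positions:
\[
P\Big(\bigcap_{k\in S}\{\mathcal{B}_k \text{ empty}\}\Big)=\frac{\binom{D-md}{f}}{\binom{D}{f}}=\frac{\binom{D(1-m/K)}{f}}{\binom{D}{f}},
\]
with the usual convention $\binom{n}{f}=0$ for $n<f$, which automatically annihilates configurations that force too many empty bins.

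The plan is then a standard inclusion--exclusion sieve converting these ``all-empty'' intersection probabilities into $P(N_{emp}=j)$. Concretely, I would fix one set $S$ of $j$ bins and compute the probability that \emph{exactly} the bins in $S$ are empty, i.e. that all bins in $S$ are empty while every bin outside $S$ is non-empty. Expanding the ``all outside bins non-empty'' constraint by inclusion--exclusion over the $K-j$ bins in $S^c$ and substituting the display above for each resulting intersection gives
\[
\sum_{T\subseteq S^c}(-1)^{|T|}\,\frac{\binom{D(1-(j+|T|)/K)}{f}}{\binom{D}{f}}=\sum_{\ell=0}^{K-j}(-1)^{\ell}\binom{K-j}{\ell}\,\frac{\binom{D(1-(j+\ell)/K)}{f}}{\binom{D}{f}},
\]
after grouping the subsets $T$ by size $\ell=|T|$. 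Summing over the $\binom{K}{j}$ equiprobable choices of $S$ then yields the claimed expression for $P(N_{emp}=j)$.

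An equivalent route is to invoke directly the ``exactly-$j$'' form of the sieve, $P(N_{emp}=j)=\sum_{m=j}^{K}(-1)^{m-j}\binom{m}{j}\binom{K}{m}\binom{D(1-m/K)}{f}/\binom{D}{f}$, and then re-index by $\ell=m-j$ using the subset-of-a-subset identity $\binom{m}{j}\binom{K}{m}=\binom{K}{j}\binom{K-j}{m-j}$; both routes terminate at the same formula. There is no genuine obstacle beyond bookkeeping: the events $\{\mathcal{B}_k \text{ empty}\}$ are dependent, but only their joint ``all-empty'' probabilities enter, and those are elementary. The points requiring care are using the correct sieve identity for \emph{exactly} $j$ empty bins rather than \emph{at least} $j$, tracking the combinatorial factors $\binom{K}{j}$ and $\binom{K-j}{\ell}$, and relying on the convention $\binom{n}{f}=0$ to dispose of all boundary cases automatically.
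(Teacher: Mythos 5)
Your proof is correct: the ``all-empty'' intersection probability $\binom{D-md}{f}/\binom{D}{f}$ is right, the inclusion--exclusion sieve over the complementary bins (or equivalently the exactly-$j$ sieve with the re-indexing $\ell=m-j$) is carried out correctly, and the convention $\binom{n}{f}=0$ for $n<f$ handles the boundary cases. The paper itself gives no proof---it cites \cite{li2012one}---and your argument is essentially the standard one from that source, so there is nothing to add.
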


\begin{lemma}[\cite{li2019re}]  \label{lemma:bin-non-zero}
Conditional on the event that $m$ bins are non-empty, let $\tilde f$ be the number of non-zero elements in a non-empty bin. Denote $d=D/K$. The conditional probability distribution of $\tilde f$ is given by
\begin{align*}
	P\left( \tilde f = j\big| m\right)&= \frac{{d \choose j} H(m-1,f-j|d)}{H(m,f|d)},\quad j=\max\{1,f-(m-1)d\},...,\min\{d,f-m+1\},
\end{align*}
where $H(\cdot)$ follows the recursion: for any $0< k\leq K$ and $0\leq n\leq f$,
\begin{align*}
    H(k,n|d)=\sum_{i=\max\{1,n-(k-1)d\}}^{\min\{d,n-k+1\}} {d \choose i} H(k-1,n-i|d),\quad H(1,n|d)={d \choose n}.
\end{align*}
\end{lemma}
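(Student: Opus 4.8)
The plan is to translate the statement into a pure counting problem about distributing $f$ labeled ``ones'' among $K$ bins of common size $d=D/K$, to identify $H(\cdot)$ with the appropriate count, and then to take a ratio of two such counts.

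\noindent\textit{Step 1 (combinatorial model).} Because the permutation $\pi$ in Algorithm~\ref{alg:OPH} places the $D$ coordinates into the $K$ size-$d$ bins uniformly at random, the vector of per-bin non-zero counts $(\tilde f_1,\dots,\tilde f_K)$, with $\tilde f_k=|\{j\in\mathcal B_k:u_j=1\}|$, has the multivariate hypergeometric law $P(\tilde f_1=n_1,\dots,\tilde f_K=n_K)=\prod_{k}{d\choose n_k}\big/{D\choose f}$ on $\{\sum_k n_k=f,\ 0\le n_k\le d\}$. In particular the bins are exchangeable, so the number of non-zeros in a uniformly chosen non-empty bin, conditioned on $A_m:=\{\text{exactly }m\text{ bins are non-empty}\}$, has the same law as $\tilde f_1$ conditioned on $A_m\cap\{\tilde f_1\ge1\}$; this is the quantity I will compute.

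\noindent\textit{Step 2 (what $H$ counts).} I would first show, by unrolling the stated recursion, that $H(k,n\mid d)=\sum_{n_1+\cdots+n_k=n,\ 1\le n_j\le d}\prod_{j}{d\choose n_j}$, i.e.\ $H(k,n\mid d)$ counts the placements of $n$ ones into $k$ ordered size-$d$ bins that leave no bin empty. The base case $H(1,n\mid d)={d\choose n}$ is immediate; for the inductive step one conditions on the number $i$ of ones in the last bin, which must satisfy $1\le i\le d$ together with $k-1\le n-i\le(k-1)d$ so that the remaining $n-i$ ones fill $k-1$ non-empty bins --- and rearranging these inequalities reproduces exactly the summation limits $\max\{1,n-(k-1)d\}\le i\le\min\{d,n-k+1\}$. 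A corollary (also consistent with Lemma~\ref{lemma:number of empty} via inclusion--exclusion) is $P(A_m)={K\choose m}H(m,f\mid d)\big/{D\choose f}$: choose the $m$ non-empty bins, then fill them with no empties.

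\noindent\textit{Step 3 (two counts and their ratio).} For $j\ge1$, the event $\{\tilde f_1=j\}\cap A_m$ is built by choosing the $j$ non-zero positions in bin $1$ (${d\choose j}$ ways), choosing which $m-1$ of the remaining $K-1$ bins are the other non-empty ones (${K-1\choose m-1}$ ways), and filling those $m-1$ bins with the remaining $f-j$ ones leaving none empty ($H(m-1,f-j\mid d)$ ways), so $P(\tilde f_1=j,A_m)={d\choose j}{K-1\choose m-1}H(m-1,f-j\mid d)\big/{D\choose f}$. For the normalizer, exchangeability gives $P(\tilde f_1\ge1\mid A_m)=m/K$, hence $P(A_m\cap\{\tilde f_1\ge1\})=\frac{m}{K}{K\choose m}H(m,f\mid d)\big/{D\choose f}={K-1\choose m-1}H(m,f\mid d)\big/{D\choose f}$. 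Dividing, the ${K-1\choose m-1}$ and ${D\choose f}$ factors cancel and yield $P(\tilde f=j\mid m)={d\choose j}H(m-1,f-j\mid d)/H(m,f\mid d)$. The stated range of $j$ then follows from the feasibility constraints already used above: $j\ge1$ and $j\le d$ for bin $1$, and $f-j\ge m-1$ and $f-j\le(m-1)d$ for the other $m-1$ non-empty bins, i.e.\ $\max\{1,f-(m-1)d\}\le j\le\min\{d,f-m+1\}$.

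\noindent The step that needs real care is Step 2: matching the recursion's summation limits to the condition ``every part positive and at most $d$'' (and, relatedly, pinning down the support of $\tilde f$), together with the small size-biasing bookkeeping in Step 3 that distinguishes a uniformly random non-empty bin from a fixed one.
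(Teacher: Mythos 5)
Your proof is correct. Note that the paper itself gives no proof of this lemma---it is imported verbatim from \cite{li2019re}---so there is no in-paper argument to compare against; what you have written is a valid, self-contained derivation. The three steps all check out: the multivariate hypergeometric law $\prod_k\binom{d}{n_k}/\binom{D}{f}$ is the right model for the per-bin counts under a uniform permutation; unrolling the recursion does identify $H(k,n\mid d)$ with $\sum_{n_1+\cdots+n_k=n,\,1\le n_j\le d}\prod_j\binom{d}{n_j}$, with the summation limits exactly encoding $1\le i\le d$ and $k-1\le n-i\le (k-1)d$; and the ratio computation, including the cancellation of $\binom{K-1}{m-1}=\frac{m}{K}\binom{K}{m}$ against the normalizer, lands precisely on the stated formula and support. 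Your cross-check that $P(A_m)=\binom{K}{m}H(m,f\mid d)/\binom{D}{f}$ agrees with Lemma~\ref{lemma:number of empty} by inclusion--exclusion is a nice sanity check. One remark worth keeping in mind, since you already flag the size-biasing issue: you prove the lemma for a uniformly chosen (equivalently, by exchangeability, a fixed) non-empty bin, which is the natural reading of the statement; in the proof of Lemma~\ref{lemma:distribution-of-X} the quantity $\tilde f$ is instead the count in the specific bin containing the coordinate where $\bm u$ and $\bm u'$ differ, which is a size-biased selection of a non-empty bin --- but that is a question about how the lemma is applied downstream, not about the lemma or your proof of it.
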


The distribution of $X=\sum_{k=1}^K \mathbbm 1\{h_k(\bm u)\neq h_k(\bm u')\}$ is given as below. 

\begin{lemma}  \label{lemma:distribution-of-X}
Consider $\bm u\in \{0,1\}^D$, and denote $f=|\{i:u_i=1\}|$. Let $\bm u'$ be a neighbor of $\bm u$. Denote $X=\sum_{k=1}^K \mathbbm 1\{h_k(\bm u)\neq h_k(\bm u')\}$ where the hash values are generated by Algorithm~\ref{alg:densification}. Denote $d=D/K$. We have, for $x=0,...,K-\lceil f/d \rceil$,
\begin{align}
    P\left( X=x \right)=\sum_{j=\max(0,K-f)}^{K-\lceil f/d \rceil}\sum_{z=1}^{\min(f,d)} \tilde P(x|z,j) P\left(\tilde f=z | K-j\right)P\left( N_{emp} = j\right),  \label{eqn:distribution-of-X}
\end{align}
where $P\left(\tilde f=z | K-j\right)$ is given in Lemma~\ref{lemma:bin-non-zero} and $P\left( N_{emp} = j\right)$ is from Lemma~\ref{lemma:number of empty}. Moreover,
\begin{align}
    \textbf{For OPH-fix:}\ \ &\tilde P(x|z,j)=\mathbbm 1\{x=0\} \left(1-P_{\neq}\right) + \mathbbm 1\{x>0\}P_{\neq}\cdot g_{bino}\left(x-1;\frac{1}{K-j},j\right), \label{eqn:Prob-fix} \\
    \textbf{For OPH-re:}\ \ &\tilde P(x|z,j)=\left(1-P_{\neq}\right) \cdot g_{bino}\left(x;\frac{P_{\neq}}{K-j},j\right) + P_{\neq} \cdot g_{bino}\left(x-1;\frac{P_{\neq}}{K-j},j\right), \label{eqn:Prob-re}
\end{align}
where $g_{bino}(x;p,n)$ is the probability mass function of $Binomial(p,n)$ with $n$ trials and success rate $p$, and $P_{\neq}(z,b)= \left(1-\frac{1}{2^b}\right)\frac{1}{z}$.
\end{lemma}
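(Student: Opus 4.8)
The plan is to obtain the law of $X$ by conditioning on the coarse ``geometry'' of the ones of $\bm u$ relative to the $K$ bins --- namely (i) the number of empty bins and (ii) the load of the bin containing the coordinate in which $\bm u$ and $\bm u'$ disagree --- and then tracking exactly how a single-coordinate change propagates through OPH and through each densification rule.

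\textbf{Reduction.} Let $i^*$ be the coordinate where $\bm u$ and $\bm u'$ differ and let $\mathcal B_{k^*}$ be the bin whose range contains $\pi(i^*)$. Up to relabelling $\bm u\leftrightarrow\bm u'$ assume $u_{i^*}=1$, so $\bm u'$ is $\bm u$ with one ``$1$'' deleted. Condition on $N_{emp}=j$ (its law is Lemma~\ref{lemma:number of empty} with parameter $f$) and, given $j$, on $\mathcal B_{k^*}$ containing exactly $z$ ones of $\bm u$. Since $\pi$ is a uniformly random permutation, given $\{N_{emp}=j\}$ the within-bin configurations of the $K-j$ non-empty bins are exchangeable, and the load of the (non-empty) bin that holds $i^*$ is governed by Lemma~\ref{lemma:bin-non-zero} with $m=K-j$; this produces the two outer sums $\sum_j\sum_z(\cdots)\,P(\tilde f=z\mid K-j)\,P(N_{emp}=j)$ together with the ranges $\max(0,K-f)\le j\le K-\lceil f/d\rceil$ and $1\le z\le\min(f,d)$. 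It then remains to identify $\tilde P(x\mid z,j)=P(X=x\mid z,j)$.

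\textbf{Which hashes move, and the atom $P_{\neq}$.} For a bin $\mathcal B_k$ with $k\neq k^*$ we have $u_j=u_j'$ on all of $\mathcal B_k$, so its pre-densification hash, hence its $b$-bit value, is identical for $\bm u$ and $\bm u'$. Thus among the non-densified hashes only $h_{k^*}$ can move, and it moves (as a $b$-bit value) iff $\pi(i^*)$ is the smallest $\pi$-value among the $z$ ones of $\mathcal B_{k^*}$ (probability $1/z$, by uniformity of the relative order of those $z$ positions) \emph{and} the rehash of the two distinct integer hash values does not collide (probability $1-2^{-b}$); hence $P(h_{k^*}\text{ moves}\mid z)=P_{\neq}(z,b)$, independently of everything in the other bins. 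Now propagate through densification: by the fixed ``lookup permutation of $[K]$'' rule each empty bin picks its donor uniformly among the $K-j$ non-empty bins, i.i.d.\ across the $j$ empty bins and independent of whether $h_{k^*}$ moved. For \textbf{OPH-fix}, an empty bin's value is a verbatim copy, so it moves iff its donor is $k^*$ and $h_{k^*}$ moved; conditionally on $\{h_{k^*}\text{ moves}\}$ every donor-of-$k^*$ moves simultaneously, so the count of moved empty bins is $\mathrm{Binomial}(j,1/(K-j))$, giving $X=1+\mathrm{Binomial}(j,1/(K-j))$, while on $\{h_{k^*}\text{ fixed}\}$ we get $X=0$; mixing over $\{h_{k^*}\text{ moves}\}$ (probability $P_{\neq}$) gives exactly (\ref{eqn:Prob-fix}). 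For \textbf{OPH-re}, an empty bin whose donor is $k^*$ applies a \emph{fresh} within-bin permutation of $\mathcal B_{k^*}$ induced by the order of $\pi$ on the (disjoint) empty bin, hence independent of $\pi$ on $\mathcal B_{k^*}$; so its value moves independently with probability $\tfrac1z(1-2^{-b})=P_{\neq}$, and combined with the $1/(K-j)$ donor probability each empty bin moves independently with probability $P_{\neq}/(K-j)$, independently of the fate of $h_{k^*}$ itself; thus $X=\mathrm{Bernoulli}(P_{\neq})+\mathrm{Binomial}(j,P_{\neq}/(K-j))$, whose pmf is (\ref{eqn:Prob-re}). Substituting these $\tilde P(x\mid z,j)$ into the outer sums yields (\ref{eqn:distribution-of-X}).

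\textbf{Main obstacle.} The delicate case is $z=1$: deleting $i^*$ then empties $\mathcal B_{k^*}$, so the set of non-empty bins --- and therefore the densification donors for the \emph{other} empty bins --- differs between $\bm u$ and $\bm u'$, and the clean coupling must be redone (bin $k^*$ itself becomes a densified bin, and an originally-empty bin that would have used $k^*$ as donor must skip it); one has to verify that, after re-coupling on the common permutation of $[K]$, the contribution still collapses to $\tilde P(x\mid 1,j)$ with $P_{\neq}(1,b)=1-2^{-b}$. A secondary point requiring care is the exchangeability argument that justifies using the within-bin law $P(\tilde f=z\mid K-j)$ of Lemma~\ref{lemma:bin-non-zero} for the load of the \emph{particular} bin $\mathcal B_{k^*}$ that happens to contain $i^*$. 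Everything else is bookkeeping: assembling the binomial pmfs and reading off the support $0\le x\le K-\lceil f/d\rceil$ from the bound on $N_{emp}$.
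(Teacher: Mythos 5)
Your proof follows essentially the same route as the paper's: condition on $N_{emp}=j$ and on the load $\tilde f=z$ of the bin $\mathcal B_{k^*}$ containing the differing coordinate, write $X$ as the indicator that $h_{k^*}$ moves plus the sum over empty bins whose donor is $k^*$, and read off the mixture-of-binomials structure separately for the fixed rule (copies move in lockstep with $h_{k^*}$) and the re-randomized rule (each donor-of-$k^*$ moves independently with probability $P_{\neq}$). The only difference is that you explicitly flag the $z=1$ degeneracy and the exchangeability step justifying the use of Lemma~\ref{lemma:bin-non-zero} for the particular bin holding $i^*$, both of which the paper's proof passes over silently.
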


\begin{proof}
Without loss of generality, suppose $\bm u$ and $\bm u'$ differ in the $i$-th dimension, and by the symmetry of DP, we can assume that $u_i=1$ and $u_i'=0$. We know that $i$ is assigned to the $\lceil mod(\pi(i),d)\rceil$-th bin. Among the $K$ hash values, this change will affect all the bins that use the data/hash of the $k^*=\lceil mod(\pi(i),d)\rceil$-th bin (after permutation), both in the first scan (if it is non-empty) and in the densification process. Let $N_{emp}$ be the number of empty bins in $h(\bm u)$, and $\tilde f$ be the number of non-zero elements in the $k^*$-th bin. We have, for $x=0,...,K-\lceil f/d \rceil$,
\begin{align*}
    P\left( X=x \right) 
    &=\sum_{j=\max(0,K-f)}^{K-\lceil f/d \rceil}\sum_{z=1}^{\min(f,d)} P\left( X=x \Big|\tilde f=z, N_{emp}=j\right) P\left(\tilde f=z,N_{emp} = j\right) \\
    &=\sum_{j=\max(0,K-f)}^{K-\lceil f/d \rceil}\sum_{z=1}^{\min(f,d)} P\left( X=x \Big|\tilde f=z, N_{emp}=j\right) P\left(\tilde f=z | K-j\right)P\left( N_{emp} = j\right),
\end{align*}
where $P\left(\tilde f=z | K-j\right)$ is given in Lemma~\ref{lemma:bin-non-zero} and $P\left( N_{emp} = j\right)$ can be calculated by Lemma~\ref{lemma:number of empty}. To compute the first conditional probability, we need to compute the number of times the $k^*$-th bin is picked to generate hash values, and the hash values are different for $\bm u$ and $\bm u'$. Conditional on $\{\tilde f=z, N_{emp}=j\}$, denote $\Omega=\{k:\mathcal B_k\ \text{is\ empty}\}$, and let $R_k$ be the non-empty bin used for the $k$-th hash value $h_k(\bm u)$, which takes value in $[K]\setminus \Omega$. We know that $|\Omega|=j$. We can write
\begin{align*}
    X=\mathbbm 1\{h_{k^*}(\bm u)\neq h_{k^*}(\bm u')\}+\sum_{k\in \Omega}\mathbbm 1\{R_k=k^*, h_k(\bm u)\neq h_k(\bm u')\}.
\end{align*}
Here we separate out the first term because the $k^*$-th hash always uses the $k^*$-bin. Note that the densification bin selection is uniform, and the bin selection is independent of the permutation for hashing. For the fixed densification, since the hash value $h_{k^*}(\bm u)$ is generated and used for all hash values that use $\mathcal B_{k^*}$, we have
\begin{align*}
    P\left( X=x \Big|\tilde f=z, N_{emp}j\right)=\mathbbm 1\{x=0\} \left(1-P_{\neq}\right) + \mathbbm 1\{x>0\}P_{\neq}\cdot g_{bino}\left(x-1;\frac{1}{K-j},j\right),
\end{align*}
where $g_{bino}(x;p,n)$ is the probability mass function of the binomial distribution with $n$ trials and success rate $p$, and $P_{\neq}= P(h_{k^*}(\bm u)\neq h_{k^*}(\bm u'))= \left(1-\frac{1}{2^b}\right)\frac{1}{z}$. Based on the same reasoning, for re-randomized densification, we have
\begin{align*}
    P\left( X=x \Big|\tilde f=z, N_{emp}j\right)=\left(1-P_{\neq}\right) \cdot g_{bino}\left(x;\frac{P_{\neq}}{K-j},j\right) + P_{\neq} \cdot g_{bino}\left(x-1;\frac{P_{\neq}}{K-j},j\right).
\end{align*}
Combining all the parts completes the proof.
\end{proof}

The privacy guarantee of DP-OPH with densification is provided as below.

\begin{theorem} \label{theo:DP-OPH-densification}
Both DP-OPH-fix and DP-OPH-re in Algorithm~\ref{alg:DP-OPH-densification} achieve $(\epsilon,\delta)$-DP.
\end{theorem}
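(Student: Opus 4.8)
The plan is a conditional argument: condition on everything that builds the densified $b$-bit hash vectors, prove pure $\epsilon$-DP on a high-probability ``good event'', and charge the complementary event to $\delta$. Fix an ordered pair of neighbors $\bm u,\bm u'$ (Definition~\ref{def:neighbor}); we must show $Pr[\mathcal M(\bm u)\in Z]\le e^{\epsilon}\,Pr[\mathcal M(\bm u')\in Z]+\delta$ for every $Z$. Split the internal randomness of Algorithm~\ref{alg:DP-OPH-densification} into two independent pieces: $\mathcal R$, which produces the densified, rehashed $b$-bit hash vectors (the permutation $\pi$, the empty-bin lookup order, the rehashing onto $\{0,\dots,2^b-1\}$), and $\mathcal S$, the $K$ coordinatewise randomized responses of line~4. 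Conditioned on $\mathcal R$, the vectors $h(\bm u)=(h_1(\bm u),\dots,h_K(\bm u))$ and $h(\bm u')$ are deterministic, so $X(\mathcal R)=\sum_{k=1}^K\mathbbm 1\{h_k(\bm u)\neq h_k(\bm u')\}$ is a deterministic function of $\mathcal R$ whose marginal law is (up to the monotonicity point below) the one derived in Lemma~\ref{lemma:distribution-of-X}, with branch \eqref{eqn:Prob-fix} for DP-OPH-fix and \eqref{eqn:Prob-re} for DP-OPH-re.

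\emph{Step 1: conditional pure DP.} For a fixed $\mathcal R$ the mechanism applies $K$ independent randomized responses, and --- as already checked in the text preceding the theorem --- a coordinate $k$ with $h_k(\bm u)=h_k(\bm u')$ contributes likelihood ratio exactly $1$, while a coordinate with $h_k(\bm u)\neq h_k(\bm u')$ contributes a ratio in $[e^{-\epsilon'},e^{\epsilon'}]$. Since the likelihood ratio of a product of independent mechanisms is the product of the individual ratios,
\begin{align*}
\frac{Pr[\mathcal M(\bm u)=\bm z\mid\mathcal R]}{Pr[\mathcal M(\bm u')=\bm z\mid\mathcal R]}\ \le\ e^{\epsilon'X(\mathcal R)}\ =\ e^{(\epsilon/N)X(\mathcal R)}\qquad\text{for every }\bm z,
\end{align*}
so on the event $G=\{X(\mathcal R)\le N\}$ the conditional mechanism is $\epsilon$-DP. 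This uses exact pure-DP composition, not advanced composition, which is exactly why the per-coordinate budget $\epsilon'=\epsilon/N$ is the right choice.

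\emph{Step 2: tail bound, then assembly.} By construction $N$ is the $(1-\delta)$-quantile of the law of $X$, i.e. $N=F_{fix}^{-1}(1-\delta;D,K,f)$ (resp. $F_{re}^{-1}$), so $Pr[G^c]=Pr[X>N]\le\delta$. Because $f$ is only a lower bound on the number of nonzeros, one also uses that $X$ is stochastically nonincreasing in the nonzero count (a denser vector has fewer empty bins, hence the changed bin is copied fewer times during densification, and that bin itself holds more nonzeros so its minimum moves with smaller probability), so evaluating the quantile at the worst case $f$ is valid for every neighboring pair. Finally, decomposing over $G$ and using $0\le Pr[\,\cdot\mid\mathcal R]\le 1$,
\begin{align*}
Pr[\mathcal M(\bm u)\in Z]
&=\mathbb E_{\mathcal R}\big[\mathbbm 1_G\,Pr[\mathcal M(\bm u)\in Z\mid\mathcal R]\big]+\mathbb E_{\mathcal R}\big[\mathbbm 1_{G^c}\,Pr[\mathcal M(\bm u)\in Z\mid\mathcal R]\big]\\
&\le e^{\epsilon}\,\mathbb E_{\mathcal R}\big[\mathbbm 1_G\,Pr[\mathcal M(\bm u')\in Z\mid\mathcal R]\big]+Pr[G^c]
\ \le\ e^{\epsilon}\,Pr[\mathcal M(\bm u')\in Z]+\delta,
\end{align*}
which is $(\epsilon,\delta)$-DP. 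The two densification variants are treated identically; only the law of $X$, and hence the value of $N$, changes.

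The step I expect to be the main obstacle is the bookkeeping of randomness. One cannot simply ``compose $N$ randomized responses at budget $\epsilon/N$'': the number of hash coordinates that actually change, $X$, is data-dependent and random, not a fixed count $N$. The resolution is precisely the conditioning above --- freeze $\mathcal R$, apply exact pure-DP composition over the now-deterministic $X(\mathcal R)$ differing coordinates, and only afterwards average over $\mathcal R$, paying $\delta$ for the tail $\{X>N\}$. A secondary subtlety is the stochastic-monotonicity in $f$ needed to pass from Lemma~\ref{lemma:distribution-of-X} (stated for a vector with exactly $f$ nonzeros) to the algorithm's ``at least $f$'' setting; making that rigorous requires a stochastic-dominance comparison of the terms in \eqref{eqn:distribution-of-X}, or else simply restating the guarantee in terms of the exact nonzero count.
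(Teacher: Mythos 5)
Your proof is correct and takes essentially the same route as the paper's: bound the per-coordinate randomized-response likelihood ratio, multiply over the set of coordinates where $h_k(\bm u)\neq h_k(\bm u')$, and invoke Lemma~\ref{lemma:distribution-of-X} to get $|S|\le N$ with probability $1-\delta$. You are somewhat more careful than the paper in two places — you write out explicitly the conditioning on the hashing randomness and the final decomposition converting ``privacy loss $\le\epsilon$ with probability $1-\delta$'' into the Definition~\ref{def:DP} inequality, and you flag the stochastic-monotonicity-in-$f$ step needed because $f$ is only a lower bound — but these are refinements of the same argument, not a different approach.
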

\begin{proof}
Let $\bm u$ and $\bm u'$ be neighbors only differing in one element. Denote $S=\{k\in [K]:h_k(\bm u)\neq h_k(\bm u')\}$ and $S^c=[K]\setminus S$. As discussed before, we can verify that for $k\in S_c$, we have $\frac{P(\tilde h_k(\bm u)=i)}{P(\tilde h_k(\bm u')=i)}=1$ for any $i=0,...,2^b-1$. For $k\in S$, $e^{-\epsilon'}\leq \frac{P(\tilde h_k(\bm u)=i)}{P(\tilde h_k(\bm u')=i)}\leq e^{\epsilon'}$ holds for any $i=0,...,2^b-1$. Thus, for any $Z\in \{0,...,2^b-1\}^K$, the absolute privacy loss can be bounded by
\begin{align}
    \left|\log\frac{P(\tilde h(\bm u)=Z)}{P(\tilde h(\bm u')=Z)}\right|=\left|\log\prod_{k\in S} \frac{P(\tilde h_k(\bm u)=i)}{P(\tilde h_k(\bm u')=i)}\right|\leq |S|\epsilon'= |S|\frac{\epsilon}{N}.  \label{eqn:DP-OPH-1}
\end{align}
By Lemma~\ref{lemma:distribution-of-X}, we know that with probability at least $1-\delta$, $|S|\leq F_{fix}^{-1}(1-\delta)=N$ for DP-OPH-fix, and $|S|\leq F_{re}^{-1}(1-\delta)=N$ for DP-OPH-re, respectively. Hence, (\ref{eqn:DP-OPH-1}) is upper bounded by $\epsilon$ with probability $1-\delta$. This proves the $(\epsilon,\delta)$-DP.
\end{proof}

\noindent\textbf{DP-OPH without densification.} From the practical perspective, we may also privatize the OPH without densification (i.e., add DP to the output of Algorithm~\ref{alg:OPH}). The first step is to take the lowest $b$ bits of every non-empty hash and get $K$ hash values from $\{0,...,2^b-1\}\cup \{E\}$. Then, for non-empty bins, we keep the hash value with probability $\frac{e^{\epsilon}}{e^{\epsilon}+2^b-1}$, and randomly flip it otherwise. For empty bins (i.e., $h_k(\bm u)=E$), we simply assign a random value in $\{0,...,2^b-1\}$ to $\tilde h_k(\bm u)$. The formal procedure of this so-called DP-OPH-rand method is summarized in Algorithm~\ref{alg:DP-OPH-randbit}.

\begin{algorithm}[h]
	\textbf{Input:} OPH hash values $h_1(\bm u),...,h_K(\bm u)$ from Algorithm~\ref{alg:OPH}; number of bits $b$; $\epsilon>0$

	\textbf{Output:} DP-OPH-rand hash values $\tilde h(\bm u)=[\tilde h_1(\bm u),...,\tilde h_K(\bm u)]$
	
\begin{algorithmic}[1]	

    \State Take the lowest $b$ bits of all hash values \Comment{After which $h_k(\bm u)\in \{0,...,2^b-1\}$}

	\For{ $k=1$ to $K$}
	
	\If{$h_k(\bm u)\neq E$}
	\State $\tilde h_k(\bm u)=
    \begin{cases}
    h_k(\bm u), & \text{with probability}\ \frac{e^{\epsilon}}{e^{\epsilon}+2^b-1}\\
    i, & \text{with probability}\ \frac{1}{e^{\epsilon'}+2^b-1},\ \text{for}\ i\in \{0,...,2^b-1\},\ i\neq h_k(\bm u)
    \end{cases}$
    
    \Else
    
    \State $\tilde h_k(\bm u)=i$ with probability $\frac{1}{2^b}$, for $i=0,...,2^b-1$  \Comment{Assign random bits to empty bin}
    
    \EndIf

	\EndFor
\end{algorithmic}
\caption{Differentially Private One Permutation Hashing with Random Bits (DP-OPH-rand)}
\label{alg:DP-OPH-randbit}
\end{algorithm}

\newpage

\begin{theorem}  \label{theo:DP-OPH-rand}
Algorithm~\ref{alg:DP-OPH-randbit} achieves $\epsilon$-DP.
\end{theorem}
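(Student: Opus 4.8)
The plan is to reduce the $K$-coordinate output to the single bin affected by the one-element change, and then carry out a short randomized-response computation on that bin. First I would note that $Range(\mathcal M)=\{0,\dots,2^b-1\}^K$ is finite, so by Definition~\ref{def:DP} it suffices to prove $e^{-\epsilon}\le P(\tilde h(\bm u)=Z)/P(\tilde h(\bm u')=Z)\le e^{\epsilon}$ for every output $Z=(z_1,\dots,z_K)$ and every neighboring pair; establishing the two-sided bound also takes care of swapping $\bm u$ and $\bm u'$. Fix neighbors differing in coordinate $i$; by the symmetry of DP assume $u_i=1$, $u_i'=0$, and let $k^*$ be the (data-independent) bin with $i\in\mathcal B_{k^*}$.

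Next I would record the key structural fact that there is no densification in Algorithm~\ref{alg:OPH}: the bins $\mathcal B_1,\dots,\mathcal B_K$ depend only on the public permutation $\pi$, and $\bm u,\bm u'$ agree on every coordinate outside $\mathcal B_{k^*}$, so $h_k(\bm u)=h_k(\bm u')$ for all $k\ne k^*$ — in particular bin $k\ne k^*$ is empty for $\bm u$ if and only if it is empty for $\bm u'$ — while bin $k^*$ is non-empty for $\bm u$ since it contains $i$. Because the per-bin perturbations in Algorithm~\ref{alg:DP-OPH-randbit} are mutually independent given $\pi$, the joint law factorizes, so $P(\tilde h(\bm u)=Z)/P(\tilde h(\bm u')=Z)=\prod_{k=1}^K P(\tilde h_k(\bm u)=z_k)/P(\tilde h_k(\bm u')=z_k)$, and every factor with $k\ne k^*$ equals $1$ (identical randomized response on identical hash values, or the identical uniform assignment on a commonly-empty bin). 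Hence the entire ratio equals the single factor at $k=k^*$, and it remains to bound that factor.

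There are two cases for bin $k^*$ under $\bm u'$. If it is still non-empty, both $\tilde h_{k^*}(\bm u)$ and $\tilde h_{k^*}(\bm u')$ come from the randomized response in line~4 of Algorithm~\ref{alg:DP-OPH-randbit} (on possibly different inputs in $\{0,\dots,2^b-1\}$), and the standard computation gives a ratio in $\{1,e^{\epsilon},e^{-\epsilon}\}\subset[e^{-\epsilon},e^{\epsilon}]$. If bin $k^*$ becomes empty under $\bm u'$ (i.e.\ $i$ was its only nonzero), then $\tilde h_{k^*}(\bm u)$ is the randomized response on $h_{k^*}(\bm u)$ whereas $\tilde h_{k^*}(\bm u')$ is uniform on $\{0,\dots,2^b-1\}$, so the ratio is $\frac{2^b e^{\epsilon}}{e^{\epsilon}+2^b-1}$ if $z_{k^*}=h_{k^*}(\bm u)$ and $\frac{2^b}{e^{\epsilon}+2^b-1}$ otherwise; I would then check that both quantities lie in $[e^{-\epsilon},e^{\epsilon}]$, which reduces to the elementary inequalities $2^b\ge 1$ and $\epsilon\ge 0$. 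Combining the two cases with the factorization above yields $e^{-\epsilon}\le P(\tilde h(\bm u)=Z)/P(\tilde h(\bm u')=Z)\le e^{\epsilon}$ for all $Z$, i.e.\ $\epsilon$-DP.

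I expect the only genuine content to be this last case, the comparison of a randomized-response distribution against the uniform distribution used for empty bins. This is precisely what dictates the empty-bin rule (assign each value with probability exactly $1/2^b$), and verifying the $e^{\pm\epsilon}$ bound there — rather than the routine non-empty case or the bookkeeping that all unaffected coordinates cancel — is the step that has to be done carefully.
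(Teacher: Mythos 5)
Your proof is correct and follows essentially the same route as the paper's: isolate the single bin $k^*$ containing the changed coordinate, cancel all other (identically distributed) factors, and split into the two cases of bin $k^*$ remaining non-empty versus becoming empty under $\bm u'$. The explicit ratios $\frac{2^b e^{\epsilon}}{e^{\epsilon}+2^b-1}$ and $\frac{2^b}{e^{\epsilon}+2^b-1}$ you compute for the empty-bin case are exactly the quantities the paper bounds, just written out more carefully.
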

\begin{proof}
The proof is similar to the proof of Theorem~\ref{theo:DP-OPH-densification}. Since the original hash vector $h(\bm u)$ is not densified, there only exists exactly one hash value such that $h_k(\bm u)\neq h_k(\bm u)$ may happen for $\bm u'$ that differs in one element from $\bm u$. W.l.o.g., assume $u_i=1$ and $u_i'=0$, and $i\in \mathcal B_k$. If bin $k$ is non-empty for both $\bm u$ and $\bm u'$ (after permutation), then for any $Z\in \{0,...,2^b-1\}^K$, $\left|\log\frac{P(\tilde h(\bm u)=Z)}{P(\tilde h(\bm u')=Z)}\right|\leq \epsilon$ according to our analysis in Theorem~\ref{theo:DP-OPH-densification} (the probability of hash in $[K]\setminus \{k\}$ cancels out). If bin $k$ is empty for $\bm u'$, since $1\leq \frac{e^{\epsilon}}{e^{\epsilon}+2^b-1}/\frac{1}{2^b}\leq e^\epsilon$ and $e^{-\epsilon}\leq \frac{1}{2^b}/\frac{1}{e^{\epsilon}+2^b-1}\leq 1$, we also have $\left|\log\frac{P(\tilde h(\bm u)=Z)}{P(\tilde h(\bm u')=Z)}\right|\leq \epsilon$. Therefore, the algorithm is $\epsilon$-DP.
\end{proof}

\vspace{0.1in}

Compared with Algorithm~\ref{alg:DP-OPH-densification}, DP-OPH-rand has smaller bit flipping probability (effectively, $N\equiv 1$ in Algorithm~\ref{alg:DP-OPH-densification}). This demonstrates the essential benefit of ``binning'' in OPH, since the change in one data coordinate will only affect one hash value (if densification is not applied). As a consequence, the non-empty hash values are less perturbed in DP-OPH-rand than in DP-OPH-fix or DP-OPH-re. However, this comes with an extra cost as we have to assign random bits to empty bins, which do not provide  useful information about the data. Moreover, this extra cost does not diminish as $\epsilon$ increases, because the number of empty bins only depends on the data  and $K$. Our experiments in Section~\ref{sec:experiment} will provide more intuition on properly balancing this trade-off in practice.

\subsection{Differential Private MinHash (DP-MH)}   \label{sec:DP-MH}

\begin{algorithm}[h]
	\textbf{Input:} MinHash values $h_1(\bm u),...,h_K(\bm u)$; number of bits $b$; $\epsilon>0$, $0<\delta<1$
	
	\hspace{0.38in} $f$: lower bound on the number of non-zeros in each data vector

	\textbf{Output:} DP-MH hash values $\tilde h(\bm u)=[\tilde h_1(\bm u),...,\tilde h_K(\bm u)]$
	
\begin{algorithmic}[1]	

    \State Take the lowest $b$ bits of all hash values \Comment{After which $h_k(\bm u)\in \{0,...,2^b-1\}$}

    \State Set {\color{red} $N=F_{bino}^{-1}(1-\delta;\frac{1}{f},K)$}, and $\epsilon'=\epsilon/N$ \Comment{This is different from~\citet{aumuller2020differentially}}

	\For{ $k=1$ to $K$}
	
	\State $\tilde h_k(\bm u)=
    \begin{cases}
    h_k(\bm u), & \text{with probability}\ \frac{e^{\epsilon'}}{e^{\epsilon'}+2^b-1}\\
    i, & \text{with probability}\ \frac{1}{e^{\epsilon'}+2^b-1},\ \text{for}\ i\in \{0,...,2^b-1\},\ i\neq h_k(\bm u)
    \end{cases}$

	\EndFor
\end{algorithmic}
\caption{Differentially Private Minwise hashing (DP-MH)}
\label{alg:DP-MinHash}
\end{algorithm}

While we have presented our main contributions on the DP-OPH algorithms, we also discuss the DP MinHash (DP-MH) method (Algorithm~\ref{alg:DP-MinHash}) as a baseline comparison. The general mechanism of DP-MH is the same as densified DP-OPH . The main difference between Algorithm~\ref{alg:DP-MinHash} and Algorithm~\ref{alg:DP-OPH-densification} is in the calculation of $N$. In Algorithm~\ref{alg:DP-MinHash}, we set $N=F_{bino}^{-1}(1-\delta;\frac{1}{f},K)$ where $F_{bino}(x;p,n)$ is the cumulative mass function of $Binomial(p,n)$ with $n$ trials and success probability $p$.

\begin{theorem}  \label{theo:DP-MH-privacy}
Algorithm~\ref{alg:DP-MinHash} is $(\epsilon,\delta)$-DP.
\end{theorem}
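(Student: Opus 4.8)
The plan is to follow the template of the proof of Theorem~\ref{theo:DP-OPH-densification} essentially verbatim, the only genuinely new ingredient being the distribution of the number of altered hash values. Recall that for DP-OPH with densification the count $X=\sum_{k=1}^K\mathbbm 1\{h_k(\bm u)\neq h_k(\bm u')\}$ required the delicate combinatorics of Lemma~\ref{lemma:distribution-of-X}, because densification couples bins; for MinHash no such coupling occurs, so $X$ will instead be handled directly and cheaply. First I would fix neighbors $\bm u,\bm u'$ differing only in coordinate $i$, and by the symmetry of Definition~\ref{def:DP} assume $u_i=1$, $u_i'=0$; write $f_u=|\{j:u_j=1\}|$, which satisfies $f_u\ge f$ (in fact $f_u\ge f+1$, since $\bm u'$ is also a valid input with at least $f$ non-zeros).

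The key step is to bound $X$. For each $k$, $h_k(\bm u)=\min_{j:u_j=1}\pi_k(j)$ and $h_k(\bm u')=\min_{j:u_j=1,\,j\neq i}\pi_k(j)$, so (before $b$-bit truncation) these differ only on the event that $\pi_k(i)<\pi_k(j)$ for every $j\neq i$ with $u_j=1$, which has probability $1/f_u\le 1/f$ over the random permutation $\pi_k$; the uniform re-hash onto $\{0,\dots,2^b-1\}$ used to extract the low $b$ bits only decreases this probability (by a factor $1-2^{-b}$). Since $\pi_1,\dots,\pi_K$ are independent, the indicators $\mathbbm 1\{h_k(\bm u)\neq h_k(\bm u')\}$ are independent Bernoulli variables with success probability at most $1/f$, hence $X$ is stochastically dominated by $Binomial(1/f,K)$. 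Therefore, with $N=F_{bino}^{-1}(1-\delta;\frac{1}{f},K)$ and $\epsilon'=\epsilon/N$ as set in Algorithm~\ref{alg:DP-MinHash}, we get $P(X\le N)\ge F_{bino}(N;\frac{1}{f},K)\ge 1-\delta$.

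It remains to transfer this to a DP statement exactly as in Theorem~\ref{theo:DP-OPH-densification}. Conditioning on any realization of the (public) permutations with $|S|\le N$, where $S=\{k:h_k(\bm u)\neq h_k(\bm u')\}$, the randomized-response analysis already carried out in the text gives $\frac{P(\tilde h_k(\bm u)=i)}{P(\tilde h_k(\bm u')=i)}=1$ for $k\notin S$ and $e^{-\epsilon'}\le\frac{P(\tilde h_k(\bm u)=i)}{P(\tilde h_k(\bm u')=i)}\le e^{\epsilon'}$ for $k\in S$; multiplying over the independent per-coordinate flips yields, for every $Z\in\{0,\dots,2^b-1\}^K$,
\begin{align*}
    \left|\log\frac{P(\tilde h(\bm u)=Z)}{P(\tilde h(\bm u')=Z)}\right|\le|S|\,\epsilon'=|S|\frac{\epsilon}{N}\le\epsilon,
\end{align*}
so the bit-flipping mechanism is $\epsilon$-DP on this ``good'' event $G$. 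Finally I would combine the two pieces over the permutation randomness: since $P(G)\ge1-\delta$, for any $Z$ we have $P(\mathcal M(\bm u)\in Z)\le P(G^c)+\mathbb E[\mathbbm 1\{G\}\,P(\mathcal M(\bm u)\in Z\mid\text{perm})]\le\delta+e^\epsilon P(\mathcal M(\bm u')\in Z)$, which is precisely $(\epsilon,\delta)$-DP. I do not anticipate a real obstacle: the only points needing care are getting the direction of the stochastic domination right (bounding the per-coordinate change probability by $1/f$ via the non-zero count of the \emph{denser} of the two neighbors, which is still at least $f$) and noting that the low-$b$-bit truncation is harmless.
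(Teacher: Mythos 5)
Your proof is correct and follows essentially the same route as the paper: bound the number of differing hash values $X$ by the $(1-\delta)$-quantile of $Binomial(\frac{1}{f},K)$ using the independence of the $K$ permutations, then apply the randomized-response composition argument from Theorem~\ref{theo:DP-OPH-densification} on the good event. If anything, you are slightly more careful than the paper, which asserts $X\sim Binomial(\frac{1}{f},K)$ exactly rather than noting the stochastic domination via $1/f_u\le 1/f$ and the harmless effect of the $b$-bit truncation, and which leaves the final good-event-to-$(\epsilon,\delta)$ conversion implicit.
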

\begin{proof}
We use the same proof strategy for Theorem~\ref{theo:DP-OPH-densification}. To compute the high probability bound on the number of difference hashes $X=\sum_{k=1}^K \mathbbm 1\{h_k(\bm u)\neq h_k(\bm u')\}$ for two neighboring data vectors $\bm u$ and $\bm u'$, note that in MinHash, $\mathbbm 1\{h_k(\bm u)\neq h_k(\bm u')\}$ are independent Bernoulli random variables with success rate $1/f$. Thus, $X$ follows $Binomial(\frac{1}{f},K)$. With probability $1-\delta$, $X\leq F_{bino}^{-1}(1-\delta;\frac{1}{f},K)=N$. With this key quantity, the remaining part of the proof follows.
\end{proof}

\newpage

In a related work, \cite{aumuller2020differentially} also proposed to apply randomized response to MinHash. However, the authors incorrectly used a tail bound for the binomial distribution (see their Lemma 1) which is only valid for small deviation. In DP, $\delta$ is often very small (e.g., $10^{-6}$), so the large deviation tail bound should be used which is looser than the one used therein\footnote{For $X$ following a Binomial distribution with mean $\mu$, \cite{aumuller2020differentially} used the concentration inequality $P(X\geq (1+\xi)\mu)\leq \exp(-\frac{\xi^2\mu}{3})$, which only holds when $0\leq \xi\leq 1$. For large deviations (large $\xi$), the valid Binomial tail bound should be $P(X\geq (1+\xi)\mu)\leq \exp(-\frac{\xi^2\mu}{\xi+2})$.}. That said, in their paper, the perturbation is underdetermined and their method does not satisfy DP rigorously. In  Algorithm~\ref{alg:DP-MinHash}, we fix this by using the exact probability mass function to compute the tail probability, which also avoids any loss due to the use of concentration bounds.

\subsection{Comparison: Densified DP-OPH versus DP-MH}

We compare $N$, the ``privacy discount factor'', in DP-OPH-fix, DP-OPH-re (Algorithm~\ref{alg:DP-OPH-densification}) and DP-MH (Algorithm~\ref{alg:DP-MinHash}). Smaller $N$ implies a smaller bit flipping probability which benefits the utility. In Figure~\ref{fig:N}, we plot $N$ vs. $f$, for $D=1024$, $K=64$, and $\delta=10^{-6}$. A similar comparison also holds for other $D,K$ combinations. From the figure, we observe that $N$ in DP-OPH is typically smaller than that in DP-MH. Moreover, we also see that $N$ for DP-OPH-re is consistently smaller than that for DP-OPH-fix. This illustrates that re-randomization in the densification process is an important step to ensure better privacy.

\begin{figure}[h]

\mbox{\hspace{-0.15in}
    \includegraphics[width=2.3in]{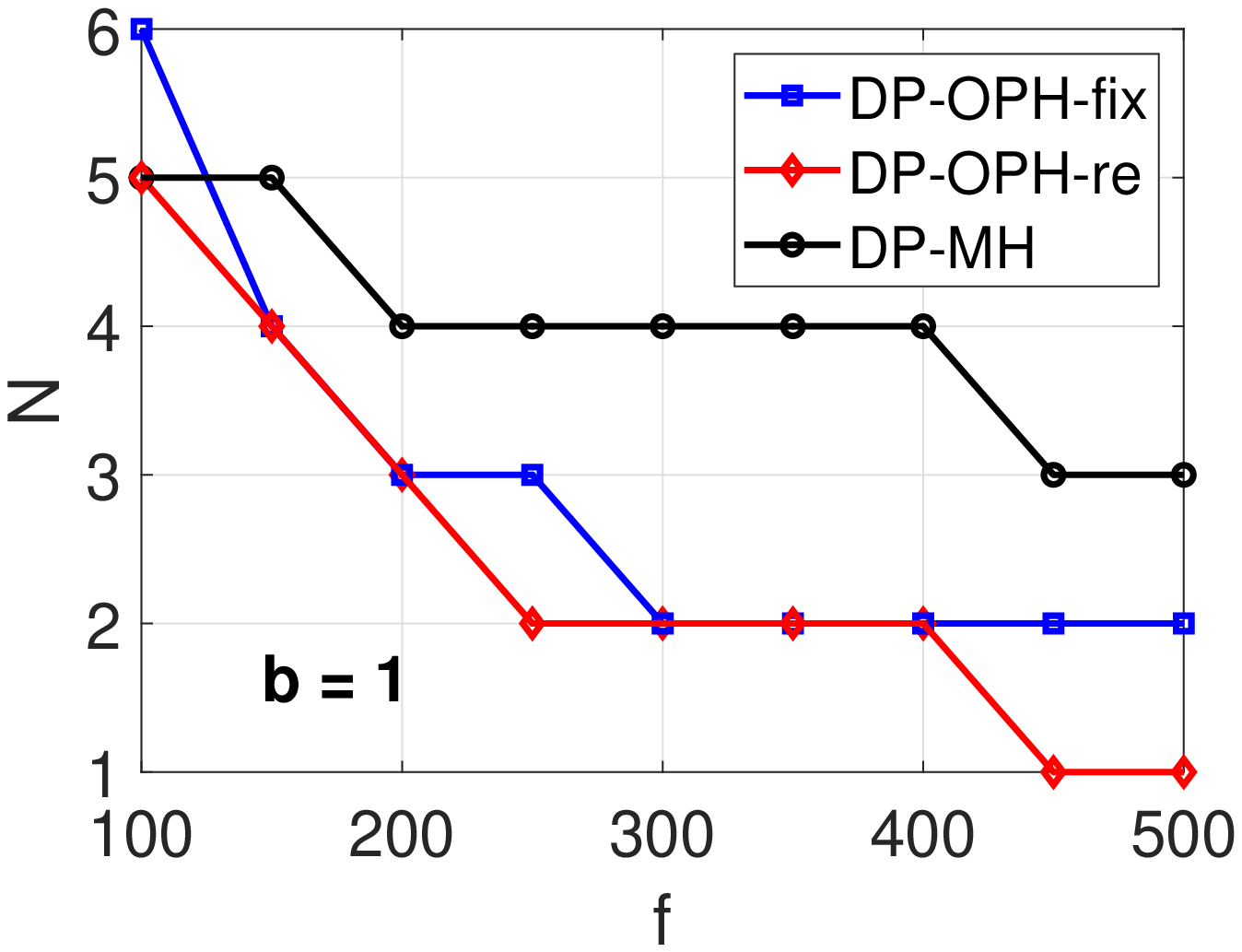} \hspace{-0.15in}
    \includegraphics[width=2.3in]{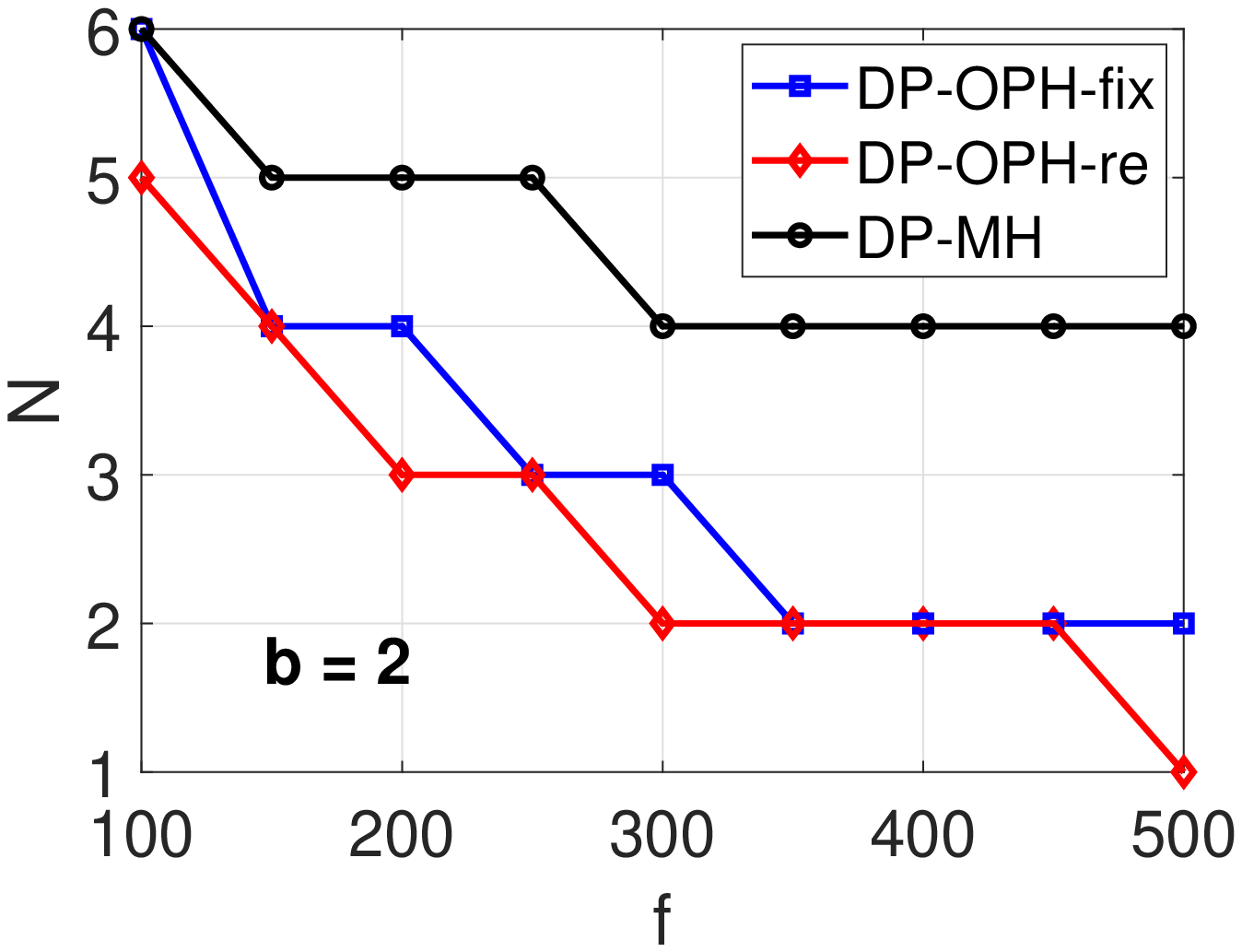} \hspace{-0.15in}
    \includegraphics[width=2.3in]{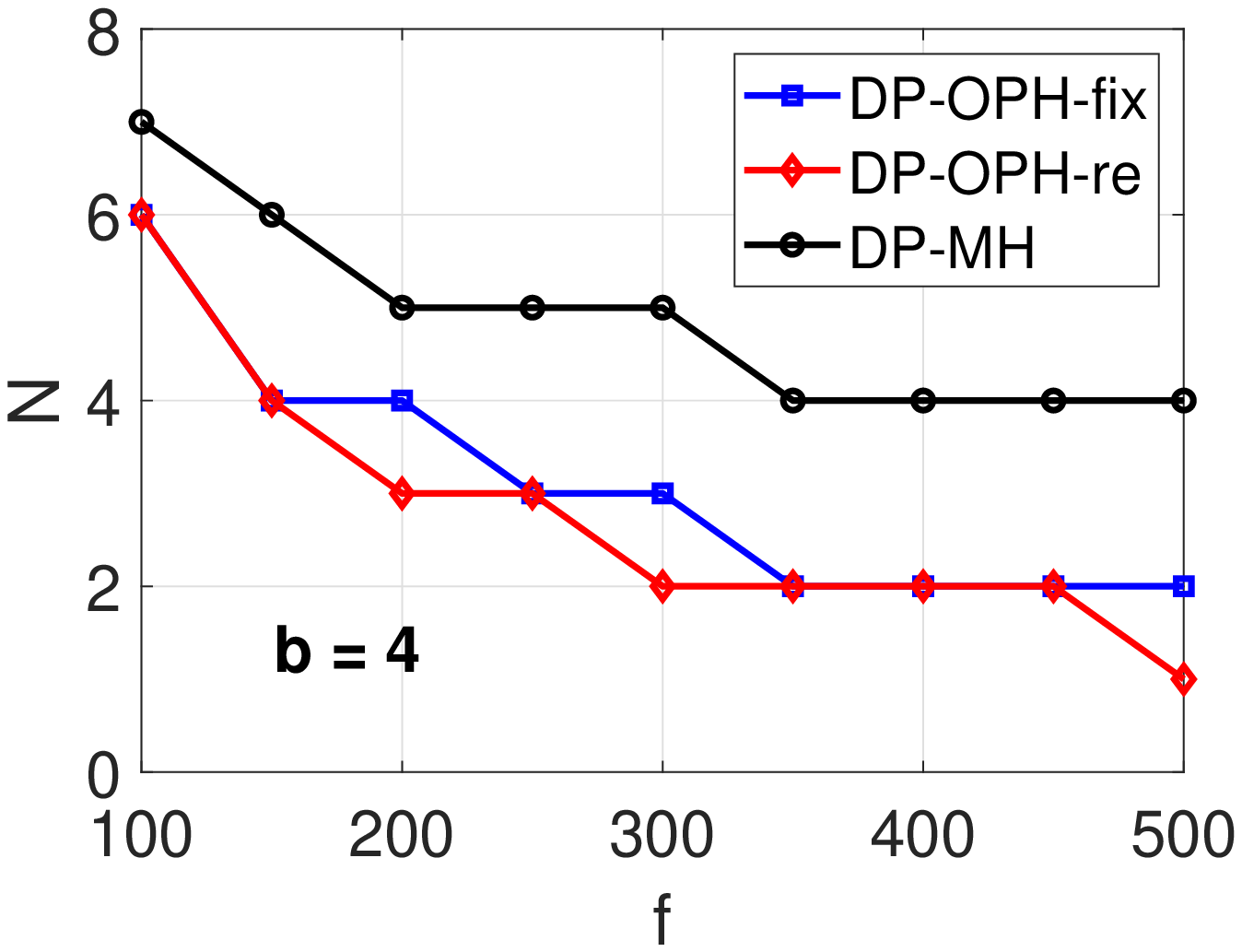} 
    }
    
\vspace{-0.15in}

\caption{Comparison of the privacy discount factor $N$ for densified DP-OPH and DP-MH, against the number of non-zero elements in the data vector $f$. $D=1024, K=64,\delta=10^{-6}$.}
\label{fig:N}
\end{figure}

We further compare the Jaccard estimation accuracy. For the two densified DP-OPH variants, DP-OPH-fix and DP-OPH-re, and the DP MinHash (DP-MH) methods, each full-precision (and unprivatized) hash value of $h(\bm u)$ and $h(\bm v)$ has collision probability equal to $P(h(\bm u)=h(\bm v))=J(\bm u,\bm v)$. Let $h^{(b)}(\bm u)$ denote the $b$-bit hash values. Since we assume the lowest $b$ bits are uniformly assigned, we have $P(h^{(b)}(\bm u)=h^{(b)}(\bm v))=J+(1-J)\frac{1}{B}$. Denote $p=\frac{\exp(\epsilon/N)}{\exp(\epsilon/N)+2^b-1}$. By simple probability calculation~\citep{aumuller2020differentially}, the privatized $b$-bit hash values has collision probability
\begin{align*}
    &P(\tilde h(\bm u)=\tilde h(\bm v))\\
    &=P(\tilde h(\bm u)=\tilde h(\bm v)|h^{(b)}(\bm u)=h^{(b)}(\bm v))P(h^{(b)}(\bm u)=h^{(b)}(\bm v)) \\
    &\hspace{2in} + P(\tilde h(\bm u)=\tilde h(\bm v)|h^{(b)}(\bm u)\neq h^{(b)}(\bm v))P(h^{(b)}(\bm u)\neq h^{(b)}(\bm v)) \\
    &=\left[ p^2+\frac{(1-p)^2}{2^b-1} \right]\left( \frac{1}{2^b}+\frac{2^b-1}{2^b}J \right) + \left[ \frac{2p(1-p)}{2^b-1} + \frac{2^b-2}{(2^b-1)^2}(1-p)^2 \right]\left( \frac{2^b-1}{2^b}-\frac{2^b-1}{2^b}J \right),
\end{align*}
which implies $J=\frac{(2^b-1)(2^b P(\tilde h(u)=\tilde h(v))-1)}{(2^b p -1)^2}$. Therefore, let $\hat J=\frac{1}{K}\sum_{k=1}^K \mathbbm 1\{\tilde h_k(u)=\tilde h_k(v)\}$, then an unbiased estimator of $J$ is 
\begin{align} \label{eqn:unbiased-est}
    \hat J_{unbias}=\frac{(2^b-1)(2^b \hat J-1)}{(2^b p -1)^2}.
\end{align}

To compare the mean squared error (MSE), we simulate two data vectors with $D=1024, K=64$, and $J=1/3$. In Figure~\ref{fig:MSE}, we vary $f$, the number of non-zeros per data vector, and report the empirical MSE of the unbiased estimator (\ref{eqn:unbiased-est}) for DP-OPH-fix, DP-OPH-re, and DP-MH, respectively. As we can see, the comparison is consistent with the comparison of $N$ in Figure~\ref{fig:N}, that the proposed DP-OPH-re has the smallest MSE among the three competitors. This again justifies the advantage of DP-OPH-re. with re-randomized densification.

\begin{figure}[h]
\centering
    \mbox{\hspace{-0.15in}
    \includegraphics[width=2.3in]{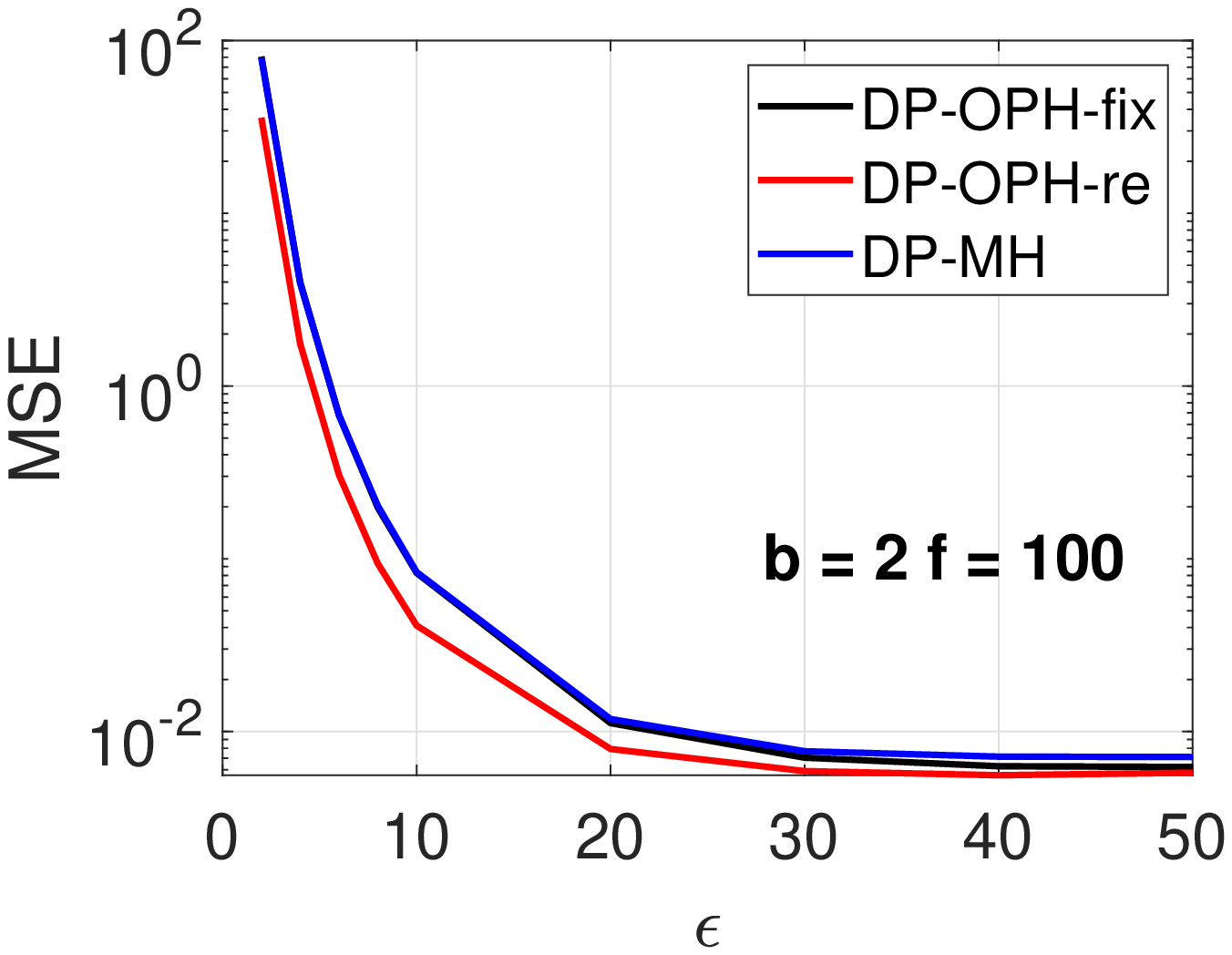} \hspace{-0.15in}
    \includegraphics[width=2.3in]{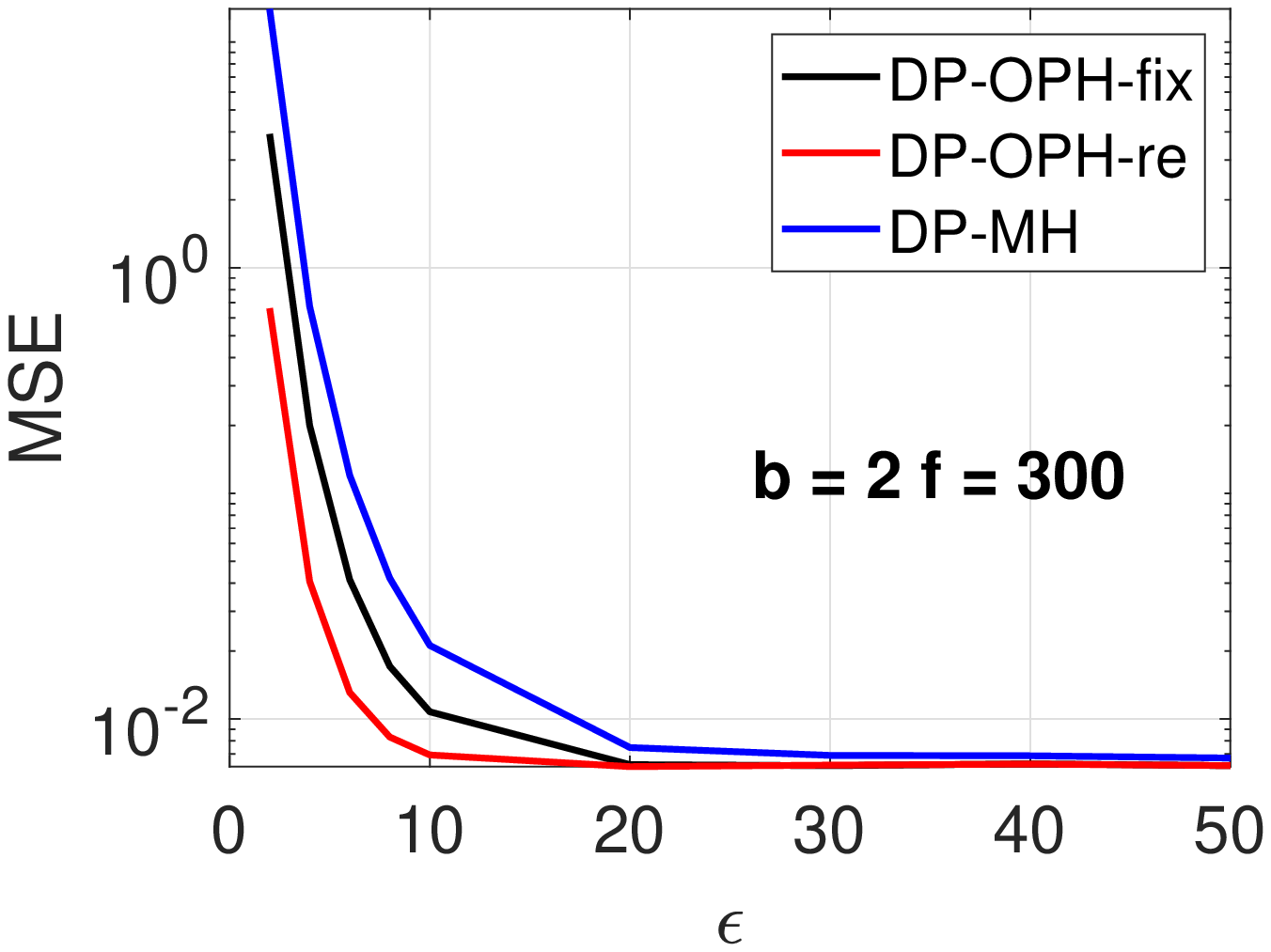} \hspace{-0.15in}
    \includegraphics[width=2.3in]{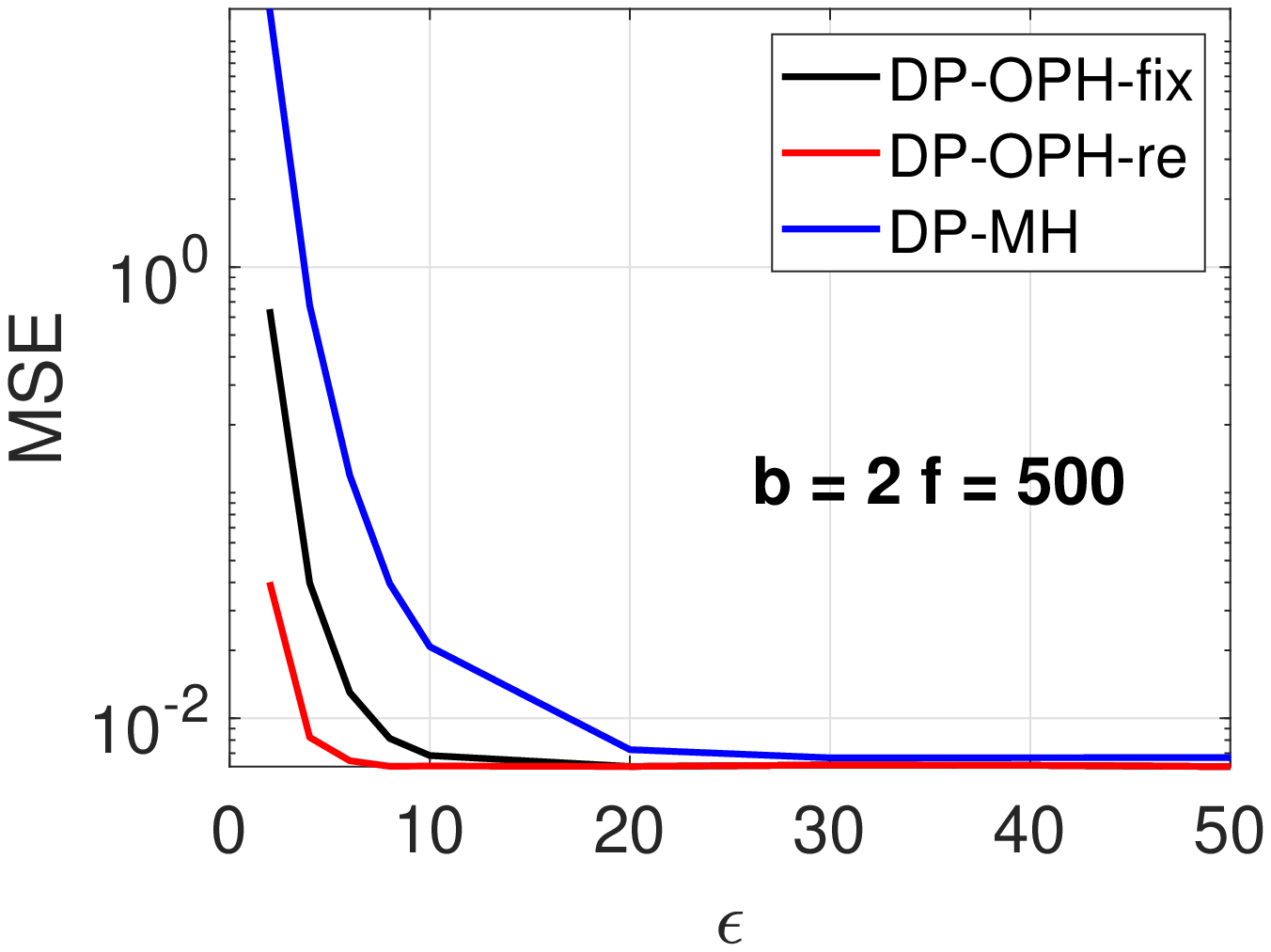} 
    }
\vspace{-0.2in}
\caption{Empirical MSE comparison of the unbiased Jaccard estimator (\ref{eqn:unbiased-est}) from DP-OPH-fix, DP-OPH-re and DP-MH. $D=1024, K=64,\delta=10^{-6}$.}
\label{fig:MSE}
\end{figure}

\section{Experiments}  \label{sec:experiment}

We conduct retrieval experiments to evaluate the proposed DP hashing methods on two public datasets: the MNIST~\citep{lecun1998mnist} hand-written digit dataset and the Webspam~\citep{chang2011libsvm} dataset. Both datasets are binarized  by setting the non-zero entries to 1. We use the train set as the database for retrieval candidates, and the test set as queries. For each query point, we set the ground truth (``gold-standard'' ) neighbors as the top 50 data points in the database with the highest Jaccard similarity to the query. To search with DP-OPH and DP-MH, we generate the private hash values and estimate the Jaccard similarity between the query and the data points using the collision estimator in the same form as (\ref{MH-estimator}). Then, we retrieve the data points with the highest estimated Jaccard similarity to the query. The evaluation metrics are precision and recall, defined as
\begin{align*}
    &\text{precision}@R=\frac{\text{\# of TPs in}\ R \ \text{retrieved points}}{R}, \quad \text{recall}@R=\frac{\text{\# of TPs in}\ R \ \text{retrieved points}}{\text{\# of gold-standard neighbors}}.
\end{align*}
For densified DP-OPH (Algorithm~\ref{alg:DP-OPH-densification}) and DP-MH (Algorithm~\ref{alg:DP-MinHash}), we ensure the lower bound $f$ on the number of non-zero elements by filtering the data points with at least $f$ non-zeros. For MNIST, we set $f=50$, which covers $99.9\%$ of the total sample points. For Webspam, we test $f=500$, which includes $90\%$ of the data points. We average the precision and recall metrics over all the query points. The reported results are averaged over 5 independent runs.

\newpage

\begin{figure}[t]

    \mbox{\hspace{-0.15in}
    \includegraphics[width=2.3in]{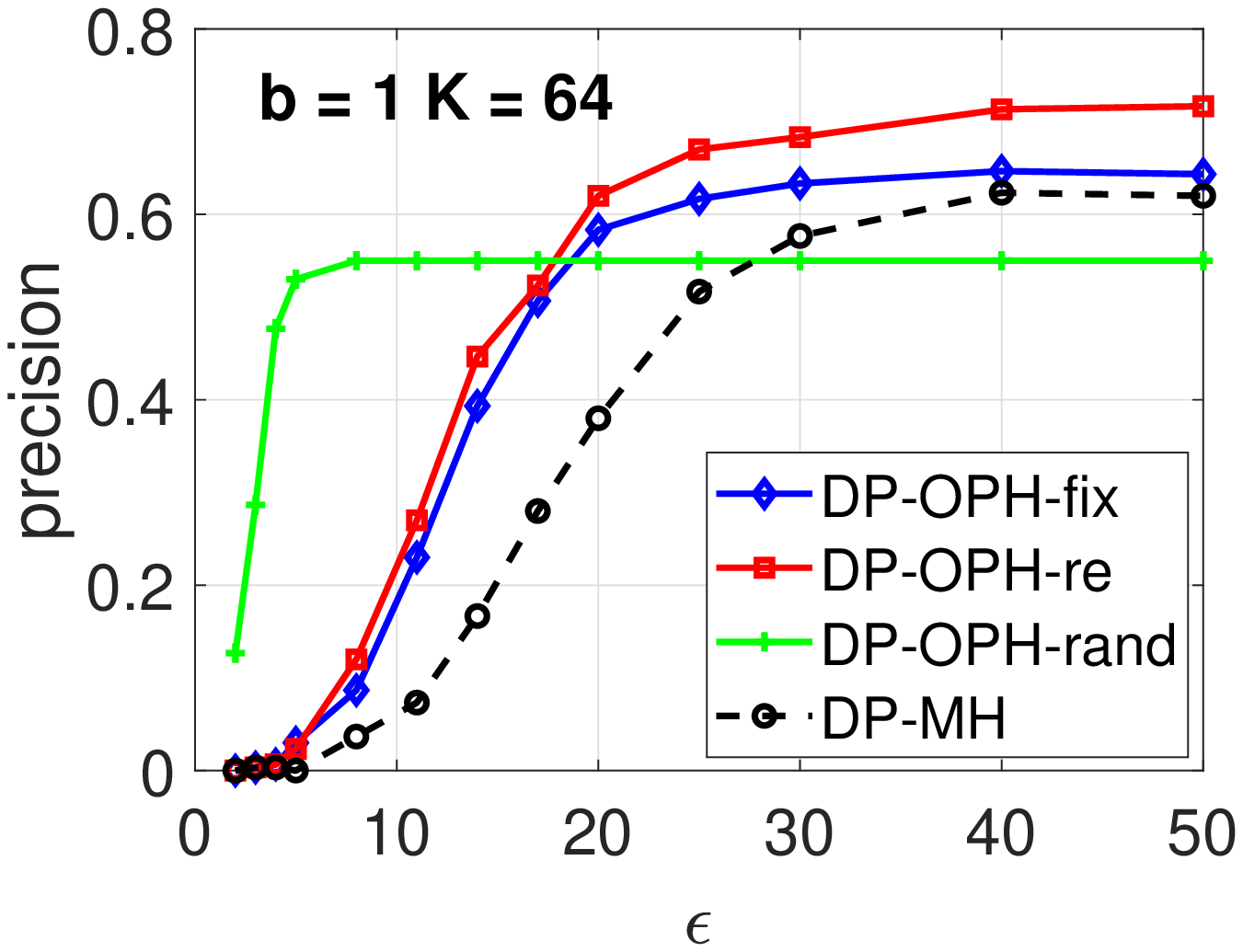} \hspace{-0.15in}
    \includegraphics[width=2.3in]{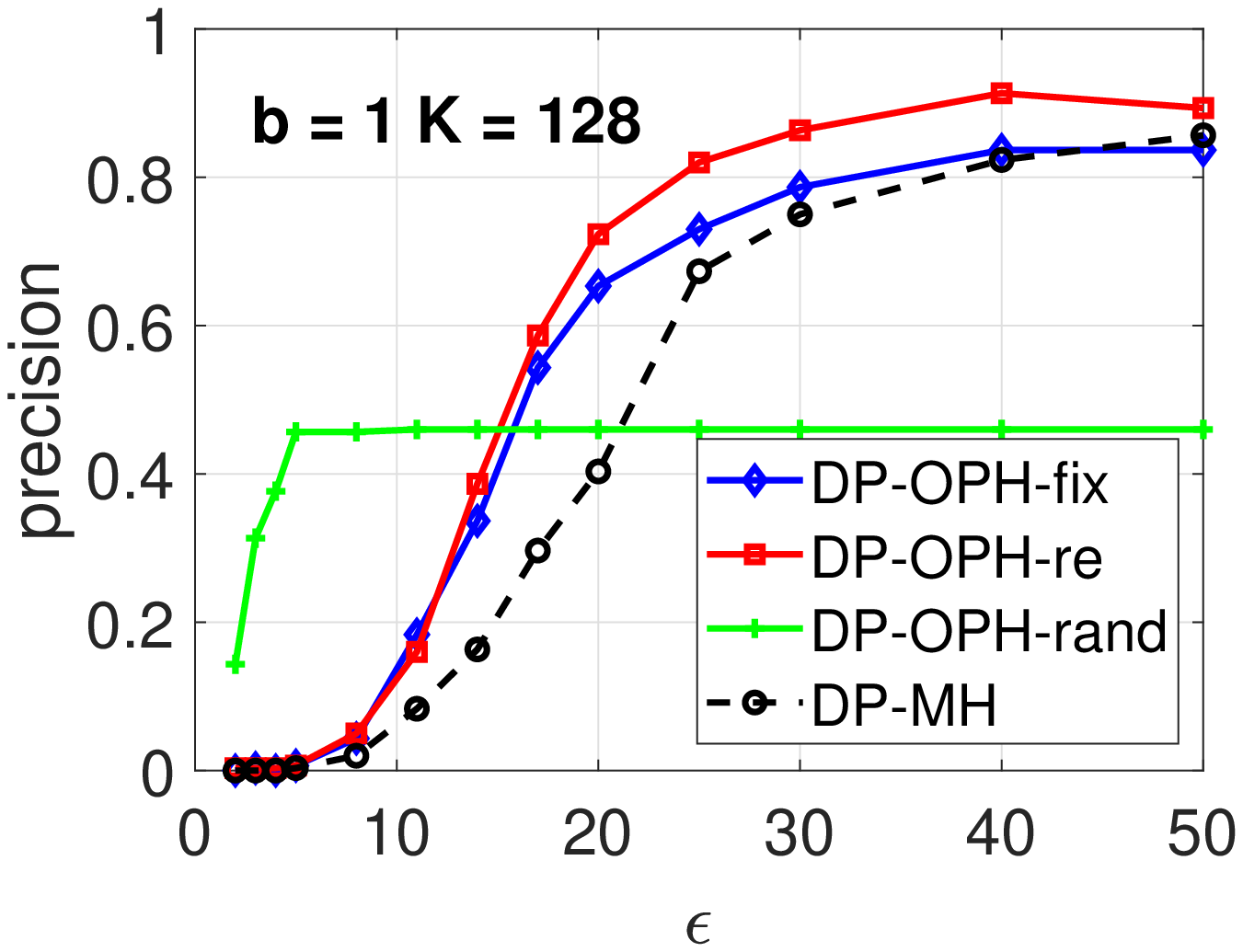} \hspace{-0.15in}
    \includegraphics[width=2.3in]{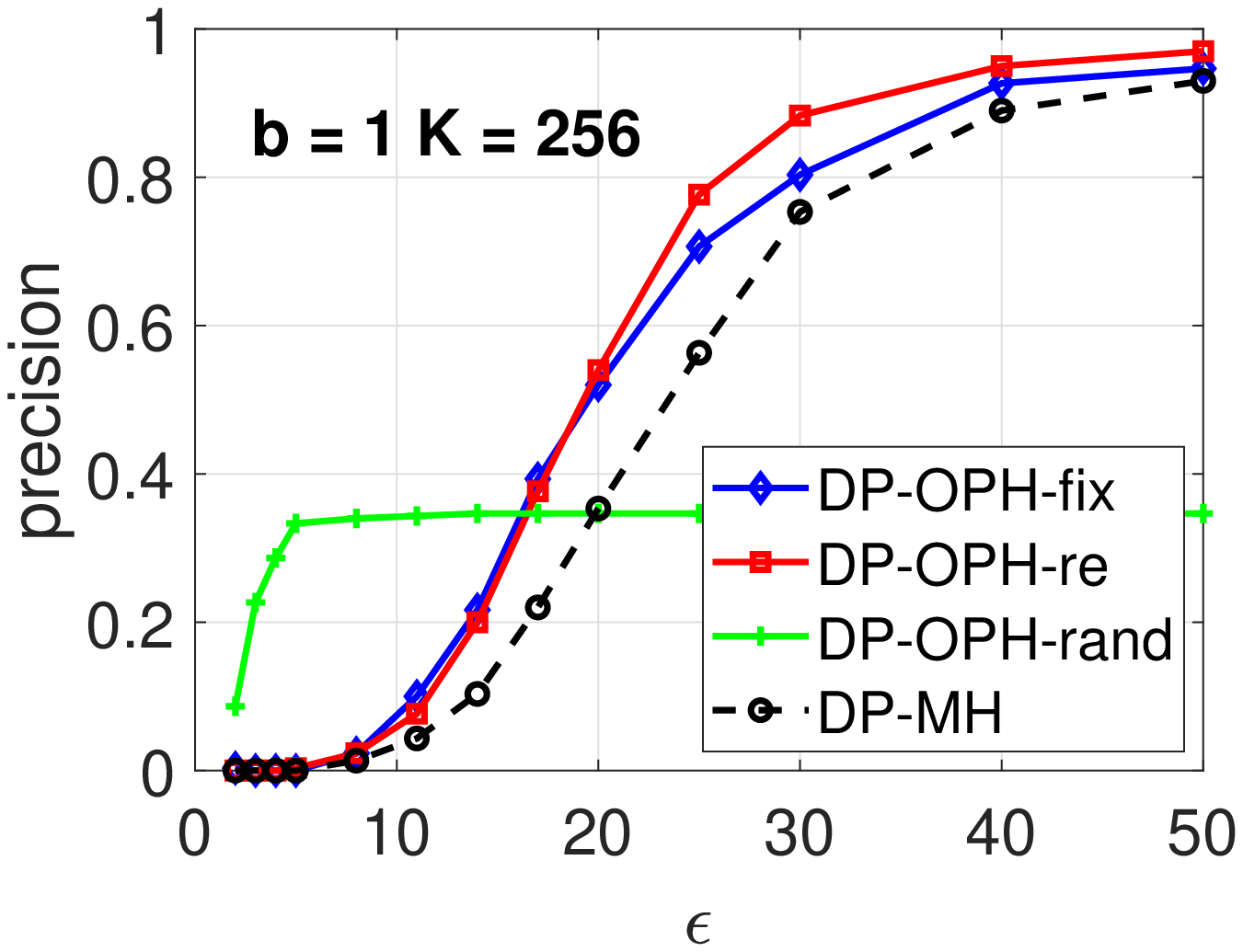}
    }
    
    \mbox{\hspace{-0.15in}
    \includegraphics[width=2.3in]{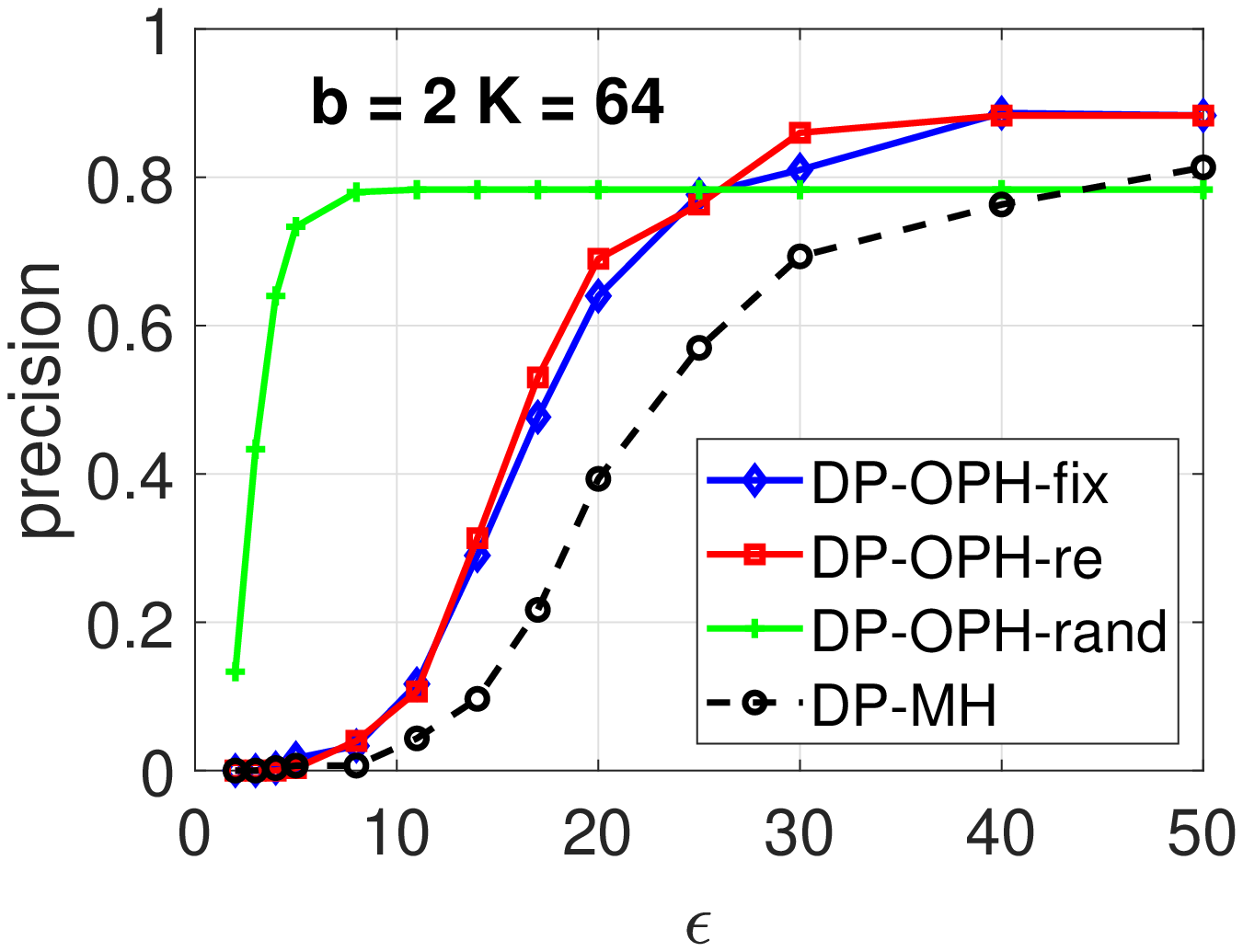} \hspace{-0.15in}
    \includegraphics[width=2.3in]{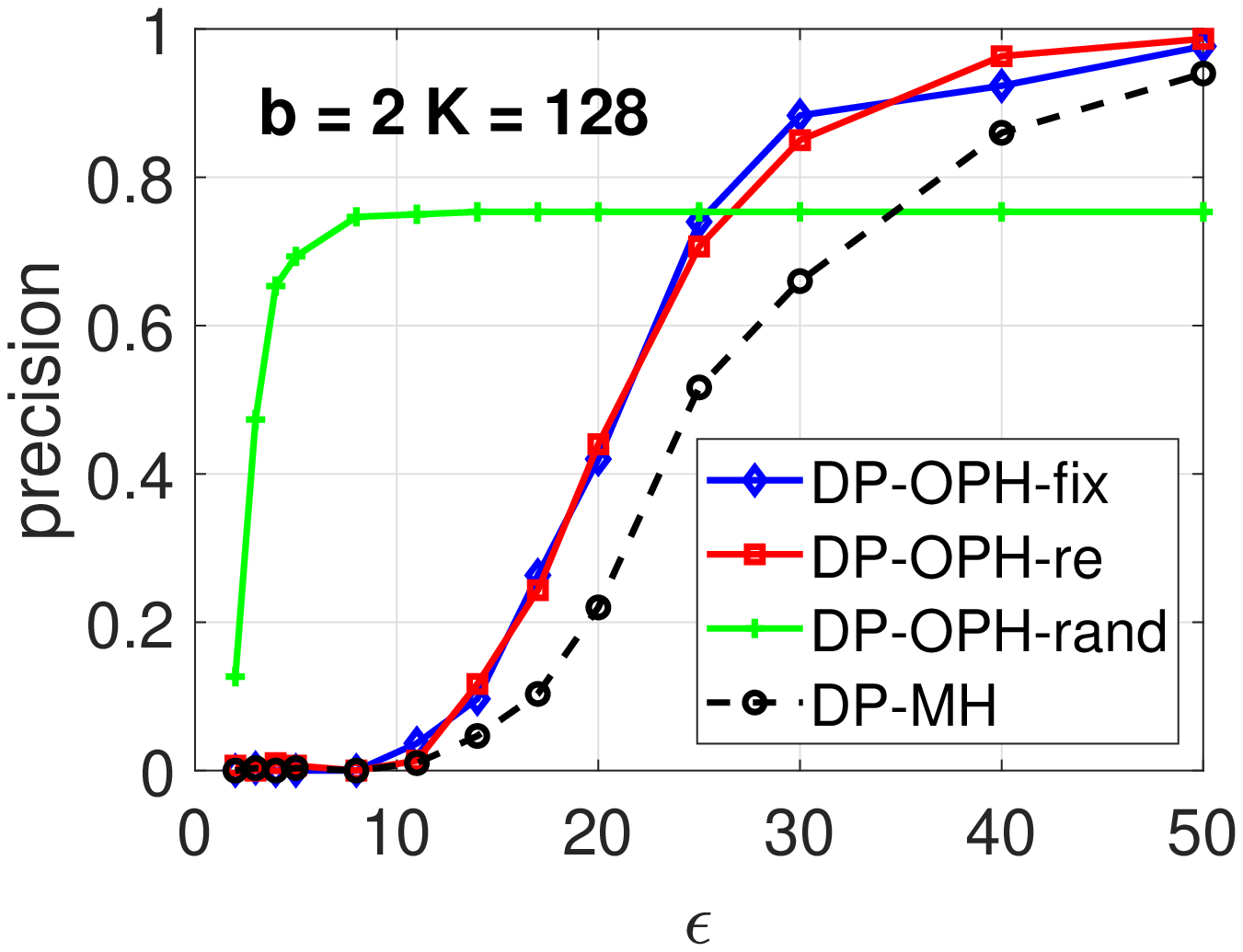} \hspace{-0.15in}
    \includegraphics[width=2.3in]{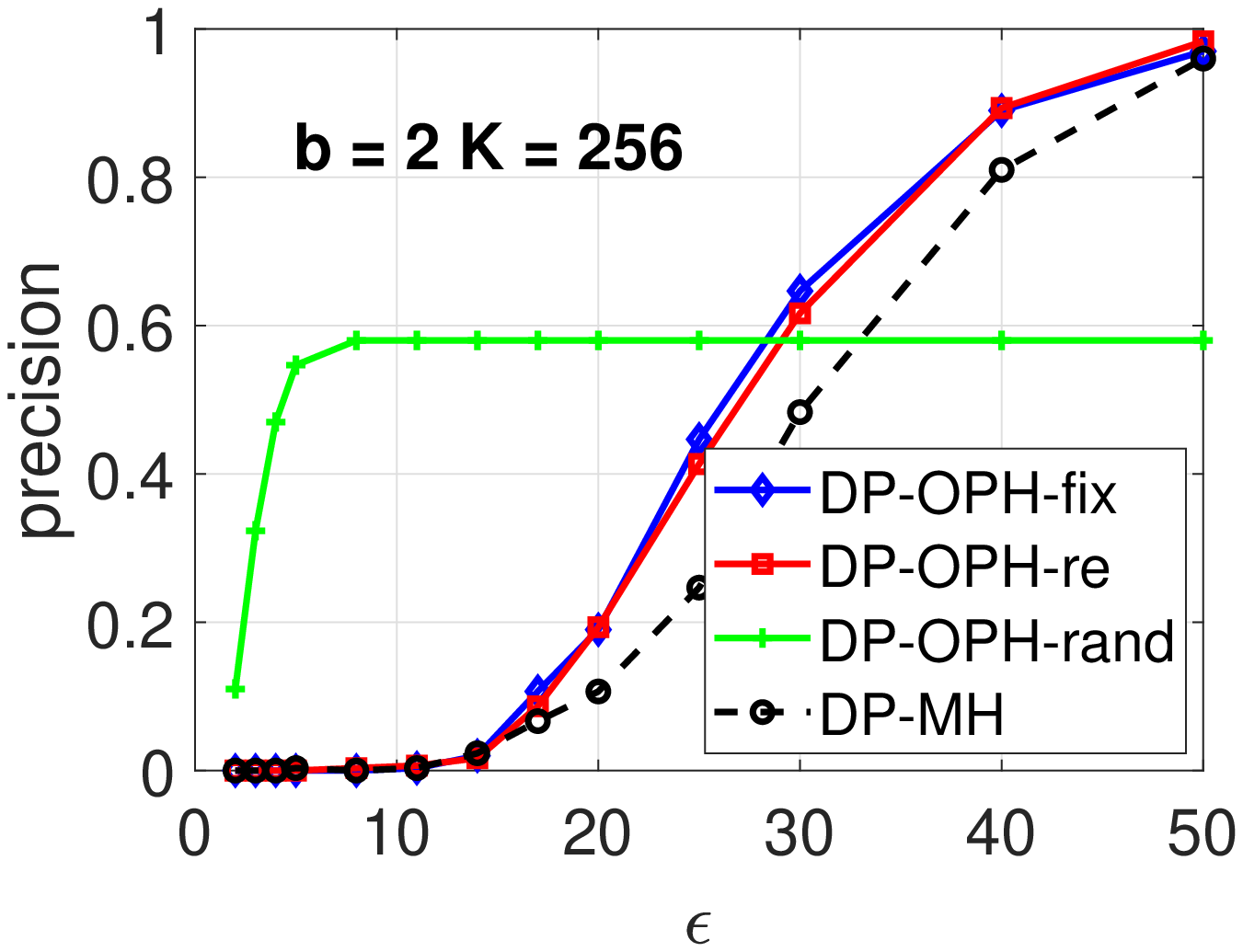}
    }

\vspace{-0.15in}

\caption{Precision@10 results on MNIST for DP-OPH variants and DP-MH, with different $K$. $\delta=10^{-6}$. First row: $b=1$. Second row: $b=2$.}
\label{fig:MNIST-b1b2}
\end{figure}

\begin{figure}[t]

    \mbox{\hspace{-0.15in}
    \includegraphics[width=2.3in]{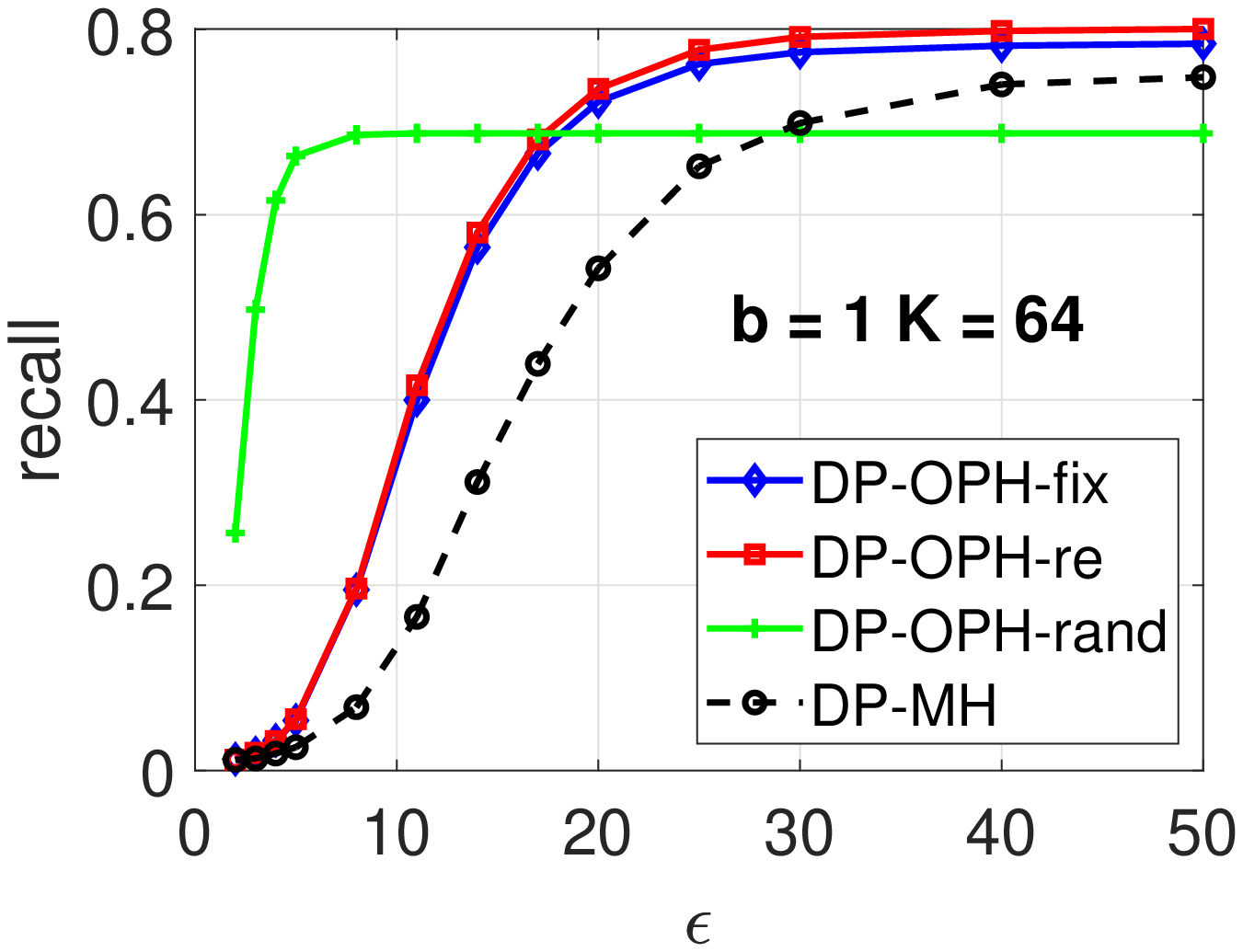} \hspace{-0.15in}
    \includegraphics[width=2.3in]{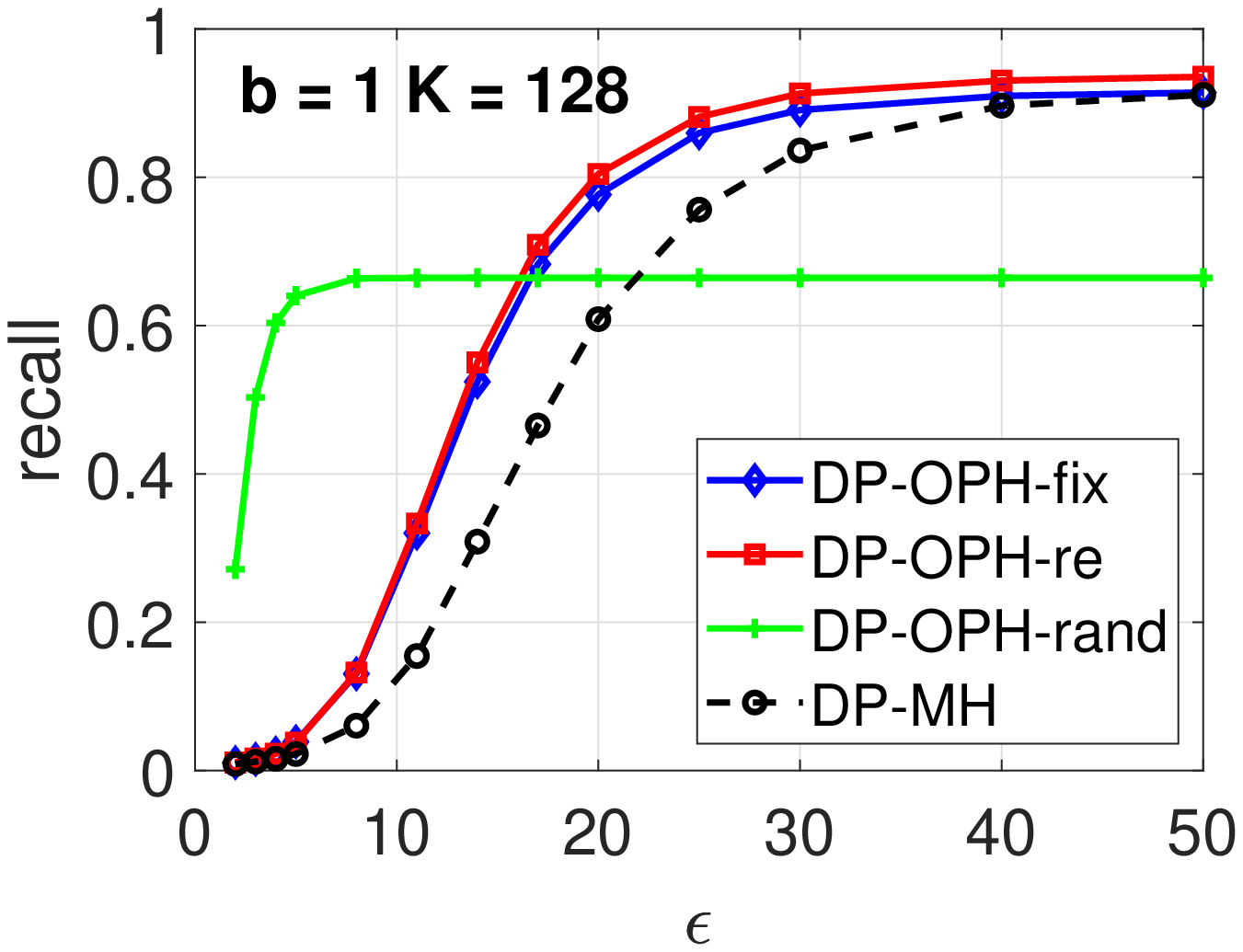} \hspace{-0.15in}
    \includegraphics[width=2.3in]{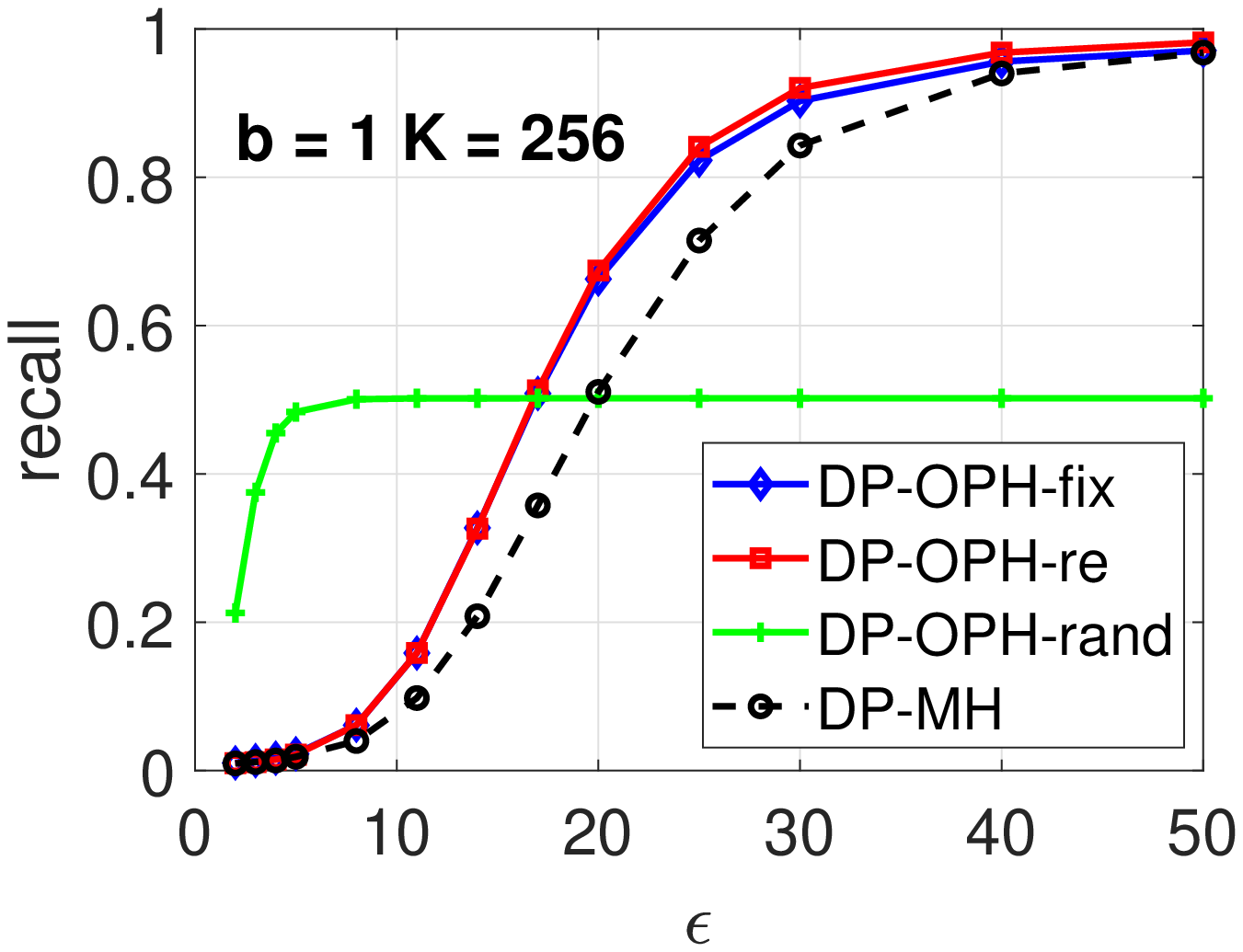}
    }
    
    \mbox{\hspace{-0.15in}
    \includegraphics[width=2.3in]{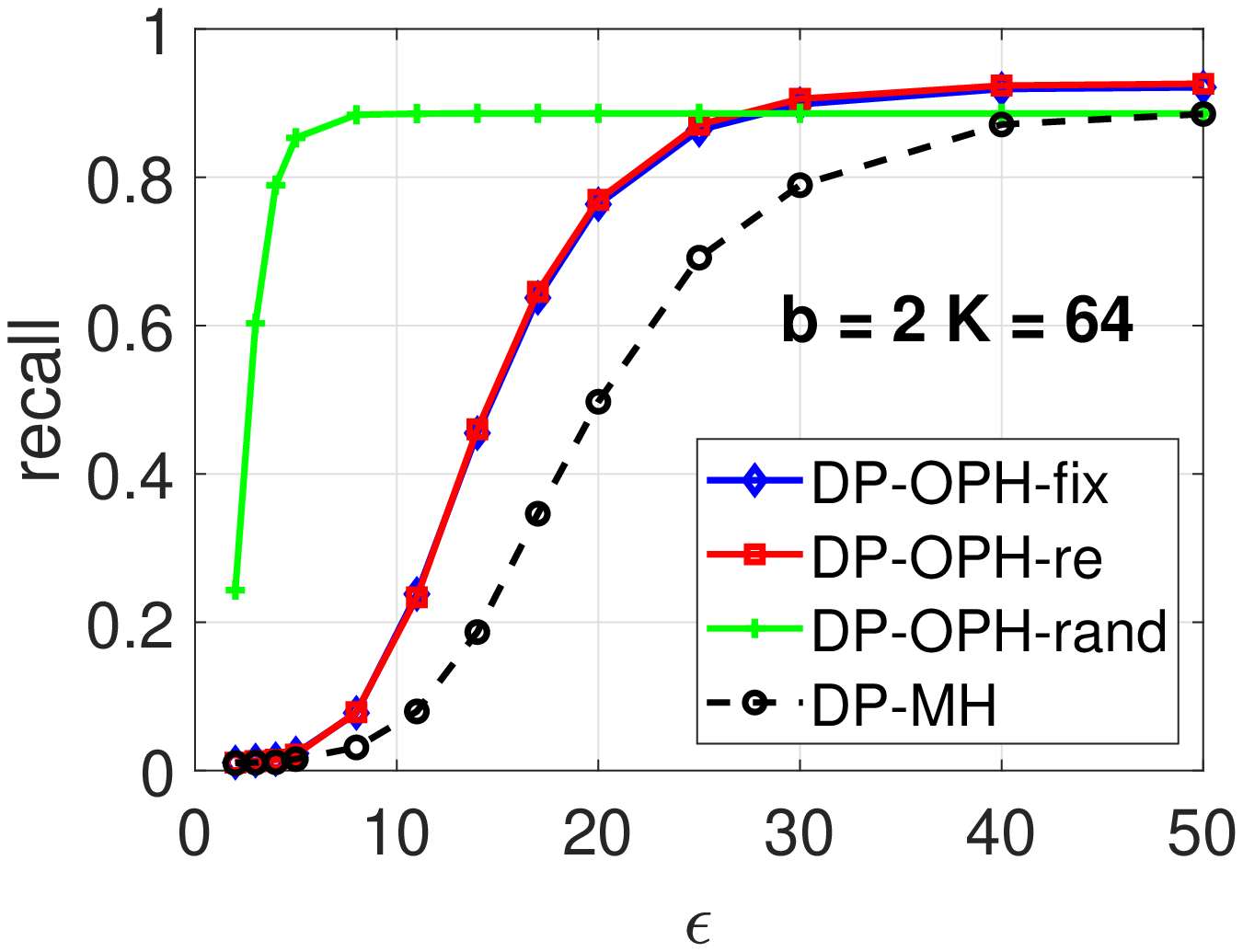} \hspace{-0.15in}
    \includegraphics[width=2.3in]{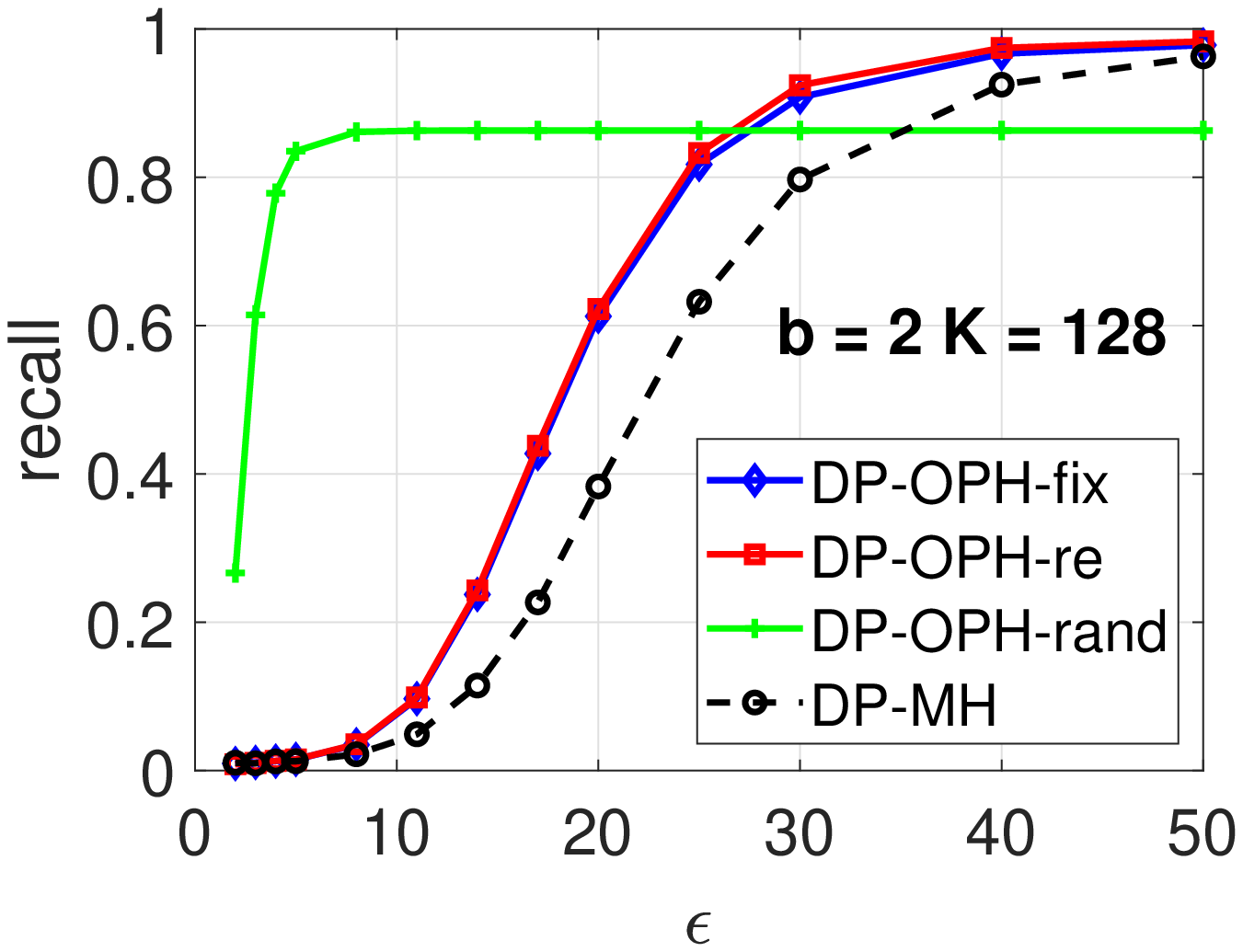} \hspace{-0.15in}
    \includegraphics[width=2.3in]{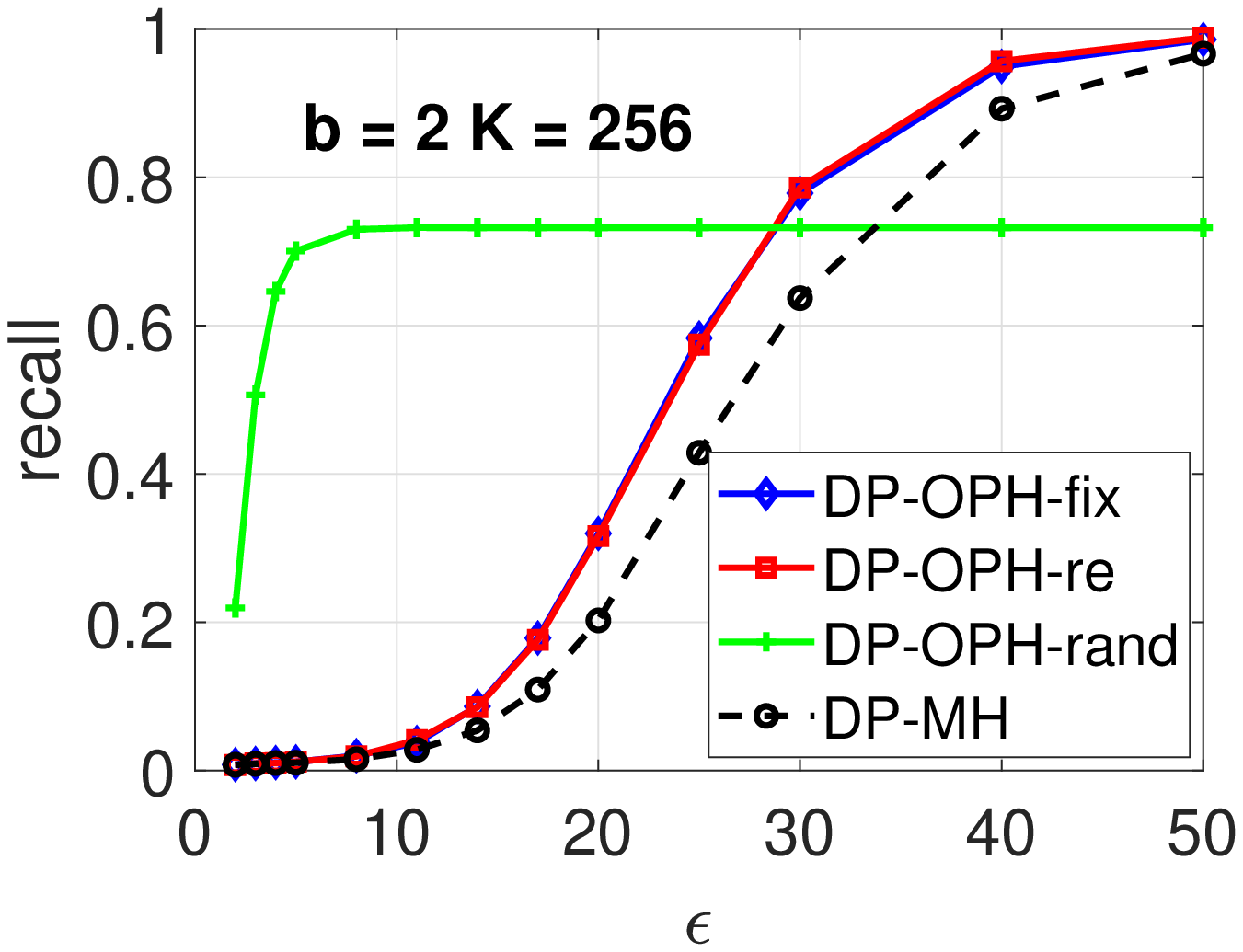}
    }

\vspace{-0.15in}

\caption{Recall@500 results on MNIST for DP-OPH variants and DP-MH, with different $K$. $\delta=10^{-6}$. First row: $b=1$. Second row: $b=2$.}
\label{fig:MNIST-b1b2-recall}
\end{figure}

\newpage\clearpage

\noindent\textbf{Results.} In Figure~\ref{fig:MNIST-b1b2} and Figure~\ref{fig:MNIST-b1b2-recall}, we plot the precision and recall respectively on MNIST for 1-bit and 2-bit DP hashing methods. We see that: 
\begin{itemize}
    \item Consistent with Figure~\ref{fig:N}, DP-OPH-re performs considerably better than DP-MH and DP-OPH-fix, for all $\epsilon$ levels.
    
    \item DP-OPH-rand achieves impressive search accuracy with small $\epsilon$ (e.g., $\epsilon<5$), but stops improving with $\epsilon$ afterward (due to the random bits for the empty bins), which demonstrates the trade-off discussed in Section~\ref{sec:DP-OPH}. When $\epsilon$ gets larger (e.g., $\epsilon>15$), DP-OPH-rand is outperformed by DP-OPH-re.
    
    \item Increasing $b$ is relatively more beneficial for DP-OPH-rand as it can achieve higher search accuracy with small $\epsilon$. Also, larger $\epsilon$ is required for DP-OPH-re to bypass DP-OPH-rand.
\end{itemize}

The results on Webspam with $b=2$ are presented in Figure~\ref{fig:webspam-b2}. Again, DP-OPH-re achieves better performance than DP-MH and DP-OPH-fix for all $\epsilon$, especially when $\epsilon<20$. DP-OPH-rand performs the best with $\epsilon<10$. Yet, it may underperform when $\epsilon$ increases.

\begin{figure}[h]

    \mbox{\hspace{-0.15in}
    \includegraphics[width=2.3in]{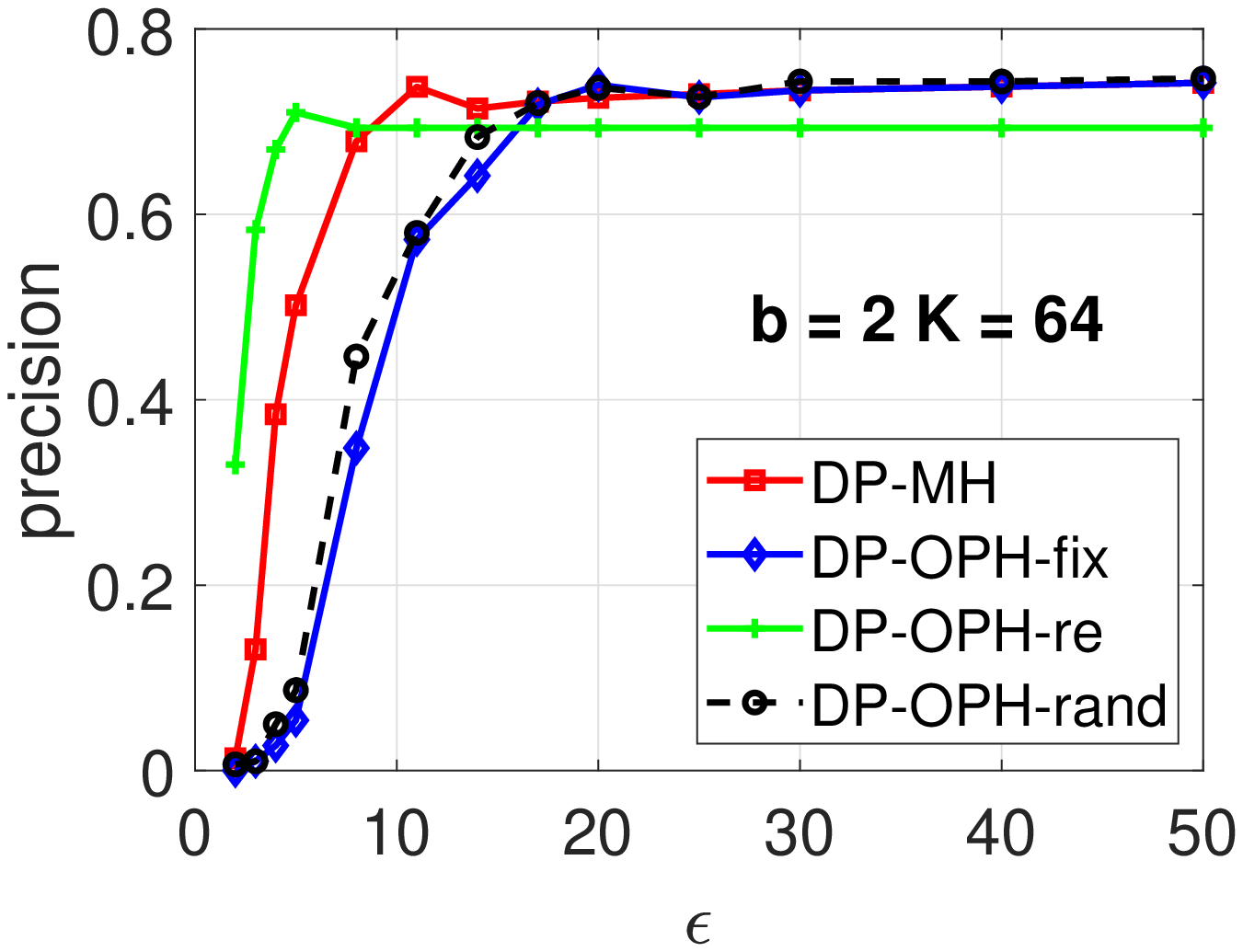} \hspace{-0.15in}
    \includegraphics[width=2.3in]{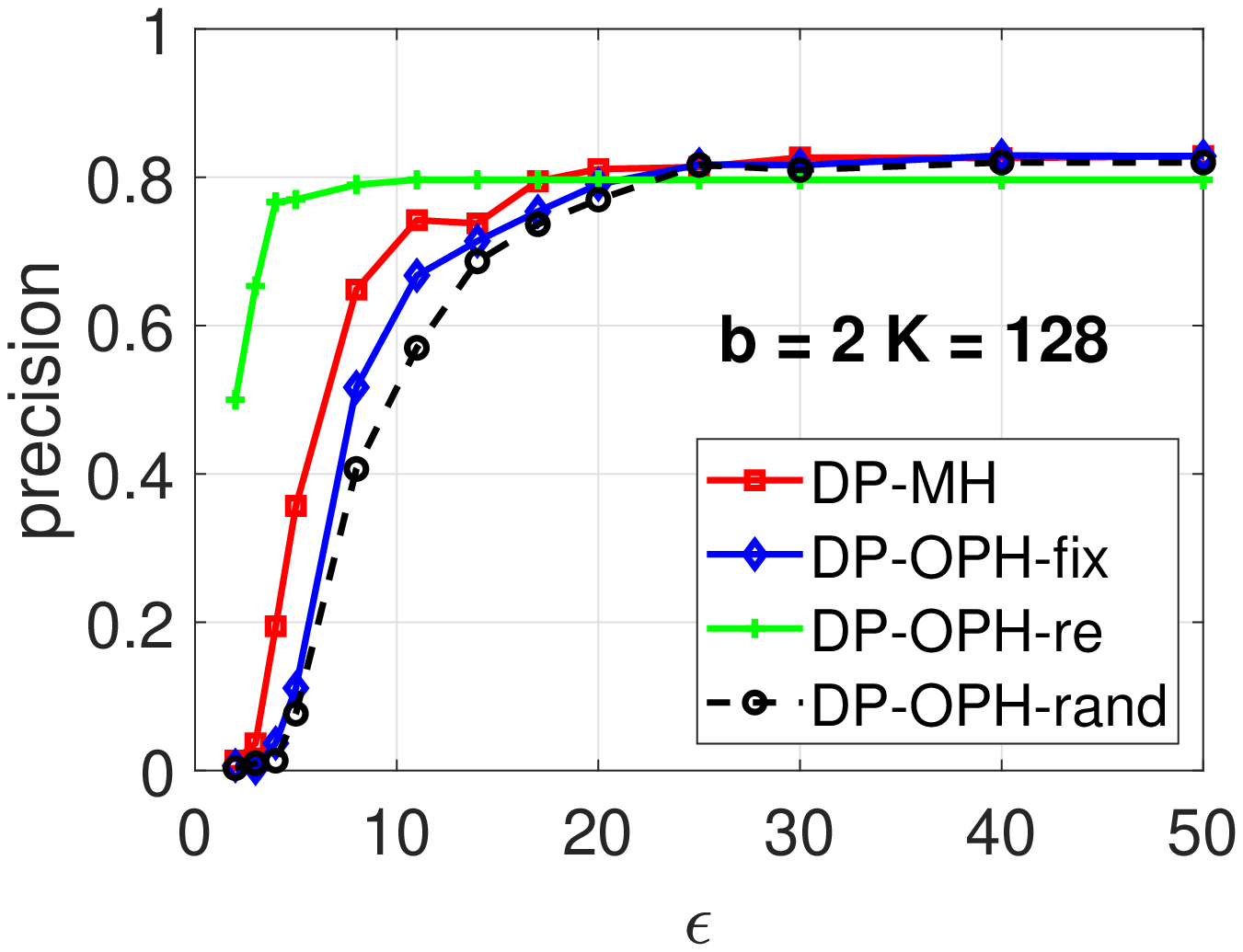} \hspace{-0.15in}
    \includegraphics[width=2.3in]{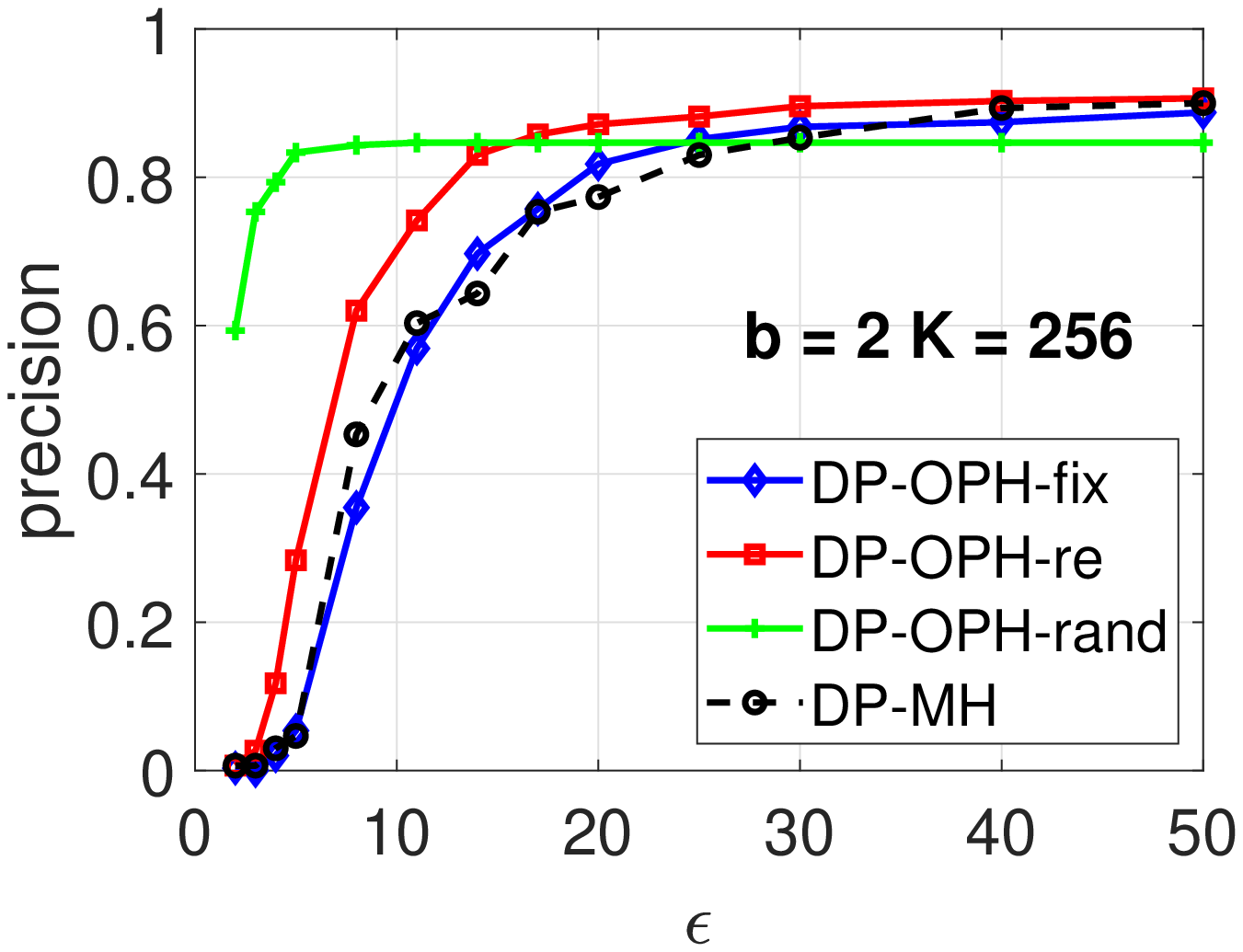}
    }
    \mbox{\hspace{-0.15in}
    \includegraphics[width=2.3in]{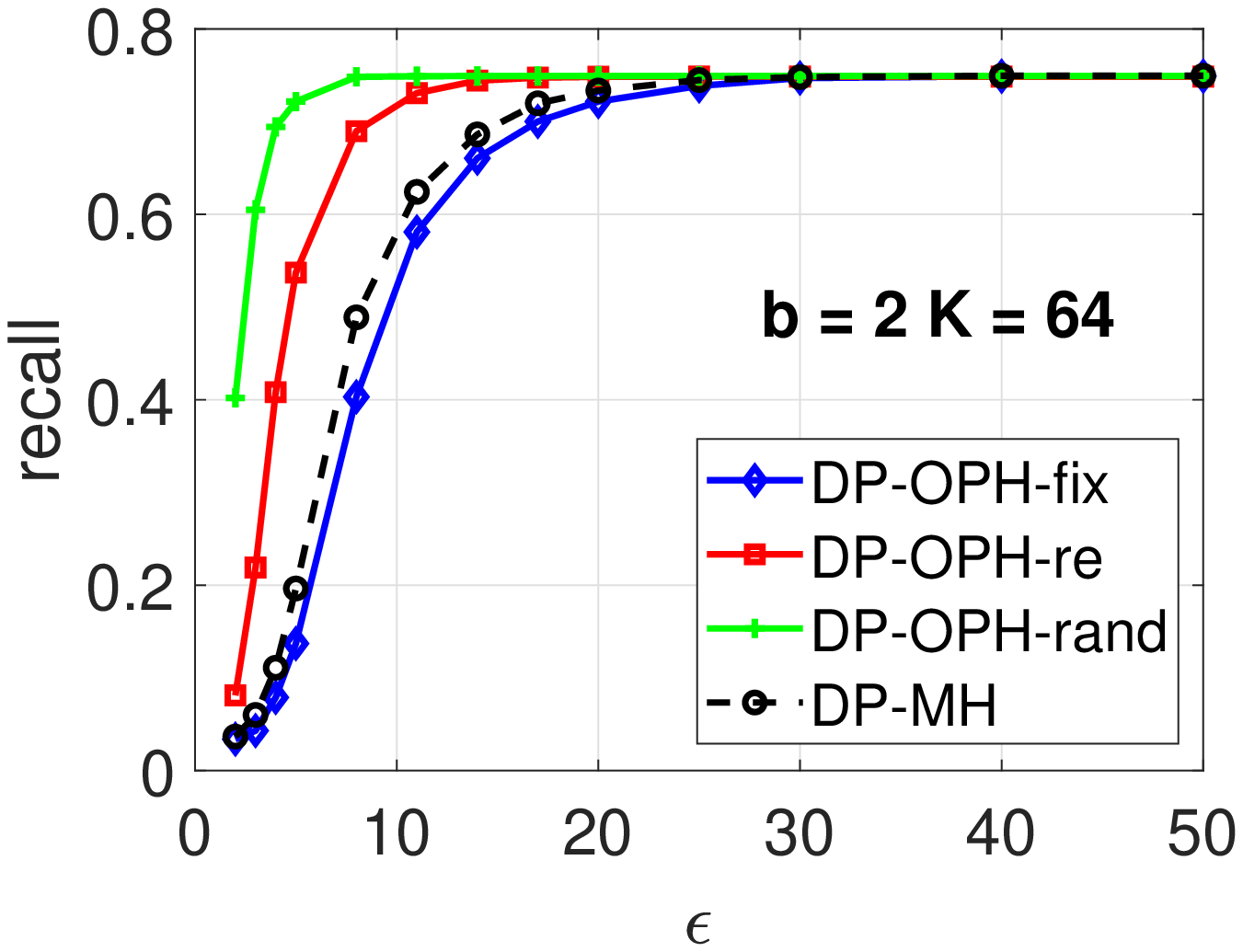} \hspace{-0.15in}
    \includegraphics[width=2.3in]{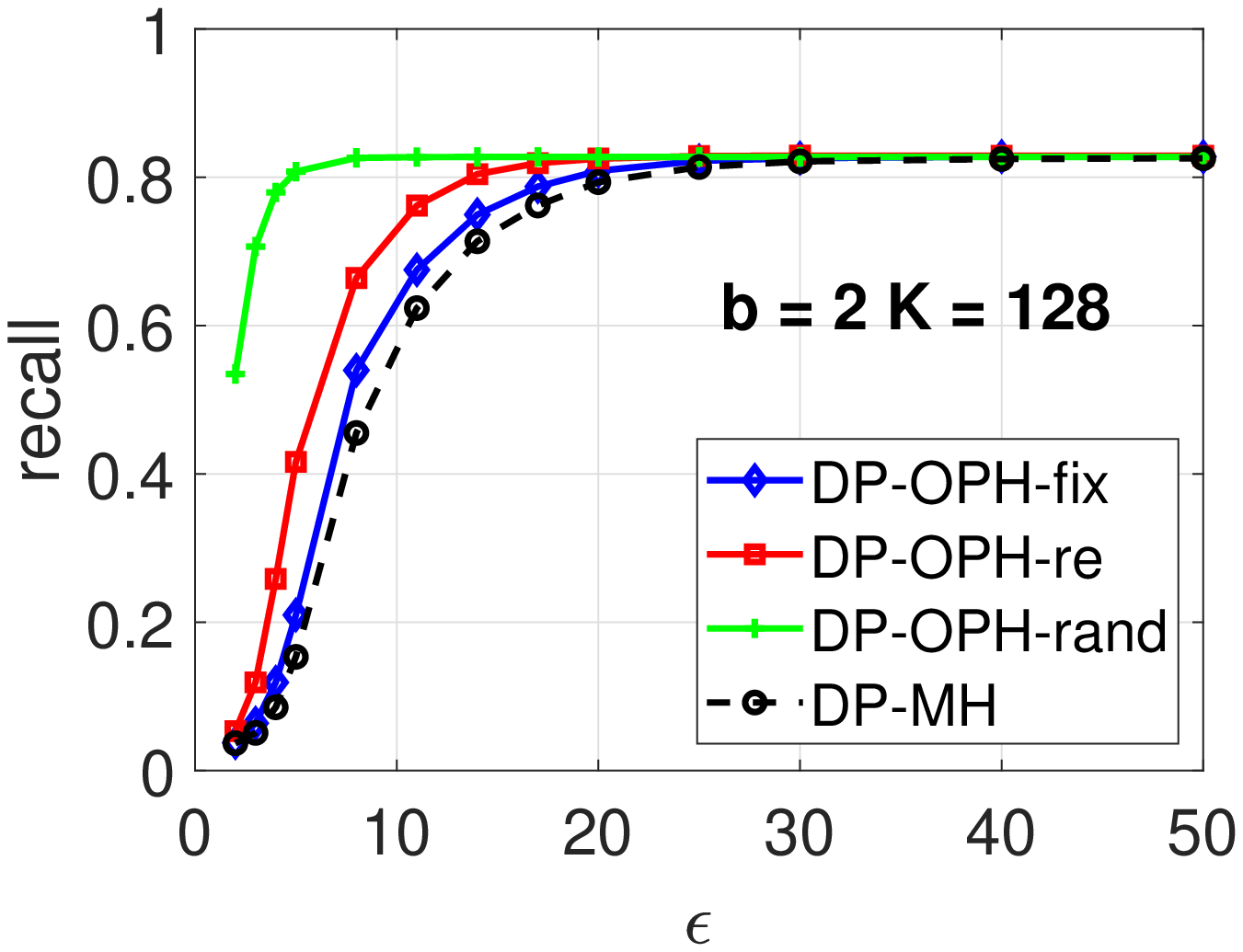} \hspace{-0.15in}
    \includegraphics[width=2.3in]{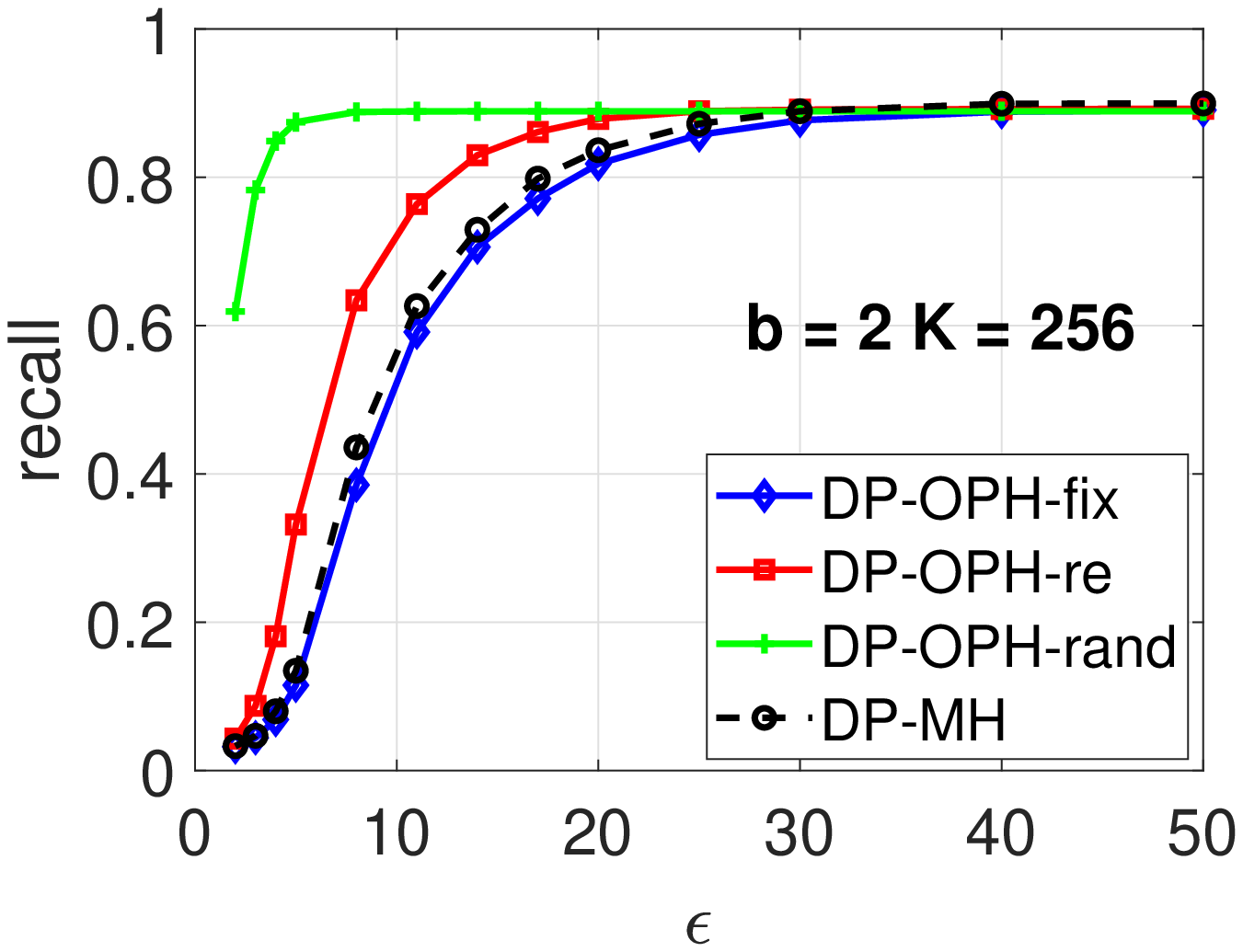}
    }
    
\vspace{-0.15in}
\caption{Precision@10 and recall@500 results on Webspam for 2-bit hash values. $\delta=10^{-6}$.}
\label{fig:webspam-b2}
\end{figure}

In summary, our experiments show that, the proposed DP-OPH-re with re-randomized densification is better than DP-MH and DP-OPH-fix with fixed densification for all $\epsilon$. When $\epsilon$ is small (e.g., $\epsilon<10$), the DP-OPH-rand variant might achieve the best practical utility. When $\epsilon$ is allowed to be larger, DP-OPH-re should be the top choice for hashing the Jaccard similarity.

\section{Extension: Differentially Private Bin-wise Consistent Weighted Sampling (DP-BCWS) for Weighted Jaccard Similarity}

\begin{algorithm}[H]
		\textbf{Input:} Non-negative data vector $\bm u\in \mathbb R_+^D$
		
		\textbf{Output:} Consistent weighted sampling hash $h^*=(i^*, t^*)$
		
		\begin{algorithmic}[1]	
		
		\For {every non-zero $v_i$}  
		
		\State $r_i\sim Gamma(2, 1)$, \ $c_i\sim Gamma(2, 1)$,  $\beta_i\sim Uniform(0, 1)$
		
		\State $t_i\leftarrow \lfloor \frac{\log u_i }{r_i}+\beta_i\rfloor$, \ \  $y_i\leftarrow \exp(r_i(t_i-\beta_i))$

        \State $a_i\leftarrow c_i/(y_i\exp(r_i))$
		
		\EndFor
		
		\State $i^* \leftarrow arg\min_i \ a_i$,\hspace{0.3in}  $t^* \leftarrow t_{i^*}$
		
		\end{algorithmic}
    \caption{Consistent Weighted Sampling (CWS)}
    \label{alg:CWS}
\end{algorithm}

\vspace{0.1in}

In our main paper, we focused on DP hashing algorithms for the binary Jaccard similarity. Indeed, our algorithm can also be extended to hashing the weighted Jaccard similarity: (recall the definition)
\begin{align}\label{eqn:weighted-Jaccard}
J_w({\bm u,\bm v})=\frac{\sum_{i=1}^D\min\{u_i,v_i \}}{\sum_{i=1}^D\max\{u_i,v_i \}},
\end{align}
for two non-negative data vectors $\bm u,\bm v\in\mathbb R_+$. The standard hashing algorithm for (\ref{eqn:weighted-Jaccard}) is called Consistent Weighted Sampling (CWS) as summarized in Algorithm~\ref{alg:CWS}~\citep{ioffe2010improved,manasse2010consistent,li2021consistent,li2022p}. To generate one hash value, we need three length-$D$ random vectors $\bm r\sim Gamma(2,1)$, $\bm c\sim Gamma(2,1)$ and $\bm\beta\sim Uniform(0,1)$. We denote Algorithm~\ref{alg:CWS} as a function $CWS(\bm u;\bm r,\bm c,\bm\beta)$. \cite{li2019re} proposed bin-wise CWS (BCWS) which exploits the same idea of binning as in OPH. The binning and densification procedure of BCWS is exactly the same as OPH (Algorithm~\ref{alg:OPH} and Algorithm~\ref{alg:densification}), except that every time we apply CWS, instead of MinHash, to the data in the bins to generate hash values. Note that in CWS, the output contains two values: $i^*$ is a location index similar to the output of OPH, and $t^*$ is a real-value scalar. Prior studies (e.g., \citet{li2021consistent}) showed that the second element has minimal impact on the estimation accuracy in most practical cases (i.e., only counting the collision of the first element suffices). Therefore, in our study, we also only keep the first integer element as the hash output for subsequent learning tasks.

\vspace{0.1in}

For weighted data vectors, we follow the prior DP literature on weighted sets (e.g.,~\citet{xu2013differentially,smith2020flajolet,dickens2022order,zhao2022differentially,li2023differential}) and define the neighboring data vectors as those who differ in one element. To privatize BCWS, there are also three possible ways depending on the densification option. Since the DP algorithm design for densified BCWS requires rigorous and non-trivial computations which might be an independent study, here we  empirically test the ($b$-bit) DP-BCWS method with random bits for empty bins. The details are provided in Algorithm~\ref{alg:DP-BCWS}. In general, we first randomly split the data entries into $K$ equal length bins, and apply CWS to the data $\bm u_{\mathcal B_k}$ in each non-empty bin $\mathcal B_k$ using the random numbers ($\bm r_{\mathcal B_k},\bm c_{\mathcal B_k},\bm\beta_{\mathcal B_k}$) to generated $K$ hash values (possibly including empty bins). After each hash is transformed into the $b$-bit representation, we uniformly randomly assign a hash value in $\{0,...,2^b-1\}$ to every empty bin.

\newpage

\begin{algorithm}[t]
	\textbf{Input:} Binary vector $\bm u\in\{0,1\}^D$; number of hash values $K$; number of bits per hash $b$

	\textbf{Output:} DP-BCWS hash values $\tilde h_1(\bm u),...,\tilde h_K(\bm u)$
	
\begin{algorithmic}[1]	
    \State Generate length-$D$ random vectors $\bm r\sim Gamma(2,1)$, $\bm c\sim Gamma(2,1)$, $\bm\beta\sim Uniform(0,1)$
    
	\State Let $d=D/K$. Use a permutation $\pi:[D]\mapsto [D]$ with fixed seed to randomly split $[D]$ into $K$ equal-size bins $\mathcal B_1,...,\mathcal B_K$, with $\mathcal B_k=\{j\in [D]:(k-1)d+1\leq \pi(j)\leq kd\}$
	
	\For{ $k=1$ to $K$}
	
	\If{Bin $\mathcal B_k$ is non-empty}
	
	\State $h_k(\bm u)\leftarrow CWS(\bm u_{\mathcal B_k};\bm r_{\mathcal B_k},\bm c_{\mathcal B_k},\bm\beta_{\mathcal B_k})$  \Comment{Run CWS within each non-empty bin}

    \State $h_k(\bm u)\leftarrow \text{last}\ b\ \text{bits of}\ h_k(\bm u)$
    \State $\tilde h_k(\bm u)=
    \begin{cases}
    h_k(\bm u), & \text{with probability}\ \frac{e^{\epsilon}}{e^{\epsilon}+2^b-1}\\
    i, & \text{with probability}\ \frac{1}{e^{\epsilon'}+2^b-1},\ \text{for}\ i\in \{0,...,2^b-1\},\ i\neq h_k(\bm u)
    \end{cases}$
	
	\Else
	
	\State $h_k(\bm u)\leftarrow E$

      \State $\tilde h_k(\bm u)=i$ with probability $\frac{1}{2^b}$, for $i=0,...,2^b-1$  \Comment{Assign random bits to empty bin}
	\EndIf

	\EndFor
\end{algorithmic}
\caption{Differential Private Bin-wise Consistent Weighted Sampling (DP-BCWS)}
\label{alg:DP-BCWS}
\end{algorithm}

Using the same proof arguments as Theorem~\ref{theo:DP-OPH-rand}, we have the following guarantee.

\begin{theorem} \label{theo:DP-BCWS-rand}
Algorithm~\ref{alg:DP-BCWS} satisfies $\epsilon$-DP.
\end{theorem}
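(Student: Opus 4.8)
The plan is to mirror the argument of Theorem~\ref{theo:DP-OPH-rand} essentially verbatim, since the only structural difference between Algorithm~\ref{alg:DP-BCWS} and Algorithm~\ref{alg:DP-OPH-randbit} is that the per-bin hash is produced by $CWS$ rather than by bin-wise MinHash, and CWS is applied to the \emph{same} binning produced by a single fixed permutation $\pi$. First I would fix two neighboring weighted vectors $\bm u,\bm u'$ differing only in coordinate $i$, and locate the unique bin $\mathcal B_{k}$ containing $i$. The crucial observation, identical in spirit to the binary case, is that the hash outputs $\tilde h_{k'}(\bm u)$ for all $k'\neq k$ depend only on $\bm u$ restricted to $\mathcal B_{k'}$ (together with the fixed external randomness $\pi,\bm r,\bm c,\bm\beta$ and the independent response noise), and since $\bm u$ and $\bm u'$ agree on every coordinate outside $i$, these coordinates are unaffected. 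Hence in the product $P(\tilde h(\bm u)=Z)/P(\tilde h(\bm u')=Z)=\prod_{k'=1}^{K} P(\tilde h_{k'}(\bm u)=z_{k'})/P(\tilde h_{k'}(\bm u')=z_{k'})$ every factor with $k'\neq k$ equals $1$, so only the single factor at $k$ survives — no densification means no cross-bin propagation, so exactly one hash value is at risk.

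Next I would bound that single surviving ratio by a case analysis on the emptiness status of bin $\mathcal B_{k}$ under $\bm u$ and under $\bm u'$. If $\mathcal B_{k}$ is non-empty for both (the generic case, since changing one weight changes a support only when that weight hits zero), then $\tilde h_{k}$ is obtained by randomized response on a $b$-bit value, and the standard computation already carried out for DP-OPH gives $e^{-\epsilon}\le P(\tilde h_{k}(\bm u)=z)/P(\tilde h_{k}(\bm u')=z)\le e^{\epsilon}$ for every $z\in\{0,\dots,2^b-1\}$, regardless of what the underlying true hash values are. If the coordinate change causes $\mathcal B_{k}$ to switch between non-empty and empty (only possible when the differing coordinate is the unique nonzero in its bin under one of the two vectors), then one side outputs the randomized-response distribution $\bigl(\tfrac{e^\epsilon}{e^\epsilon+2^b-1}\text{ on the true value},\ \tfrac{1}{e^\epsilon+2^b-1}\text{ elsewhere}\bigr)$ and the other outputs the uniform distribution $\tfrac{1}{2^b}$; the two pointwise ratios to check are $1\le \tfrac{e^\epsilon/(e^\epsilon+2^b-1)}{1/2^b}=\tfrac{2^b e^\epsilon}{e^\epsilon+2^b-1}\le e^\epsilon$ and $e^{-\epsilon}\le \tfrac{1/(e^\epsilon+2^b-1)}{1/2^b}=\tfrac{2^b}{e^\epsilon+2^b-1}\le 1$, both of which are elementary inequalities in $\epsilon\ge 0$ and $b\ge 1$, exactly as in the proof of Theorem~\ref{theo:DP-OPH-rand}. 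Finally, if $\mathcal B_{k}$ is empty for both, the factor at $k$ is $1$. In all cases $\bigl|\log\tfrac{P(\tilde h(\bm u)=Z)}{P(\tilde h(\bm u')=Z)}\bigr|\le\epsilon$ for every $Z$, which by Definition~\ref{def:DP} gives $\epsilon$-DP (with $\delta=0$).

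I do not anticipate a genuine obstacle; the proof is a direct transcription. The one point worth stating carefully — and the only place a careless reader might object — is why the weighted setting does not introduce a \emph{sensitivity}-type dependence on the magnitude of the changed weight: because the output is purely the integer CWS index (the scalar $t^*$ is discarded) and then passed through randomized response whose privacy guarantee is worst-case over \emph{all} pairs of possible index values, the actual values of the underlying CWS hashes never enter the bound, so no Lipschitz/sensitivity control is needed. It is also worth noting explicitly that the external randomness $(\pi,\bm r,\bm c,\bm\beta)$ is public and shared, so the analysis is conditional on it and the bound holds for every realization; and that at most one bin is ever affected, which is precisely what yields the clean $\epsilon'\equiv\epsilon$ (effectively $N=1$) and hence $\delta=0$, in contrast with the densified variants. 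I would therefore write the proof as: invoke the structure of Theorem~\ref{theo:DP-OPH-rand}, identify the unique affected bin, dispatch the three emptiness cases with the displayed elementary inequalities, and conclude.
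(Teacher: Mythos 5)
Your proposal is correct and follows essentially the same route as the paper, which proves this theorem simply by invoking the argument of Theorem~\ref{theo:DP-OPH-rand}: a single affected bin, a case analysis on its emptiness status, and the elementary randomized-response ratio bounds. Your additional remark on why no sensitivity control is needed in the weighted setting (only the integer CWS index is retained) is a correct and worthwhile clarification, but not a departure from the paper's approach.
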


\vspace{0.1in}
\noindent\textbf{Empirical evaluation.} In Figure~\ref{fig:BCWS_dailysports}, we train an $l_2$-regularized logistic regression on the DailySports dataset\footnote{\url{https://archive.ics.uci.edu/ml/datasets/daily+and+sports+activities}}. and report the test accuracy with various $b$ and $K$ values. The $l_2$ regularization parameter $\lambda$ is tuned over a fine grid from $10^{-4}$ to $10$. Similar to the results in the previous section, the performance of DP-BCWS becomes stable as long as $\epsilon>5$. Note that, linear logistic regression only gives $\approx 75\%$ accuracy on original DailySports dataset (without DP). With DP-BCWS, the accuracy can reach $\approx 98\%$ with $K=1024$ and $\epsilon=5$. 

\vspace{0.05in}

\begin{figure}[h]

    \mbox{\hspace{-0.15in}
    \includegraphics[width=2.3in]{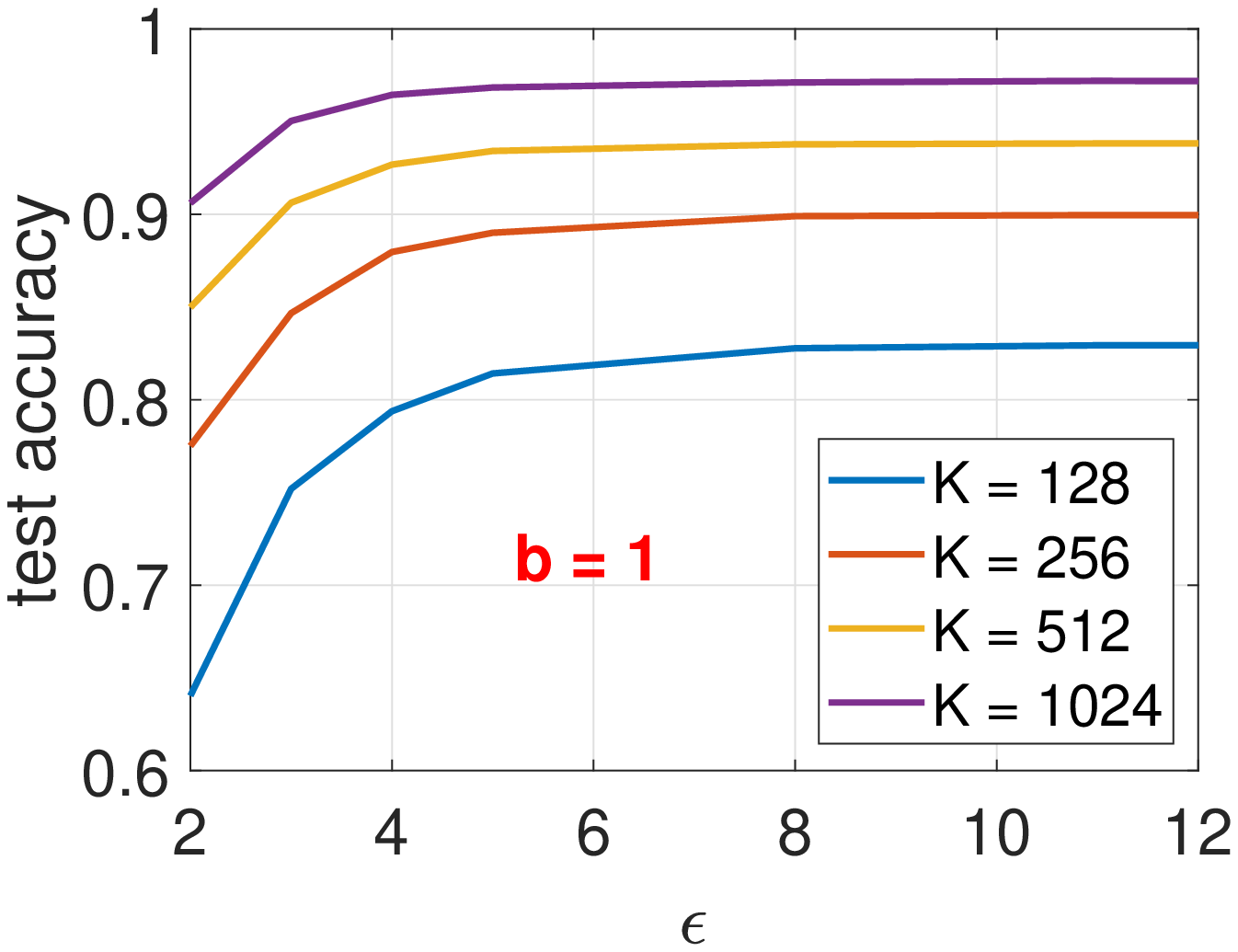} \hspace{-0.15in}
    \includegraphics[width=2.3in]{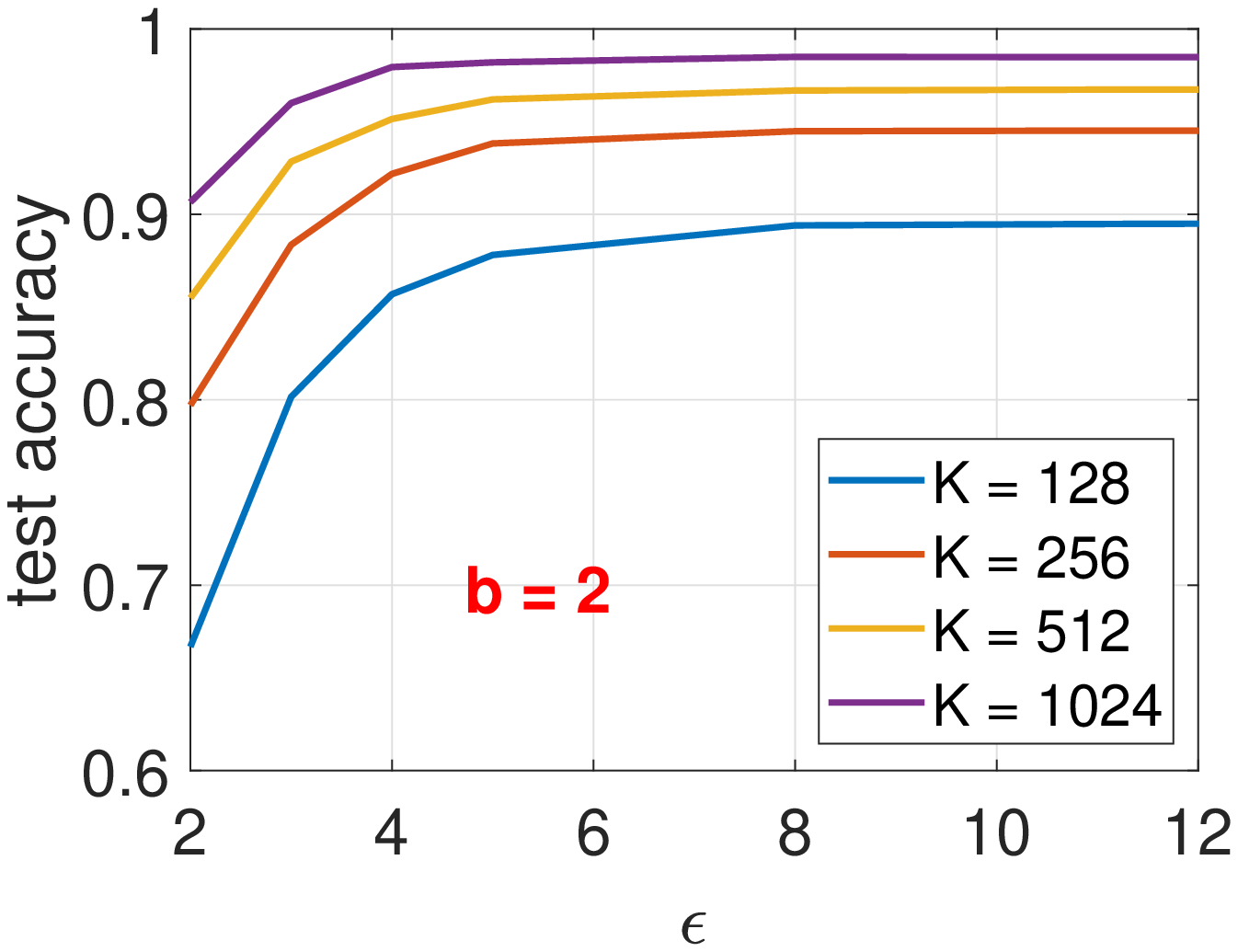} \hspace{-0.15in}
    \includegraphics[width=2.3in]{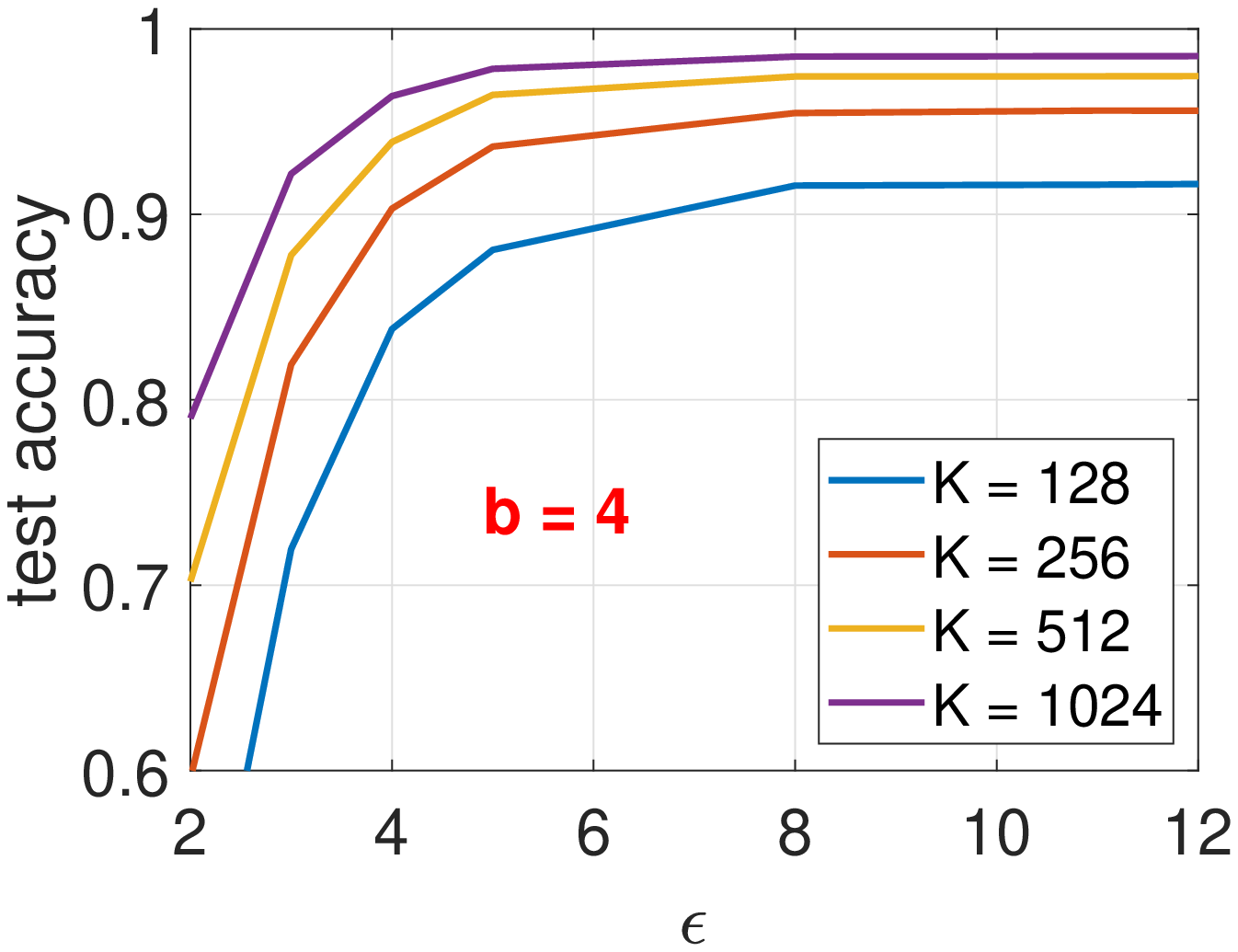}
    }


\caption{Test classification accuracy of DP-BCWS on DailySports dataset~\citep{Asuncion+Newman:2007} with $l_2$-regularized logistic regression.}
\label{fig:BCWS_dailysports}
\end{figure}

In Figure~\ref{fig:BCWS_MNIST_MLP}, we train a neural network with two hidden layers of size 256 and 128 respectively on the MNIST dataset~\citep{lecun1998mnist}. We use the ReLU activation function and the standard cross-entropy loss. We see that, in a reasonable privacy regime (e.g., $\epsilon<10$), DP-BCWS is able to achieve $\approx 95\%$ test accuracy with proper $K$ and $b$ combinations (one can choose the values depending on practical scenarios and needs). For example, with $b=4$ and $K=128$, DP-BCWS achieves $\approx 97\%$ accuracy at $\epsilon=8$.

\begin{figure}[t]

    \mbox{\hspace{-0.15in}
    \includegraphics[width=2.3in]{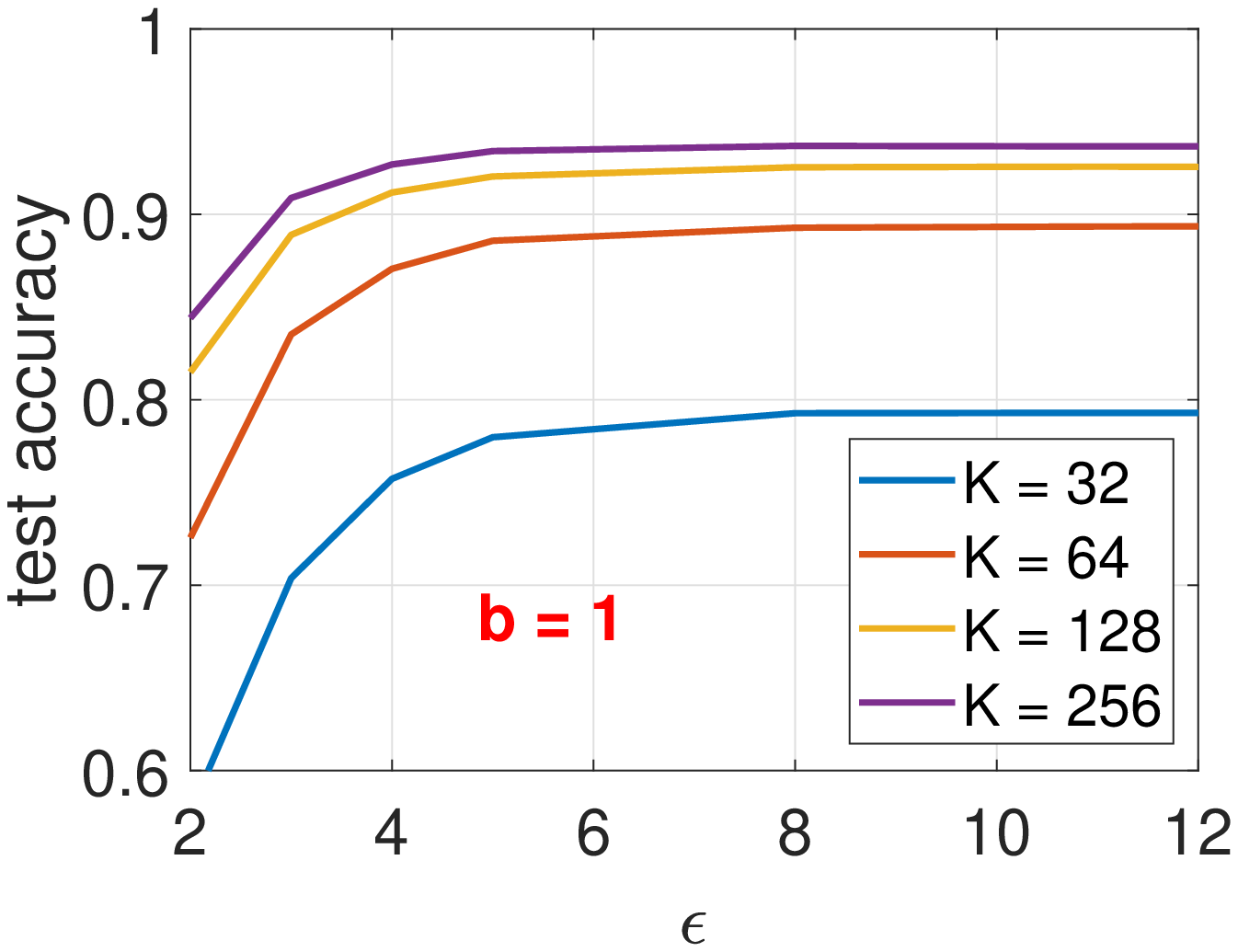} \hspace{-0.15in}
    \includegraphics[width=2.3in]{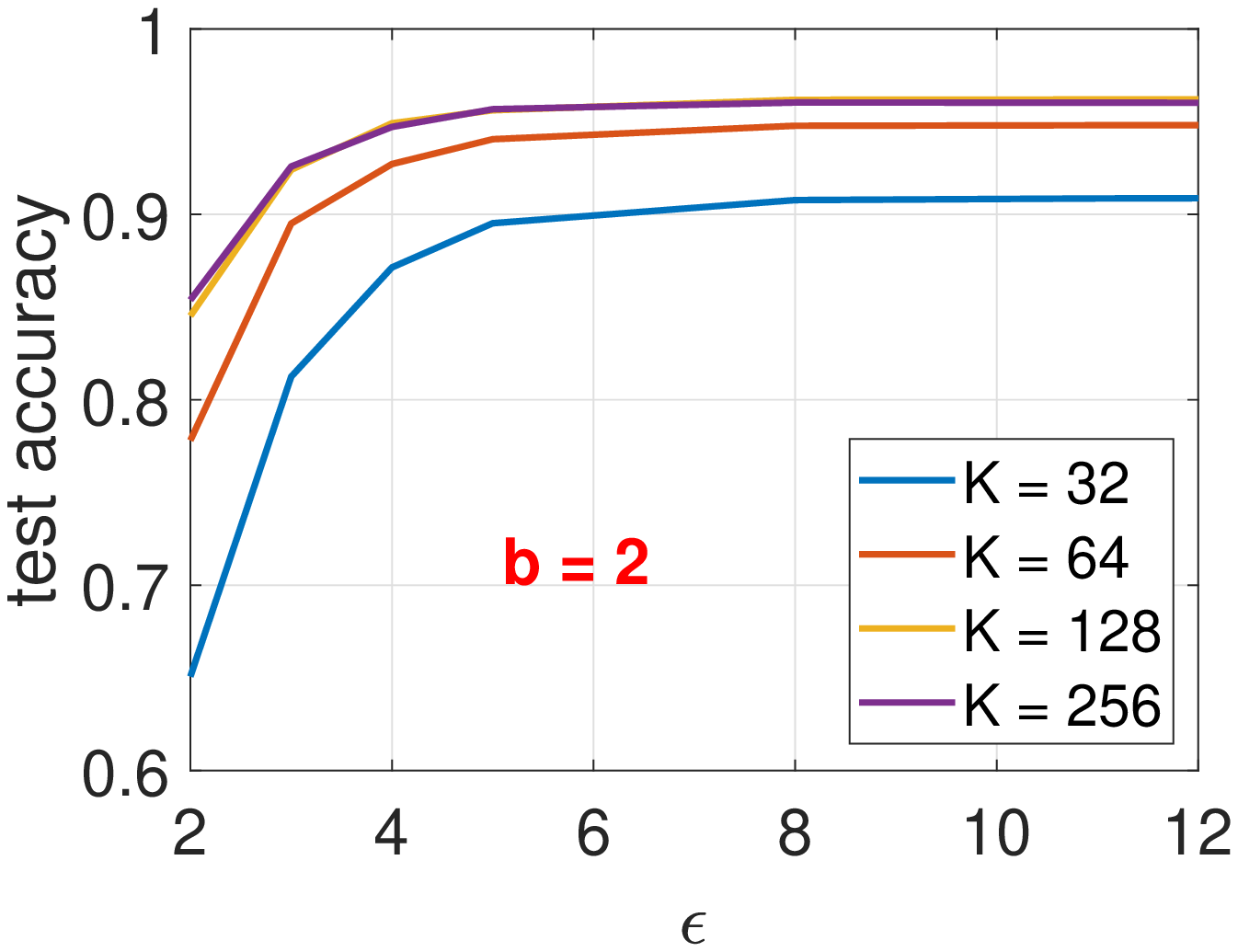} \hspace{-0.15in}
    \includegraphics[width=2.3in]{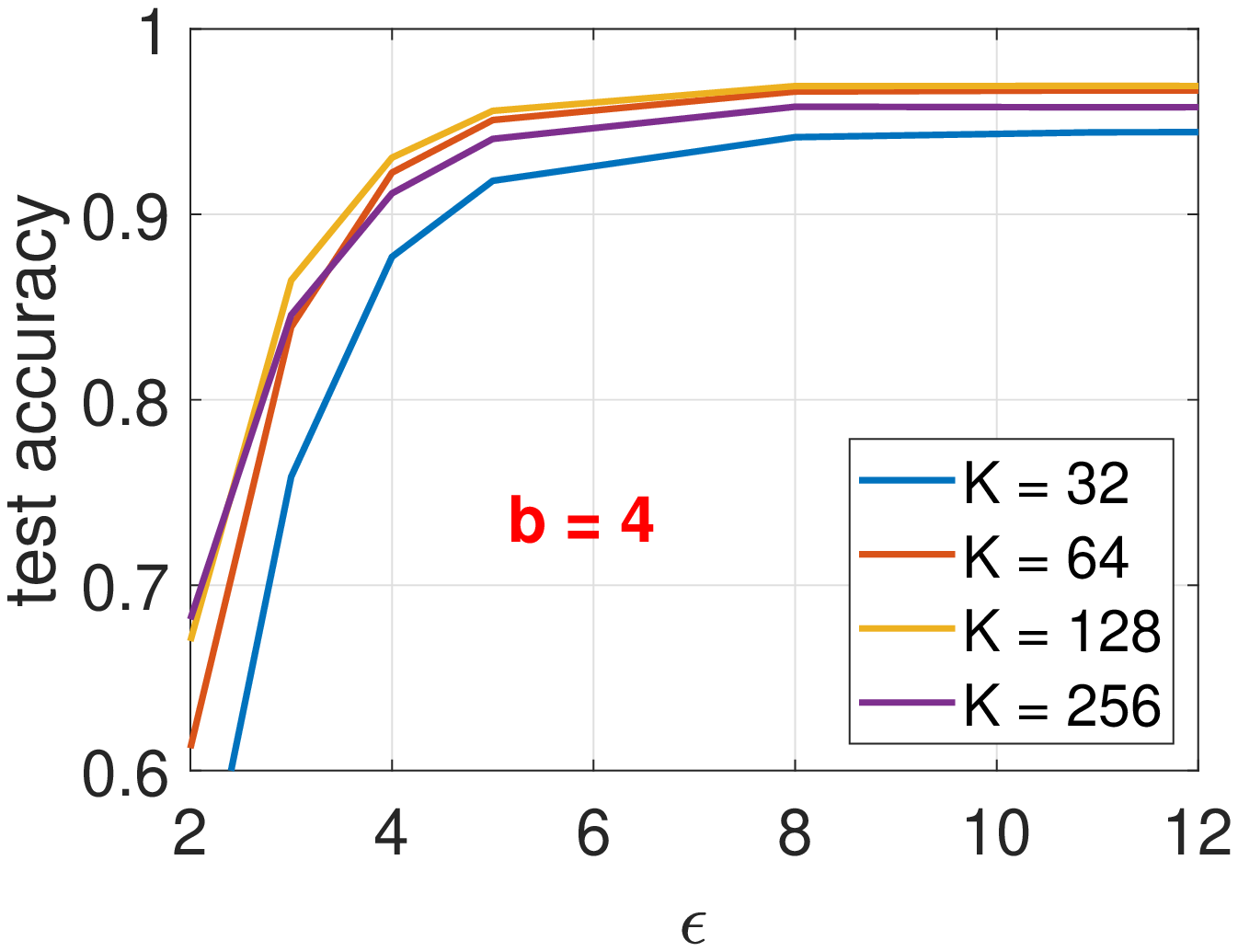}
    }

\vspace{-0.15in}

\caption{Test classification accuracy of DP-BCWS on MNIST dataset~\citep{lecun1998mnist} with 2-hidden layer neural network.}
\label{fig:BCWS_MNIST_MLP}\vspace{-0.05in}
\end{figure}


In a very recent survey paper~\citep[page 33]{ponomareva2023dp}, the authors provided their three-tier taxonomy of the strength of privacy protection,  according to the $\epsilon$ values: $\epsilon\leq 1$ for ``Tier 1: Strong formal privacy guarantees'', $\epsilon\leq 10$ for ``Tier 2: Realistic privacy guarantees'', and $\epsilon>10$ for ``Tier 3: Weak to no formal privacy guarantees''. Hence, our proposed algorithms are able to achieve good utility while providing strong realistic privacy protections.

\section{Conclusion}

Industrial applications often encounter massive high-dimensional binary (0/1) data. The method of ``one permutation hashing'' (OPH), which improves the standard minwise hashing (MinHash), is a common hashing technique for efficiently computing the binary Jaccard similarity. For example, it is a standard practice to use MinHash or OPH for building hash tables to enable sublinear-time approximate near neighbor (ANN) search. In this paper, we propose differentially private  one permutation hashing (DP-OPH). We develop three variants of DP-OPH depending on the densification procedure of OPH, and provide detailed derivation and privacy analysis of our algorithms. We analytically demonstrate the advantage of the proposed DP-OPH over DP-MH (DP for MinHash). Experiments  on retrieval and classification tasks  justify the effectiveness of the proposed DP-OPH, and provide guidance on the appropriate choice of the DP-OPH variant in different scenarios. 

\vspace{0.2in}

\noindent Our idea is extended to non-binary data and the weighted Jaccard similarity. We propose the DP-BCWS algorithm for non-binary data and empirically demonstrate its superb performance in utility for classification tasks with logistic regression models as well as neural nets. For example, we observe DP-BCWS achieves a good utility at $\epsilon<4\sim 8$. Given the high efficiency of the OPH method and its strong performance in privacy protection, we expect DP-OPH as well as DP-BCWS will be a useful large-scale hashing tool with privacy protection.

\vspace{0.2in}

\noindent In a recent work, \citet{li2023differential} utilized (smoothed) local sensitivity to develop  DP and iDP (individual DP) algorithms for the family of random projections (RP) and sign random projections (SignRP), with a series of algorithms including DP-OPORP, DP-SignOPORP, iDP-SignRP. Here ``OPORP''~\citep{li2023oporp} stands for ``one permutation + one random projection'' which is an improved variant of the count-sketch.  For our future work, we will seek novel new constructions to obtain the  analogous notion of local sensitivity to improve DP-OPH.

\bibliography{refs_scholar}
\bibliographystyle{plainnat}

\end{document}